\documentclass{article}




    \usepackage[final]{neurips_2022}


\usepackage{header}

\usepackage{wrapfig}

\newif\ifappendix
\appendixfalse


\title{Reinforcement Learning with a Terminator}

\author{%
   Guy Tennenholtz \thanks{Technion, Israel institute of technology}~~\thanks{Nvidia Research, Israel} \\
   \texttt{guytenn@gmail.com} 
   \And
   Nadav Merlis \footnotemark[1]
   \And
   Lior Shani \footnotemark[1]
   \And
   Shie Mannor \footnotemark[1]~~\footnotemark[2]
   \AND
   Uri Shalit \footnotemark[1]
   \And
   Gal Chechik \footnotemark[2]~~\thanks{Bar Ilan University, Israel}
   \And
   Assaf Hallak \footnotemark[2]
   \And
   Gal Dalal \footnotemark[2]
}


\begin{document}

\doparttoc
\faketableofcontents

\maketitle






\begin{abstract}
    We present the problem of reinforcement learning with exogenous termination. We define the Termination Markov Decision Process (TerMDP), an extension of the MDP framework, in which episodes may be interrupted by an external non-Markovian observer. This formulation accounts for numerous real-world situations, such as a human interrupting an autonomous driving agent for reasons of discomfort. We learn the parameters of the TerMDP and leverage the structure of the estimation problem to provide state-wise confidence bounds. We use these to construct a provably-efficient algorithm, which accounts for termination, and bound its regret. Motivated by our theoretical analysis, we design and implement a scalable approach, which combines optimism (w.r.t. termination) and a dynamic discount factor, incorporating the termination probability. We deploy our method on high-dimensional driving and MinAtar benchmarks. Additionally, we test our approach on human data in a driving setting. Our results demonstrate fast convergence and significant improvement over various baseline approaches.
\end{abstract}

\section{Introduction}

The field of reinforcement learning (RL) involves an agent interacting with an environment, maximizing a cumulative reward \citep{puterman2014markov}. As RL becomes more instrumental in real-world applications \citep{lazic2018data,kiran2021deep,mandhane2022muzero}, exogenous inputs beyond the prespecified reward pose a new challenge. Particularly, an external authority (e.g., a human operator) may decide to terminate the agent's operation when it detects undesirable behavior. In this work, we generalize the basic RL framework to accommodate such external feedback.

We propose a generalization of the standard Markov Decision Process (MDP), in which external termination can occur due to a non-Markovian observer. When terminated, the agent stops interacting with the environment and cannot collect additional rewards. This setup describes various real-world scenarios, including: passengers in autonomous vehicles \citep{le2015autonomous,zhu2020safe}, users in recommender systems \citep{wang2009recommender}, employees terminating their contracts (churn management) \citep{sisodia2017evaluation}, and operators in factories; particularly, datacenter cooling systems, or other safety-critical systems, which require constant monitoring and rare, though critical, human takeovers \citep{modares2015optimized}. In these tasks, human preferences, incentives, and constraints play a central role, and designing a reward function to capture them may be highly complex. Instead, we propose to let the agent itself learn these latent human utilities by leveraging the termination events.

We introduce the Termination Markov Decision Process (TerMDP), depicted in \Cref{fig: termination diagram}. We consider a terminator, observing the agent, which aggregates penalties w.r.t. a predetermined, state-action-dependent, yet \emph{unknown}, cost function. As the agent progresses, unfavorable states accumulate costs that gradually increase the terminator's inclination to stop the agent and end the current episode. Receiving merely the sparse termination signals, the agent must learn to behave in the environment, adhering to the terminator's preferences while maximizing reward. 

Our contributions are as follows. \textbf{(1)} We introduce a novel history-dependent termination model, a natural extension of the MDP framework which incorporates non-trivial termination (\Cref{sec: perliminaries}). \textbf{(2)} We learn the unknown costs from the implicit termination feedback (\Cref{sec: theory}), and provide local guarantees w.r.t. every visited state. We leverage our results to construct a tractable algorithm and provide regret guarantees. \textbf{(3)} Building upon our theoretical results, we devise a practical approach that combines optimism with a cost-dependent discount factor, which we test on MinAtar \citep{young19minatar} and a new driving benchmark. \textbf{(4)} We demonstrate the efficiency of our method on these benchmarks as well as on human-collected termination data (\Cref{sec: experiments}). Our results show significant improvement over other candidate solutions, which involve direct termination penalties and history-dependent approaches. We also introduce a new task for RL -- a driving simulation game which can be easily deployed on mobile phones, consoles, and PC \footnote{Code for Backseat Driver and our method, TermPG, can be found at \href{https://github.com/guytenn/Terminator}{https://github.com/guytenn/Terminator}.}.

\begin{figure}[t!]
\centering
\includegraphics[width=0.8\linewidth]{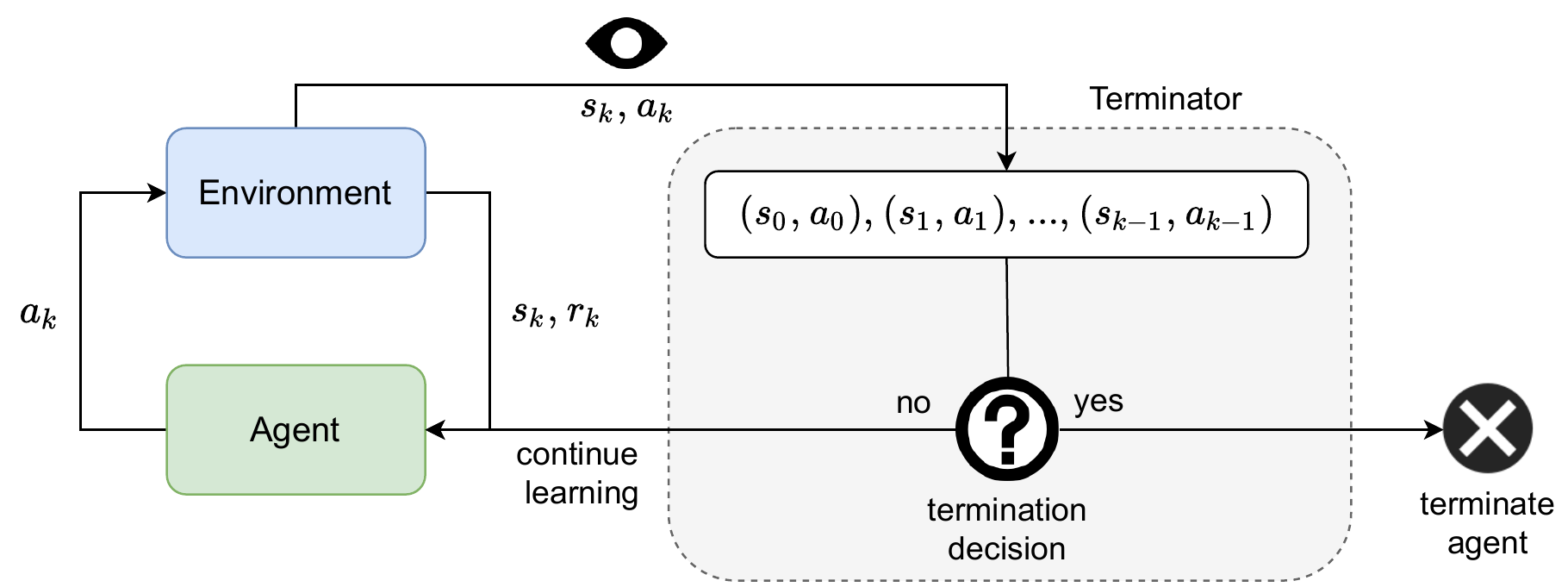}
\caption{\small A block diagram of the TerMDP framework. An agent interacts with an environment while an exogenous observer (i.e., terminator) can choose to terminate the agent based on previous interactions. If the agent is terminated, it transitions to a sink state where a reward of $0$ is given until the end of the episode.}
\label{fig: termination diagram}
\end{figure}

\section{Termination Markov Decision Process}
\label{sec: perliminaries}

We begin by presenting the termination framework and the notation used throughout the paper. Informally, we model the termination problem using a logistic model of past ``bad behaviors". We use an unobserved state-action-dependent cost function to capture these external preferences.
As the overall cost increases throughout time, so does the probability of termination.

For a positive integer $n$, we denote $[n] = \brk[c]*{1, \hdots, n}$. We define the Termination Markov Decision Process (TerMDP) by the tuple $\terMDP=(\sset,\aset, P,R,H,c)$, where $\sset$ and $\aset$ are state and action spaces with cardinality $S$ and $A$, respectively, and $H\in\N$ is the maximal horizon. We consider the following protocol, which proceeds in discrete episodes $k=1, 2, \hdots, K$. At the beginning of each episode~$k$, an agent is initialized at state $s_1^k \in \sset$. At every time step $h$ of episode~$k$, the agent is at state $s_h^k \in \sset$, takes an action $a_h^k \in \aset$ and receives a random reward $R_h^k\in[0,1]$ generated from a fixed distribution with mean $r_h(s_h^k,a_h^k)$. A terminator overseeing the agent utilizes a cost function $c: \brk[s]{H}\times \s \times \A \mapsto \R$ that is unobserved and \emph{unknown to the agent}. At time step $h$, the episode terminates with probability 
\begin{align*}
\rho_h^k(c) = \rho\brk*{\sum_{t=1}^h c_t(s_t^k,a_t^k) - b}, 
\end{align*}
where $\rho(x)=\brk*{1+\exp(-x)}^{-1}$ is the logistic function and $b \in \R$ is a bias term which determines the termination probability when no costs are aggregated. Upon termination, the agent transitions to a terminal state $\terminalstate$ which yields no reward, i.e., $r_h(\terminalstate,a)=0$ for all $h\in\brk[s]*{H}, a\in\aset$. If no termination occurs, the agent transitions to a next state $s_{h+1}^k$ with probability $P_h(s_{h+1}^k | s_h^k,a_h^k)$. Let $t_k^*=\min\brk[c]*{h:s_h^k=\terminalstate}-1$ be the time step when the $k^{\text{th}}$ episode was terminated. Notice that the termination probability is non-Markovian, as it depends on the entire trajectory history. We also note that, when $c \equiv 0$, the TerMDP reduces to a finite horizon MDP with discount factor $\gamma = \rho\brk*{-b}$. Finally, we note that our model allows for negative costs. Indeed, these may capture satisfactory behavior, diminishing the effect of previous mistakes, and decreasing the probability of termination.

We define a stochastic, history dependent policy $\pi_h(s_h, \tau_{1:h})$ which maps trajectories $\tau_{1:h} = (s_1, a_1, \hdots, s_{h-1}, a_{h-1})$ up to time step $h$ (excluding) and the $h^{\text{th}}$ states $s_{h}$ to probability distributions over $\aset$. Its value is defined by 
${
    V_h^{\pi}(s,\tau) \!=\! \E{\sum_{t=h}^H r_t(s_t, a_t) | s_h=s, \tau_{1:h} = \tau, a_t \sim \pi_t(s_t,\tau_{1:t})}.  
}$
With slight abuse of notation, we denote the value at the initial time step by $V_1^\pi(s)$. An optimal policy $\pi^*$ maximizes the value for all states and histories simultaneously \footnote{Such a policy always exists; we can always augment the state space with the history, which would make the environment Markovian and imply the existence of an optimal history-dependent policy \citep{puterman2014markov}.}; we denote its value function by $V^*$. We measure the performance of an agent by its \emph{regret}; namely, the difference between the cumulative value it achieves and the value of an optimal policy,
\begin{align*}
    \Reg{K} = \sum_{k=1}^K V_1^*(s_1^k) - V_1^{\pi^k}(s_1^k). 
\end{align*}

\textbf{Notations. } We denote the Euclidean norm by $\norm{\cdot}_2$ and the Mahalanobis norm induced by the positive definite matrix $A\succ 0$ by $\norm{x}_A=\sqrt{x^TAx}$. We denote by $n_h^k(s,a)$ the number of times that a state action pair $(s,a)$ was visited at the $h^{\text{th}}$ time step before the $k^{\text{th}}$ episode. Similarly, we denote by $\hat{X}_h^k(s,a)$ the empirical average of a random variable $X$ (e.g., reward and transition kernel) at $(s,a)$ in the $h^{\text{th}}$ time step, based on all samples before the $k^{\text{th}}$ episode. 

We assume there exists a known constant $L$ that bounds the norm of the costs; namely, $\sqrt{\sum_{s, a} \sum_{t=1}^H c^2_t(s_t,a_t)} \leq L$, and denote the set of possible costs by $\C$. We also denote the maximal reciprocal derivative of the logistic function by
$
    \kappa = \max_{h\in\brk[s]{H}}\enspace\max_{\brk[c]*{(s_t,a_t)}_{t=1}^h\in(\sset\times\aset)^h} \brk*{\dot{\rho}\brk*{\sum_{t=1}^hc_t(s_t,a_t) - b}}^{-1}.
$
This factor will be evident in our theoretical analysis in the next section, as estimating the costs in regions of saturation of the sigmoid is more difficult when the derivative nears zero. Finally, we use $\mathcal{O}(x)$ to refer to a quantity that depends on $x$ up to a poly-log expression in $S, A, K, H, L, \kappa$ and $\log\brk*{\frac{1}{\delta}}$.

\section{An Optimistic Approach to Overcoming Termination}
\label{sec: theory}

Unlike the standard MDP setup, in the TerMDP model, the agent can potentially be terminated at any time step. Consider the TerMDP model for which the costs are \emph{known}. We can define a Markov policy $\pi_h$ mapping augmented states $\sset \times \R$ to a probability distribution over actions, where here, the state space is augmented by the accumulated costs $\sum_{t=1}^{h-1} c_t(s_t, a_t)$ . There exists a policy, which does not use historical information, besides the accumulated costs, and achieves the value of the optimal history-dependent policy (see \Cref{appendix: known costs}). Therefore, when solving for an optimal policy (e.g., by planning), one can use the current accumulated cost instead of the full trajectory history. 

This suggests a plausible approach for solving the TerMDP -- first learn the cost function, and then solve the state-augmented MDP for which the costs are known. This, in turn, leads to the following question: \textbf{can we learn the costs $c$ from the termination signals?} In what follows, we answer this question affirmatively. We show that by using the termination structure, one can efficiently converge to the true cost function \emph{locally} -- for every state and action. We provide uncertainty estimates for the state-wise costs, which allow us to construct an efficient optimistic algorithm for solving the problem.

\textbf{Learning the Costs. } To learn the costs, we show that the agent can effectively gain information about costs even in time steps where no termination occurs. Recall that at any time step $h\in[H-1]$, the agent acquires a sample from a Bernoulli random variable with parameter $p = \rho_h^k(c) = \rho\brk*{\sum_{t=1}^h c_t(s_t^k,a_t^k) - b}$. Notably, a lack of termination, which occurs with probability $1-\rho_h^k(c)$, is also an informative signal of the unknown costs. We propose to leverage this information by recognizing the costs $c$ as parameters of a probabilistic model, maximizing their likelihood. We use the regularized cross-entropy, defined for some $\lambda > 0$ by
\begin{align}
    \label{eq: cost likelihood}
   \mathcal{L}^k_\lambda(c) 
   = 
   \sum_{k'=1}^{k} \sum_{h=1}^{H-1} \left[ \indicator{h< t_{k'}^*}\log\brk*{1- \rho_h^k(c)} 
   +  
   \indicator{h=t_{k'}^*}\log\brk*{\rho_h^k(c)
    }\right] - \lambda\norm{c}_2^2.
\end{align}

By maximizing the cost likelihood in \Cref{eq: cost likelihood}, global guarantees of the cost can be achieved, similar to previous work on logistic bandits \citep{zhang2016online,abeille2021instance}. Particularly, denoting by $\hat{c}^k \in\arg\max \mathcal{L}^k_\lambda(c)$ the maximum likelihood estimates of the costs, it can be shown that for any history, a global upper bound on $\norm{\hat{c}^k-c}_{\Sigma_k}$ can be obtained, where the history-dependent design matrix $\Sigma_k$ captures the empirical correlations of visitation frequencies (see \Cref{supp: cost concentration} for details). Unfortunately, using $\norm{\hat{c}^k-c}_{\Sigma_k}$ amounts to an intractable algorithm \citep{chatterji2021theory}, and thus to an undesirable result.

Instead, as terminations are sampled on \emph{every time step} (i.e., non-terminations are informative signals as well), we show we can obtain a \emph{local} bound on the cost function $c$. Specifically, we show that the error $\abs{\hat c_h^k(s,a) - c_h(s,a)}$ diminishes with $n_h^k(s,a)$. The following result is a main contribution of our work, and the crux of our regret guarantees later on (see \Cref{supp: cost concentration} for proof).

\begin{theorem}[Local Cost Estimation Confidence Bound]
\label{thm: local cost confidence front}
Let $\hat{c}^k \in\arg\max_{c\in\C} \mathcal{L}^k_\lambda(c)$ be the maximum likelihood estimate of the costs.
Then, for any $\delta>0$, with probability of at least $1-\delta$, for all episodes $k\in [K]$, timesteps $h\in[H-1]$ and state-actions $(s,a)\in\sset\times\aset$, it holds that
\begin{align*}
    \abs{\hat c_h^k(s,a) - c_h(s,a) } 
    \leq  
    \Ob\brk*{
        \brk*{n^k_h(s,a)}^{-0.5}
        \sqrt{\kappa SAHL^{3}} 
        \log\brk*{\frac{1}{\delta}\brk*{1+\frac{kL}{S^2A^2H}}}
    }.
\end{align*}
\end{theorem}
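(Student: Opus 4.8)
The plan is to treat cost estimation as a logistic (generalized-linear) regression and then \emph{localize} a global confidence ellipsoid down to a single coordinate. To each observation — the termination indicator at step $h$ of episode $k'$ — I would associate a feature vector $\phi_h^{k'}\in\R^{(H-1)SA}$, indexed by triples $(t,s,a)$, whose $(t,s,a)$ entry is $\indicator{t\le h}\indicator{(s_t^{k'},a_t^{k'})=(s,a)}$, so that the logit of $\rho_h^{k'}(c)$ equals $\langle \phi_h^{k'},c\rangle - b$. The cross-entropy $\mathcal{L}^k_\lambda$ is then exactly the $\ell_2$-regularized logistic loss in the parameter $c$, and $\hat c^k$ is its MLE. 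First I would write the score equation $\nabla \mathcal{L}^k_\lambda(\hat c^k)=0$, Taylor-expand it around the true $c$, and separate the weighted Hessian $\Sigma_k = \lambda I + \sum_{k',h}\dot\rho(\langle \phi_h^{k'},\bar c\rangle - b)\,\phi_h^{k'}(\phi_h^{k'})^\top$ (at an intermediate $\bar c$) from the martingale term $\sum_{k',h}(\indicator{h=t^*_{k'}}-\rho_h^{k'}(c))\,\phi_h^{k'}$. Combining the self-concordance of the logistic link with a self-normalized (vector Bernstein) martingale bound — following the logistic-bandit analyses of \citep{zhang2016online,abeille2021instance} — yields a \emph{global} ellipsoidal guarantee $\norm{\hat c^k - c}_{\Sigma_k}\le \beta_k$, uniform over $k$, with $\beta_k = \Ob\brk*{\sqrt{SAHL^3}\log\brk*{1/\delta}}$, after a whitening/union argument removing the dependence of $\Sigma_k$ on the unknown $\bar c$.

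Second, I would exploit the nested ``staircase'' structure of the features to pass from this global ellipsoid to a per-coordinate bound. Writing $\Delta = \hat c^k - c$ and $e_{(h,s,a)}$ for the standard basis vector of coordinate $(h,s,a)$, the crucial observation is that along any episode $k'$ visiting $(s,a)$ at step $h$ one has $e_{(h,s,a)} = \phi_h^{k'} - \phi_{h-1}^{k'}$: the increment of the cumulative features isolates exactly one cost. Hence $\Delta_h(s,a) = \langle \Delta, e_{(h,s,a)}\rangle$, and Cauchy--Schwarz in the $\Sigma_k$ geometry gives $\abs{\hat c_h^k(s,a) - c_h(s,a)} \le \norm{\Delta}_{\Sigma_k}\norm{e_{(h,s,a)}}_{\Sigma_k^{-1}} \le \beta_k \norm{e_{(h,s,a)}}_{\Sigma_k^{-1}}$. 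It then remains to show that the local quantity $\norm{e_{(h,s,a)}}_{\Sigma_k^{-1}}^2 = (\Sigma_k^{-1})_{(h,s,a),(h,s,a)}$ shrinks like $\kappa / n_h^k(s,a)$.

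The crux — and the step I expect to be hardest — is this last one: bounding a diagonal entry of $\Sigma_k^{-1}$ by the \emph{local} count $n_h^k(s,a)$, even though $\Sigma_k$ couples all time steps through the cumulative features. Lower-bounding the link derivative by $\dot\rho \ge \kappa^{-1}$ reduces the task to showing $e_{(h,s,a)}^\top M_k^{-1} e_{(h,s,a)} = \Ob\brk*{1/n_h^k(s,a)}$ for the unweighted design $M_k = \lambda\kappa I + \sum_{k',h'}\phi_{h'}^{k'}(\phi_{h'}^{k'})^\top$. I would argue this via a Schur-complement (block-elimination) identity: $(M_k^{-1})_{(h,s,a),(h,s,a)}$ is the reciprocal of the residual curvature of coordinate $(h,s,a)$ after projecting out all other coordinates, and each of the $n_h^k(s,a)$ visiting episodes contributes, through its isolated increment $\phi_h^{k'}-\phi_{h-1}^{k'}$, effectively one unit to that residual curvature. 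The delicate point is that $M_k$ does \emph{not} dominate $\sum(\phi_h^{k'}-\phi_{h-1}^{k'})(\phi_h^{k'}-\phi_{h-1}^{k'})^\top$ in the PSD order (the cross terms are missing), so the ``difference-observation'' heuristic must be made rigorous through the Schur complement while the non-termination events within an episode are statistically dependent and $\Sigma_k$ is itself random and history-dependent. Assembling $\norm{e_{(h,s,a)}}_{\Sigma_k^{-1}} = \Ob\brk*{\sqrt{\kappa/n_h^k(s,a)}}$ with $\beta_k = \Ob\brk*{\sqrt{SAHL^3}\log\brk*{\cdot}}$ and a union bound over $k,h,(s,a)$ then produces the claimed rate $(n_h^k(s,a))^{-0.5}\sqrt{\kappa SAHL^3}\log\brk*{\cdot}$, where the self-concordance control supplies the single factor of $\kappa$ and careful tracking of the logit range (of order $\sqrt{H}L$) and of the regularization yields the $L^3$ dependence.
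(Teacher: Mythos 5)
Your plan is correct, and at the skeleton level it is the same as the paper's: identical cumulative-occupancy features, a global ellipsoidal bound imported from the logistic-bandit literature (the paper directly invokes Proposition 1 and Lemma 1 of \citet{abeille2021instance} with $d = SAH$ and the parameter norm rescaled by $\sqrt{H}$ because $\lVert \phi \rVert_2^2 \le H$), the same Cauchy--Schwarz localization $\lvert \hat c^k_h(s,a) - c_h(s,a)\rvert \le \lVert \hat c^k - c\rVert_{V_k}\,\lVert e_{(h,s,a)}\rVert_{V_k^{-1}}$, and the same transfer of a single $\kappa$ factor between the Hessian-weighted and unweighted design matrices. Where you genuinely diverge is the crux you flagged: bounding the diagonal entry of the inverse design matrix by $O(1/n_h^k(s,a))$. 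The paper does this per episode: it applies the harmonic-mean/arithmetic-mean inequality for positive matrices to reduce to a single episode's Gram matrix $A_{k'} = \sum_{h'} \phi_{h'}^{k'}(\phi_{h'}^{k'})^\top$, permutes coordinates so that the visited block becomes the staircase matrix $B(i,j) = t^*+1-\max\{i,j\}$, explicitly inverts it (the inverse is tridiagonal with entries in $\{1,2,-1\}$ --- the discrete difference operator, i.e., exactly your increment observation in matrix form), and bounds $\lambda_{\min}(B) \ge 1/4$ by Gershgorin, yielding $\lVert e_{(h,s,a)}\rVert_{V_k^{-1}}^2 \le 4/(n_h^k(s,a) + 4\lambda)$. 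Your Schur-complement route reaches the same conclusion more directly, and the step you worried about is not actually delicate: use the variational characterization $\bigl((M_k^{-1})_{(h,s,a),(h,s,a)}\bigr)^{-1} = \min\bigl\{ v^\top M_k v : v_{(h,s,a)} = 1\bigr\}$; for every episode $k'$ that visits $(s,a)$ at step $h$, the increment identity gives $1 = \langle \phi_h^{k'} - \phi_{h-1}^{k'}, v\rangle^2 \le 2\langle \phi_h^{k'}, v\rangle^2 + 2\langle \phi_{h-1}^{k'}, v\rangle^2$ (with $\phi_0^{k'} = 0$; both features are genuine observations because feedback arrives at every step up to termination and $h \le H-1$), and since these index pairs are disjoint across visiting episodes, summing gives $v^\top M_k v \ge n_h^k(s,a)/2$ for every admissible $v$. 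This two-line argument sidesteps precisely the false PSD domination you correctly identified, avoids both the HM--AM step and the explicit matrix inversion, and even improves the paper's constant from $4$ to $2$; what the paper's route buys in exchange is the explicit tridiagonal form of the per-episode inverse, which makes the difference-operator structure visible. Two minor remarks: the closing union bound over $(h,s,a)$ you propose is unnecessary --- once the single global ellipsoid event holds uniformly over $k$, all local bounds follow deterministically; and your bookkeeping of the horizon powers in $\beta_k$ is slightly optimistic (the paper's global bound carries $H^{2.5}$ and a squared logarithm, reflecting the $\sqrt{H}$ feature norm), though this affects only constants and exponents of $H$, not the structure of the argument.
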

We note the presence of $\kappa$ in our upper bound, a common factor \citep{chatterji2021theory}, which is fundamental to our analysis, capturing the complexity of estimating the costs. Trajectories that saturate the logistic function lead to more difficult credit assignment. Specifically, when the accumulated costs are high, any additional penalty would only marginally change the termination probability, making its estimation harder. A similar argument can be made when the termination probability is low.

We emphasize that in contrast to previous work on global reward feedback in RL \citep{chatterji2021theory,efroni2020reinforcement}, which focused specifically on settings in which information is provided only at the end of an episode, the TerMDP framework provides us with additional information whenever no termination occurs, allowing us to achieve strong, local bounds of the unknown costs. This observation is crucial for the design of a computationally tractable algorithm, as we will see both in theory as well in our experiments later on. 

\begin{algorithm}[t!]
\caption{TermCRL: Termination Confidence Reinforcement Learning}
\label{alg: Termination CRL}
\begin{algorithmic}[1]
\STATE{ \textbf{require:} $\lambda >0$} 
\FOR{$k=1, \hdots, K$}
    \FOR{$(h,s,a) \in [H]\times \sset \times \aset$}
        \STATE $\bar r_h^{k}(s,a) = \hat r_h^{k}(s,a) + b^r_{k}(h,s,a) + b^p_{k}(h,s,a)$ 
        \STATE $\bar c_h^{k}(s,a) = \hat c_h^{k}(s,a) - b^{c}_{k}(h,s,a)$ \hfill {\color{gray}// \Cref{appendix: optimism}}
    \ENDFOR       
    \STATE $\pi^k \gets $ TerMDP-Plan$\brk*{\terMDP \brk1{\sset,\aset,H,\bar{r}^{k},\hat{P}^{k},\bar{c}^{k}}}$ \hfill {\color{gray}// \Cref{appendix: planning}}
    \STATE Rollout a trajectory by acting $\pi^k$
    \STATE $\hat c^{k+1} \in \arg\max_{c\in\C} \mathcal{L}^k_\lambda(c)$ \hfill {\color{gray}// \Cref{eq: cost likelihood}}
    \STATE Update $\hat{P}^{k+1}(s,a), \hat{r}^{k+1}(s,a), n^{k+1}(s,a)$ over rollout trajectory
\ENDFOR
\end{algorithmic}
\end{algorithm}

\subsection{Termination Confidence Reinforcement Learning}

We are now ready to present our method for solving TerMDPs with unknown costs. Our proposed approach, which we call Termination Confidence Reinforcement Learning (TermCRL), is shown in \Cref{alg: Termination CRL}. Leveraging the local convergence guarantees of \Cref{thm: local cost confidence front}, we estimate the costs by maximizing the likelihood in \Cref{eq: cost likelihood}. We compensate for uncertainty in the reward, transitions, and costs by incorporating optimism. We define bonuses for the reward, transition, and cost function by $b_k^r(h,s,a) = \Ob\brk*{\sqrt{\frac{\log\brk*{1/\delta}}{n_h^k(s,a)\vee 1}}}, b_k^p(h,s,a)=\Ob\brk*{\sqrt{\frac{SH^2\log\brk*{1/\delta}}{n_h^k(s,a)\vee 1}}}$, and $b_k^c(h,s,a)=\Ob\brk*{\sqrt{\frac{\kappa SAHL^3}{n_h^k(s,a)\vee 1}}\log\brk*{\frac{1}{\delta}}}$ for some $\delta > 0$ (see \Cref{appendix: optimism} for explicit definitions).

We add the reward and transition bonuses to the estimated reward (line 4), while the optimistic cost bonus is applied directly to the estimated costs (line 5). Then, a planner (line 7) solves the optimistic MDP for which the costs are known and are given by their optimistic counterparts. We refer the reader to \Cref{appendix: planning} for further discussion on planning in TerMDPs. The following theorem provides regret guarantees for \Cref{alg: Termination CRL}. Its proof is given in \Cref{appendix: regret analysis} and relies on \Cref{thm: local cost confidence front} and the analysis of UCRL \citep{auer2008near,efroni2019tight}.

\begin{restatable}{theorem}{MainResult}[Regret of TermCRL]
\label{theorem: main}
With probability at least $1-\delta$, the regret of \Cref{alg: Termination CRL} is
    \begin{align*}
        \Reg{K} \leq \Ob\brk*{\sqrt{\kappa S^2A^2H^{8.5}L^3 K\log^3\brk*{ \frac{SAHK}{\delta}}}}.
    \end{align*}
\end{restatable}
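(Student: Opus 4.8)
The plan is to follow the optimism-in-the-face-of-uncertainty template of UCRL \citep{auer2008near,efroni2019tight}, with the termination structure handled through the local cost confidence bound of \Cref{thm: local cost confidence front}. First I would define a ``good event'' $\mathcal{G}$ on which all three confidence bounds hold simultaneously for every $(k,h,s,a)$: the cost bound of \Cref{thm: local cost confidence front} (controlled by $b_k^c$), together with standard Hoeffding/Bernstein bounds for the reward and transition estimates (controlled by $b_k^r$ and $b_k^p$). A union bound over the three families, each calibrated to fail with probability $\delta/3$, gives $\Pr(\mathcal{G})\ge 1-\delta$. On $\mathcal{G}$ the true reward, transition, and cost lie in their respective confidence sets, so the optimistic model $\terMDP\brk*{\sset,\aset,H,\bar r^k,\hat P^k,\bar c^k}$ fed to the planner is a valid optimistic instance.

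Next I would establish optimism, i.e., $\bar V_1^k(s_1^k)\ge V_1^*(s_1^k)$ on $\mathcal{G}$, where $\bar V_1^k$ is the value returned by the planner. This has two ingredients. The reward/transition inflation yields an upper bound on the value exactly as in finite-horizon UCRL. For the costs, I would invoke the monotonicity of the TerMDP value in the cost function: since $\rho$ is increasing, lowering the accumulated cost (via $\bar c^k=\hat c^k-b_k^c\le c$) can only decrease every termination probability $\rho_h^k$, hence increase each step's survival probability and the collected reward; this monotonicity follows from the augmented-state formulation of \Cref{appendix: known costs} and supplies optimism \emph{with respect to termination}. Combining the two gives $\Reg{K}\le\sum_{k=1}^K\brk*{\bar V_1^k(s_1^k)-V_1^{\pi^k}(s_1^k)}$.

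The core step is a value-difference (simulation) lemma for TerMDPs bounding $\bar V_1^k-V_1^{\pi^k}$ along the rolled-out trajectory. Writing the value as a reward sum weighted by the survival products $\prod_{t<h}(1-\rho_t)$, I would telescope the gap between the optimistic and true models into three accumulating error streams: the reward bonus $b_k^r$, the transition error (handled by the standard UCRL recursion on $\hat P^k$ versus $P$), and a new termination term. For the termination term, a single per-step cost error $\abs{\hat c_u^k-c_u}\le b_k^c$ perturbs the accumulated cost and, through the $\tfrac14$-Lipschitzness of $\rho$, every later termination probability $\rho_h$ with $h>u$; bounding $\prod_{t<h}(1-\rho_t)-\prod_{t<h}(1-\hat\rho_t)\le\sum_{t<h}\abs{\rho_t-\hat\rho_t}$ and resumming shows the per-episode termination contribution is $\Ob\brk*{H^2\sum_{u=1}^H b_k^c(u,s_u^k,a_u^k)}$. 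I would then pass from expected to realized trajectories via an Azuma martingale argument (contributing lower-order terms) and apply the pigeonhole bound $\sum_{k,h}\brk*{n_h^k(s_h^k,a_h^k)}^{-1/2}=\Ob(H\sqrt{SAK})$ to each stream. The cost stream dominates the dependence on $\kappa$, $L$, $S$, $A$, and $H$; collecting the $H$ factors from the survival-coupling, the cost bonus $\sqrt{\kappa SAHL^3}$, and the summation yields the stated $\Ob\brk*{\sqrt{\kappa S^2A^2H^{8.5}L^3K}\,\log^3(SAHK/\delta)}$ bound.

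The hard part will be the termination term in the value-difference lemma: unlike reward and transition errors, which are local to one step, a cost error at step $u$ feeds into the survival weights of all later steps, so a naive bound risks double-counting accumulated-cost errors across the horizon. Controlling this coupling tightly—while ensuring the reciprocal-derivative factor $\kappa$ enters only through the confidence bound of \Cref{thm: local cost confidence front} and does not compound across steps—is the crux, and is what produces the large power of $H$. A secondary subtlety is that optimism with respect to costs is a statement about the termination \emph{dynamics} rather than an additive reward, so the monotonicity-in-cost lemma must be used to convert the cost confidence set into a genuine value upper bound.
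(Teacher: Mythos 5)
Your proposal is correct and follows essentially the same route as the paper's proof: a good event combining reward, transition, and cost confidence bounds; optimism established via bonus inflation together with monotonicity of the value in the costs (the paper's Lemma on optimism, term (c)); a value-difference decomposition in which the termination stream picks up an extra $H^2$ factor because a cost error at one step perturbs all later termination probabilities (your survival-product telescoping is equivalent to the paper's use of the value-difference lemma, H\"older, and the Lipschitzness of $\rho$ over the double sum $\sum_h\sum_{t\le h}$); and finally the pigeonhole/visitation bound to sum the bonuses. The only differences are cosmetic (e.g., $\delta/3$ versus the paper's $\delta/4$ calibration, which also covers a visitation-count event, and your explicit Azuma step versus the paper's citation of the expected cumulative visitation lemma).
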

Compared to the standard regret of UCRL \citep{auer2008near}, an additional $\sqrt{\kappa AH^4L^3}$ multiplicative factor is evident in our result, which is due to the convergence rates of the costs in \Cref{thm: local cost confidence front}. Motivated by our theoretical results, in what follows we propose a practical approach, inspired by \Cref{alg: Termination CRL}, which utilizes local cost confidence intervals in a deep RL framework.

\begin{algorithm}[t!]
\caption{TermPG}
\label{alg: Termination PG}
\begin{algorithmic}[1]
\STATE{ \textbf{require:} window $w$, number of ensembles $M$, number of rollouts $N$, number of iterations $K$, policy gradient algorithm \texttt{ALG-PG}}
\STATE{ \textbf{initialize:} $\B_{\text{pos}} \gets \emptyset, \B_{\text{neg}} \gets \emptyset, \pi_{\theta} \gets $ random initialization }
\FOR{$k=1,\hdots,K$}
    \STATE Rollout $N$ trajectories using $\pi_{\theta}$, $\mathcal{R} = \brk[c]*{s^i_1, a^i_1, r^i_1, \hdots, s^i_{t^*_i}, a^i_{t^*_i}, r^i_{t^*_i}}_{i=1}^N$.
    \FOR{$i=1, \hdots, N$}
        \STATE Add $t^*_i - 1$ negative examples $\brk2{s_{\max\brk[c]*{1, l-w+1}}, a_{\max\brk[c]*{1, l-w+1}}, \hdots, s_l, a_l}_{l=1}^{t^*-1}$ to $\B_{\text{neg}}$.
        \STATE Add one positive example $\brk2{s_{\max\brk[c]*{1, t^*-w+1}}, a_{t^*-\max\brk[c]*{1, t^*-w+1}}, \hdots, s_{t^*}, a_{t^*}}$.
    \ENDFOR
    \STATE Train bootstrap ensemble $\brk[c]*{c_{\phi_m}}_{m=1}^M$ using binary cross entropy over data $\B_{\text{neg}}, \B_{\text{pos}}$.
    \STATE Augment states in $\mathcal{R}$ by $s_l^i \gets s_l^i \cup \sum_{j=1}^{\min\brk[c]*{w, l}} \min_{m} c_{\phi_m}(s^i_{l-j}, a^i_{l-j})$.
    \STATE Update policy $\pi_{\theta} \gets \texttt{ALG-PG}\brk*{\mathcal{R}}$ with dynamic discount (see \Cref{sec: discount factor}).
\ENDFOR
\end{algorithmic}
\end{algorithm}

\section{Termination Policy Gradient}
\label{sec: termpg}

Following the theoretical analysis in the previous section, we propose a practical approach for solving TerMDPs. Particularly, in this section, we devise a policy gradient method that accounts for the unknown costs leading to termination. We assume a stationary setup for which the transitions, rewards, costs, and policy are time-homogeneous. Our approach consists of three key elements: learning the costs, leveraging uncertainty estimates over costs, and constructing efficient value estimates through a dynamic cost-dependent discount factor. 

\Cref{alg: Termination PG} describes the Termination Policy Gradient (TermPG) method, which trains an ensemble of cost networks (to estimate the costs and uncertainty) over rollouts in a policy gradient framework. We represent our policy and cost networks using neural networks with parameters $\theta, \brk[c]{\phi_m}_{m=1}^M$. At every iteration, the agent rolls out $N$ trajectories in the environment using a parametric policy,~$\pi_\theta$. The rollouts are split into subtrajectories which are labeled w.r.t. the termination signal, where positive labels are used for examples that end with termination. Particularly, we split the rollouts into ``windows" (i.e., subtrajectories of length $w$), where a rollout of length $t^*$, which ends with termination, is split into $t^*-1$ negative examples $\brk*{s_{\max\brk[c]*{1, l-w+1}}, a_{\max\brk[c]*{1, l-w+1}}, \hdots, s_l, a_l}_{l=1}^{t^*-1}$, and one positive example $\brk*{s_{\max\brk[c]*{1, t^*-w+1}}, a_{t^*-\max\brk[c]*{1, t^*-w+1}}, \hdots, s_{t^*}, a_{t^*}}$. Similarly, a rollout of length $H$ which does not end with termination contains $H$ negative examples. We note that by taking finite windows, we assume the terminator ``forgets" accumulated costs that are not recent - a generalization of the theoretical TerMDP model in \Cref{sec: perliminaries}, for which $w=H$. In \Cref{sec: experiments}, we provide experiments of misspecification of the true underlying window width, where this model assumption does not hold.

\subsection{Learning the Costs} 

Having collected a dataset of positive and negative examples, we train a logistic regression model consisting of an ensemble of $M$ cost networks $\brk[c]*{c_{\phi_m}}_{m=1}^M$, shared across timesteps, as depicted in \Cref{fig: cost diagram}. Specifically, for an example $\brk*{s_{\max\brk[c]*{1, l-w+1}}, a_{\max\brk[c]*{1, l-w+1}}, \hdots, s_l, a_l}$ we estimate the termination probability by $\rho\brk*{\sum_{j=1}^{\min\brk[c]*{w,l}} c_{\phi_m}\brk*{s_{l-j+1}, a_{l-j+1}} - b_m}$, where $\brk[c]*{b_m}_{m=1}^M$ are learnable bias parameters. The parameters are then learned end-to-end using the cross entropy loss. We use the bootstrap method \citep{bickel1981some,chua2018deep} over the ensemble of cost networks. This ensemble is later used in \Cref{alg: Termination PG} to produce optimistic estimates of the costs. Particularly, the agent policy $\pi_\theta$ uses the current state augmented by the optimistic cummulative predicted cost, i.e., $s_l^{\text{aug}} = \brk*{s_l, C_{\text{optimistic}}}$, where $C_{\text{optimistic}} = \sum_{j=1}^{\min\brk[c]*{w, l}} \min_{m} c_{\phi_m}(s_{l-j+1}, a_{l-j+1})$. Finally, the agent is trained with the augmented states using a policy gradient algorithm \texttt{ALG-PG} (e.g., PPO \citep{schulman2017proximal}, IMPALA \citep{espeholt2018impala}).

\begin{figure}[t!]
\includegraphics[width=1\linewidth]{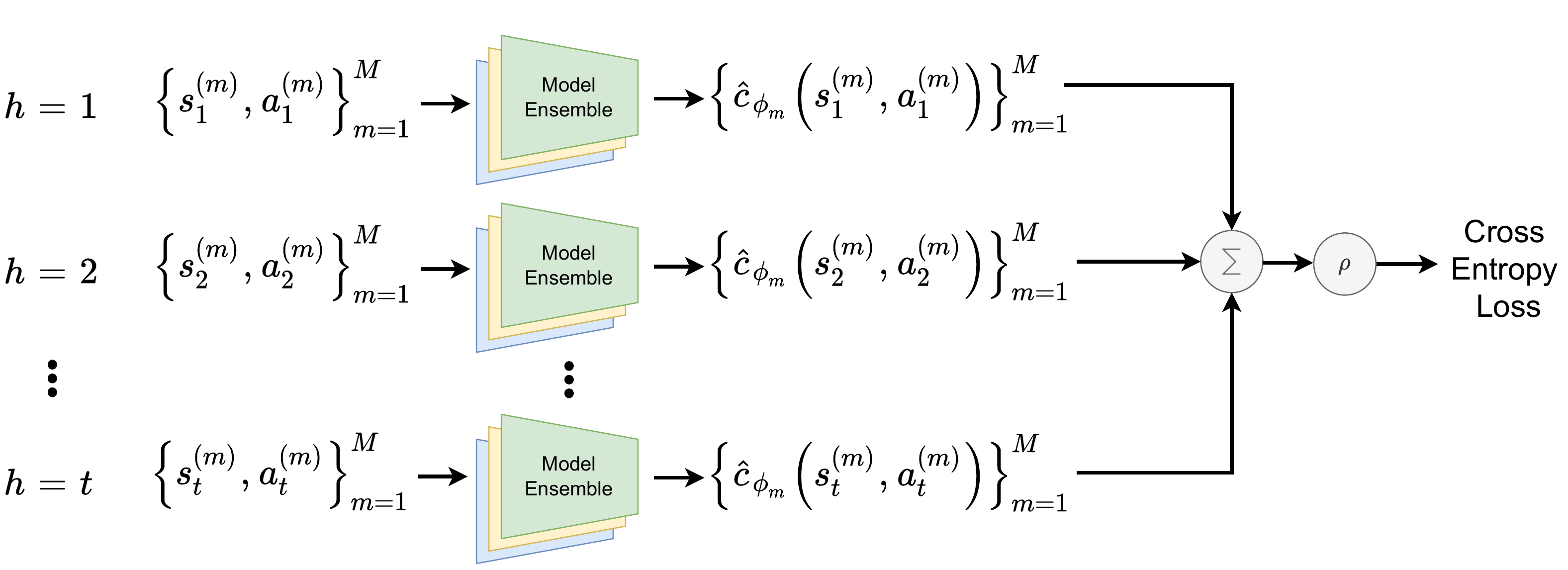}
\caption{\small Block diagram of the cost training procedure. Rollouts are split into subtrajectories, labeled according to whether they end in termination. Given the dataset of labeled subtrajectories, an ensemble of $M$ cost networks is trained end-to-end using cross-entropy with bootstrap samples (all time steps share the same ensemble).} 
\label{fig: cost diagram}
\end{figure}

\subsection{Optimistic Dynamic Discount Factor} 
\label{sec: discount factor}

While augmenting the state with the optimistic accumulated costs is sufficient for obtaining optimality, we propose to further leverage these estimates more explicitly -- noticing that the finite horizon objective we are solving can be cast to a discounted problem. Particularly, it is well known that the discount factor $\gamma \in (0,1)$ can be equivalently formulated as the probability of ``staying alive" (see the discounted MDP framework, \citet{puterman2014markov}). Similarly, by augmenting the state $s$ with the accumulated cost $C_h=\sum_{t=1}^h c(s_t, a_t)$, we view the probability $1-\rho(C_h)$ as a state-dependent discount factor, capturing the probability of an agent in a TerMDP to not be terminated. 

We define a dynamic, cost-dependent discount factor for value estimation. We use the state-action value function $Q(s,a,C)$ over the augmented states, defined for any~$s, a, C$ by
\begin{align*}
    Q^\pi(s,a,C) = \expect*{\pi}{\sum_{t=1}^H \brk*{\prod_{h=1}^t \gamma_h } r(s_t, a_t) | s_1 = s, a_1=a, C_1 = C},
\end{align*}
where ${\gamma_h = 1-\rho\brk*{C + \sum_{i=2}^{h-1} c(s_i, a_i) - b}}$. This yields the Termination Bellman Equations (see \Cref{appendix: termination bellman equations} for derivation)
\begin{align*}
    Q^\pi(s,a,C) 
    =
    r(s,a) + \brk*{1-\rho(C)}\expect*{s' \sim P(\cdot | s,a), a' \sim \pi(s')}{Q^\pi(s', a', C + c(s', a'))}.
\end{align*}
To incorporate uncertainty in the estimated costs, we use the optimistic accumulated costs $C_{\text{optimistic}} = \sum_{j=1}^{\min\brk[c]*{w, l}} \min_{m} c_{\phi_m}(s_{l-j+1}, a_{l-j+1})$. Then, the discount factor becomes ${\gamma\brk*{C_{\text{optimistic}}} = 1-\rho\brk*{C_{\text{optimistic}} - b}}$. Assuming that, w.h.p., optimistic costs are smaller than the true costs, the discount factor decreases as the agent exploits previously visited states. 

The dynamic discount factor allows us to obtain a more accurate value estimator. In particular, we leverage the optimistic cost-dependent discount factor $\gamma\brk*{C_{\text{optimistic}}}$ in our value estimation procedure, using Generalized Advantage Estimation (GAE, \citet{schulman2015high}). As we will show in the next section, using the optimistic discount factor significantly improves overall performance.

\begin{figure}[t!]
\centering
\includegraphics[width=0.4\linewidth]{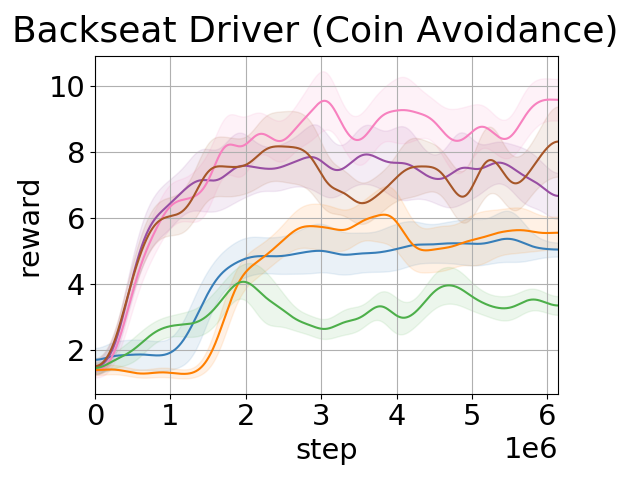}
\includegraphics[width=0.4\linewidth]{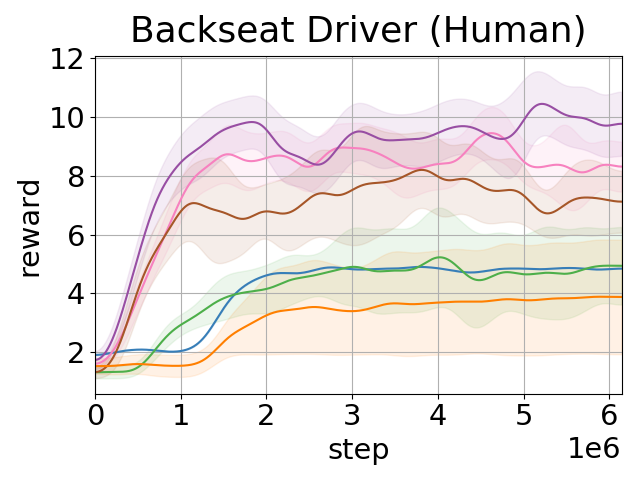}
\includegraphics[width=0.18\linewidth]{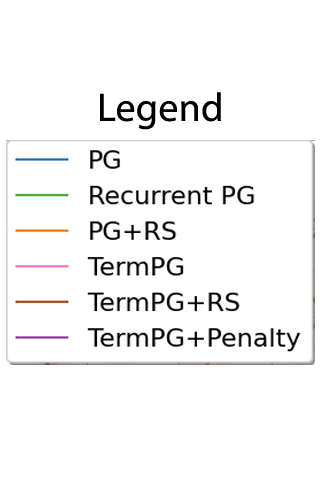}
\caption{\small Mean reward with std. over five seeds of ``Backseat Driver''. Left: coin avoidance; right: human termination. Variants with reward shaping (RS, orange and brown) penalize the agent with a constant value upon termination. The recurrent PG variant (green) uses a history-dependent policy without learning costs. The TermPG+Penalty variant (purple) penalizes the reward at every time step using the estimated costs.} 
\label{fig: backseat driver}
\end{figure}

\section{Experiments}
\label{sec: experiments}

In this section we evaluate the strength of our approach, comparing it to several baselines, including: \textbf{(1) PG (naive):} The standard policy gradient without additional assumptions, which ignores termination. \textbf{(2) Recurrent PG:} The standard policy gradient with a history-dependent recurrent policy (without cost estimation or dynamic discount factor). As the history is a sufficient statistic of the costs, the optimal policy is realizable. \textbf{(3) PG with Reward Shaping (RS):} We penalize the reward upon termination by a constant value, i.e., $r(s,a) - p\indicator{\terminalstate}$, for some $p > 0$. This approach can be applied to any variant of \Cref{alg: Termination PG} or the methods listed above. \textbf{(4) TermPG:} Described in \Cref{alg: Termination PG}. We additionally implemented two variants of TermPG, including: \textbf{(5) TermPG with Reward Shaping:} We penalize the reward with a constant value upon termination. \textbf{(6) TermPG with Cost Penalty:} We penalize the reward at every time step by the optimistic cost estimator, i.e., $r - \alpha C_{\text{optimistic}}$ for some $\alpha > 0$. All TermPG variants used an ensemble of three cost networks, and a dynamic cost-dependent discount factor, as described in \Cref{sec: discount factor}. We report mean and std. of the total reward (without penalties) for all our experiments.

\textbf{Backseat Driver (BDr). } We simulated a driving application, using MLAgents \citep{juliani2018unity}, by developing a new driving benchmark, ``Backseat Driver" (depicted in \Cref{fig: backseat driver ingame}), where we tested both synthetic and human terminations. The game consists of a five lane never-ending road, with randomly instantiating vehicles and coins. The agent can switch lanes and is rewarded for overtaking vehicles. In our experiments, states were represented as top view images containing the position of the agent, nearby cars, and coins with four stacked frames. We used a finite window of length $120$ for termination ($30$ agent decision steps), mimicking a passenger forgetting mistakes of the past.

\textbf{BDr Experiment 1: Coin Avoidance.} In the first experiment of Backseat Driver, coins are considered as objects the driver must avoid. The coins signify unknown preferences of the passenger, which are not explicitly provided to the agent. As the agent collects coins, a penalty is accumulated, and the agent is terminated probabilistically according to the logistic cost model in \Cref{sec: perliminaries}. We emphasize that, while the coins are visible to the agent (i.e., part of the agent's state), the agent only receives feedback from collecting coins through implicit terminations. 

Results for Backseat Driver with coin-avoidance termination are depicted in \Cref{fig: backseat driver}. We compared TermPG (pink) and its two variants (brown, purple) to the PG (blue), recurrent PG (green), and reward shaping (orange) methods described above. Our results demonstrate that TermPG significantly outperforms the history-based and penalty-based baselines. We found TermPG (pink) to perform significantly better, doubling the reward of the best PG variant. All TermPG variants converged quickly to a good solution, suggesting fast convergence of the costs (see \Cref{appendix: additional results}).

\begin{wrapfigure}[12]{R}{0pt}
\centering
\includegraphics[width=0.4\linewidth]{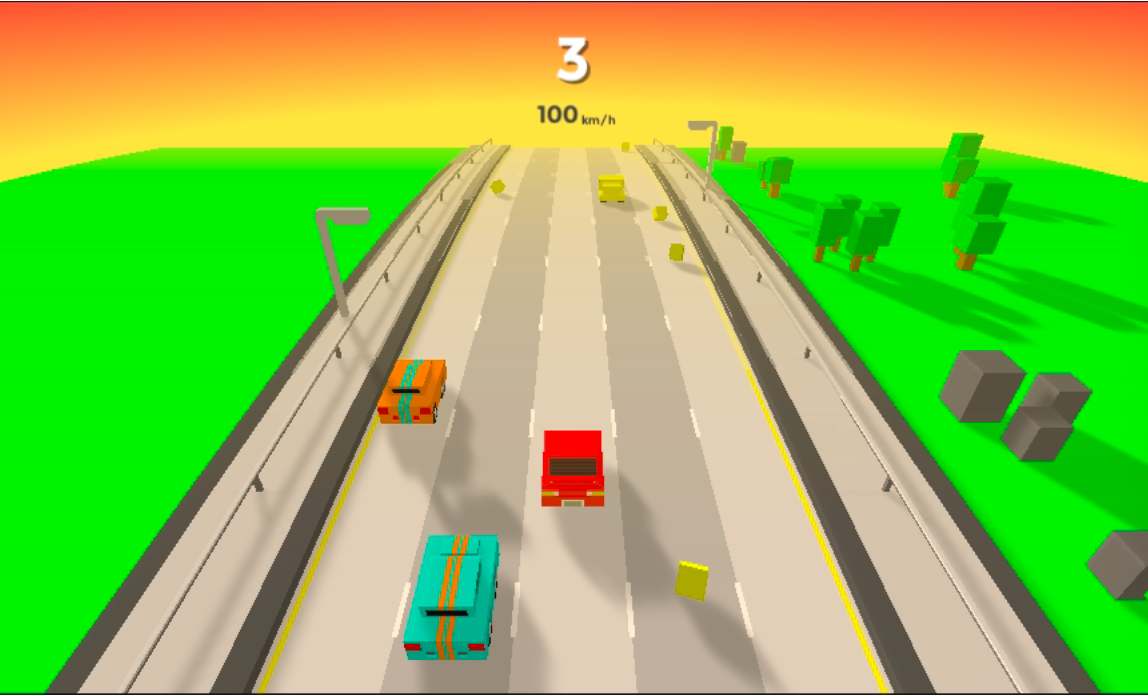}
\caption{\footnotesize Backseat Driver}
\label{fig: backseat driver ingame}
\end{wrapfigure}

\textbf{BDr Experiment 2: Human Termination.} To complement our results, we evaluated human termination on Backseat Driver. For this, we generated data of termination sequences from agents of varying quality (ranging from random to expert performance). We asked five human supervisors to label subsequences of this data by terminating the agent in situations of ``continual discomfort". This guideline was kept ambiguous to allow for diverse termination signals. The final dataset consisted of 512 termination examples. We then trained a model to predict human termination and implemented it into Backseat Driver to simulate termination. We refer the reader to \Cref{appendix: implementation details} for specific implementation details. \Cref{fig: backseat driver} shows results for human termination in Backseat Driver. As before, a significant performance increase was evident in our experiments. Additionally, we found that using a cost penalty (purple) or termination penalty (brown) for TermPG did not greatly affect performance.

\begin{figure}[t!]
\centering
\includegraphics[width=0.4\linewidth]{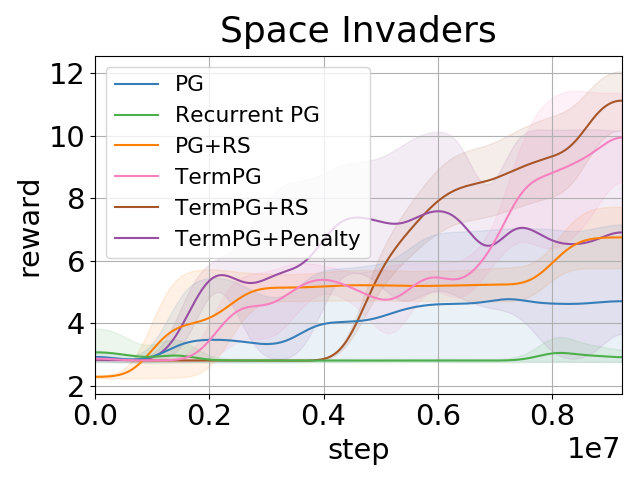}
\includegraphics[width=0.4\linewidth]{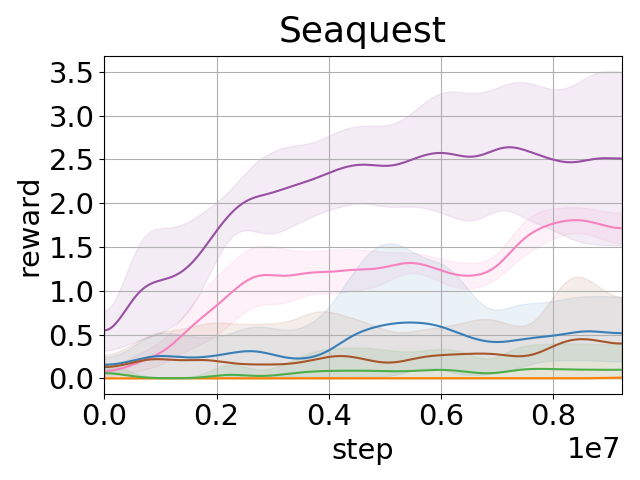}
\includegraphics[width=0.4\linewidth]{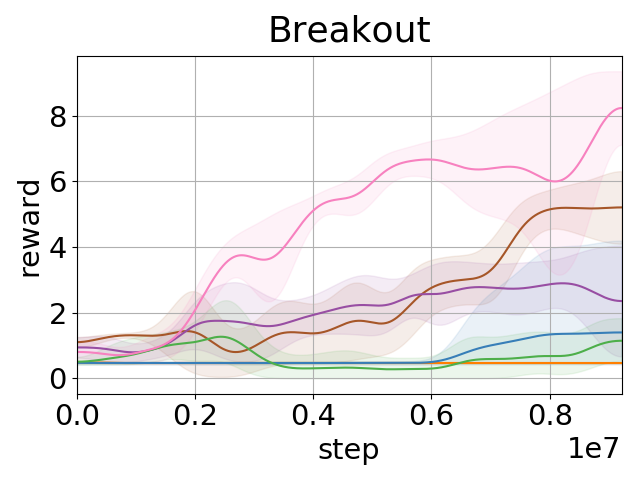}
\includegraphics[width=0.4\linewidth]{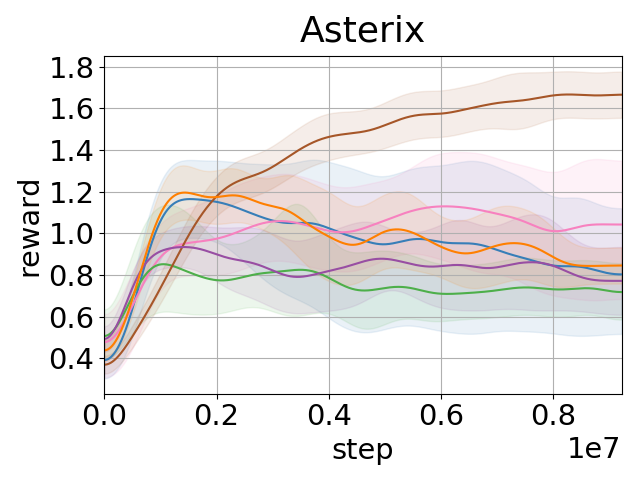}
\caption{\small Results for MinAtar benchmarks. All runs were averaged over five seeds. Comparison of best performing TermPG variant to best performing PG variant (relative improvement percentage): $80\%$ in Space Invaders, $150\%$ in Seaquest, $410\%$ in Breakout, and $90\%$ in Asterix.}
\label{fig: minatar}
\end{figure}

\begin{table*}[t!]
\caption{\label{table: results} \small Summary of results (top) and ablations for TermPG (bottom). Standard deviation optimism did not have significant impact on performance. Removing optimism or the dynamic discount factor had negative impact on performance. TermPG was found to be robust to model misspecifcations of the accumulated cost window.}
\centering
\hspace*{-0.8cm}  
\begin{scriptsize}
\begin{tabular}{|c|cc|cccc|}
\hline 
\multicolumn{1}{|c|}{} & \multicolumn{2}{c|}{\bf {\small Backseat Driver} } & \multicolumn{4}{c|}{\bf {\small MinAtar}} \\
\hline \hline 
\bf {\small Experiment} & \bf {\small Coin Avoid.}  & \bf{\small Human} &  \bf {\small Space Inv.}   & \bf {\small Seaquest}   & \bf {\small Breakout} & \bf {\small Asterix}    \\ \hline
PG & 
$5.3 \pm 0.8$   & $4.9 \pm 1.5$    & 
$5.2 \pm 1.8$   & $0.6 \pm 0.4$  & $1.4 \pm 2.8$ & $0.8 \pm 0.3$   \\ \hline
Recurrent PG & 
$3.4 \pm 0.21$   & $5 \pm 1.8$    & 
$2.8 \pm 0.05$   & $0.1 \pm 0.3$  & $0.7 \pm 0.6$ & $0.7 \pm 0.2$   \\ \hline
PG + RS & 
$5.9 \pm 1.4$   & $7.4 \pm 1.7$    & 
$7.6 \pm 2.3$   & $0.4 \pm 0.2$  & $0.5 \pm 0.03$ & $0.9 \pm 0.2$   \\ \hline
\rowcolor{Gray}
TermPG (ours) & 
$\mathbf{8.7 \pm 1.4}$ & $8.3 \pm 1.3$ &
$\mathbf{9.7 \pm 1.1}$   & $\mathbf{1.4 \pm 0.8}$  & $\mathbf{8.2 \pm 0.3}$ & $1 \pm 0.2$   \\ \hline
\rowcolor{Gray}
TermPG + RS (ours) & 
$\mathbf{8.4 \pm 1.3}$   & $7.7 \pm 0.3$    & 
$\mathbf{11.8 \pm 0.8}$   & $0.3 \pm 0.6$  & $5.1 \pm 1$ & $0.8 \pm 0.1$   \\ \hline
\rowcolor{Gray}
TermPG + Penalty (ours) & 
$6 \pm 0.8$ & $\mathbf{11.8 \pm 1.5}$ &
$7.7 \pm 1.4$   & $\mathbf{2.4 \pm 1}$  & $2.3 \pm 2.3$ & $\mathbf{1.7 \pm 0.1}$   \\ 

\hline\hline
\bf {\small Ablation Test} & \bf {\small Coin Avoid.}   & \bf {\small Human} & \bf {\small Space Inv.}   & \bf {\small Seaquest}   & \bf {\small Breakout} & \bf {\small Asterix}    \\ \hline
Optimism with Ensemble Std. & 
$7.6 \pm 2.1$ & $7.5 \pm 1.1$ &
$2.8 \pm 0.02$    & $0.9 \pm 0.6$   &  $10.9 \pm 1$   & $1 \pm 0.1$ \\ \hline
No Optimism & 
$7.8 \pm 1.3$   & $8.8 \pm 1.2$    & 
$5.2 \pm 1.6$   & $0.7 \pm 0.3$  & $1.3 \pm 0.7$ & $1 \pm 0.1$   \\ \hline
No Dynamic Discount & 
$6.9 \pm 0.6$   & $5.9 \pm 0.7$    & 
$4.4 \pm 1.8$   & $0.4 \pm 0.1$  & $0.5 \pm 0.02$ & $0.8 \pm 0.1$   \\ \hline
$\times 0.5$ Window Misspecification & 
$7.2 \pm 1.1$   & $7.2 \pm 0.5$    & 
$9.7 \pm 3.1$   & $2.4 \pm 0.4$  & $7.9 \pm 0.8$ & $0.8 \pm 0.1$   \\ \hline
$\times 2$ Window Misspecification & 
$8.3 \pm 0.1$   & $8.4 \pm 0.2$    & 
$11.1 \pm 3$   & $2.2 \pm 0.2$  & $10.3 \pm 0.8$ & $1 \pm 0.1$   \\ 
\hline
\end{tabular}
\end{scriptsize}
\end{table*}%

\textbf{MinAtar. } We further compared our method to the PG, recurrent PG, and reward shaping methods, on MinAtar \citep{young19minatar}. For each environment, we defined cost functions that do not necessarily align with the pre-specified reward, to mimic uncanny behavior that humans are expected to dislike. For example, in Breakout, the agent was penalized whenever the paddle remained in specific regions (e.g., sides of the screen), whereas in Space Invaders, the agent was penalized for ``near misses" of enemy bullets. We refer the reader to \Cref{appendix: implementation details} for specific details of the different termination cost functions.

\Cref{fig: minatar} depicts results on MinAtar. As with Backseat Driver, TermPG lead to significant improvement, often achieving a magnitude order as much reward as Recurrent PG. We found that adding a termination penalty and cost penalty produced mixed results, with them being sometimes useful (e.g., Space Invaders, Sequest, Asterix), yet other times harmful to performance (e.g., Breakout). Therefore, we propose to fine-tune these penalties in \Cref{alg: Termination PG}. Finally, we note that training TermPG was, on average, $67\%$ slower than PG, on the same machine. Nevertheless, though TermPG was somewhat more computationally expensive, it showed a significant increase in overall performance. A summary of all of our results is presented in \Cref{table: results} (top).

\textbf{Ablation Studies.} We present various ablations for TermPG in \Cref{table: results} (bottom). First, we tested the effect replacing the type of cost optimism in TermPG. In \Cref{sec: termpg}, cost optimism was defined using the minimum of the cost ensemble, i.e., $\text{min}\brk[c]*{c_{\phi_m}}$. Instead, we replaced the cost optimism to $C_{\text{optimistic}} = \text{mean}\brk[c]*{c_{\phi_m}} - \alpha \text{std}\brk[c]*{c_{\phi_m}}$, testing different values of $\alpha$. Surprisingly, this change mostly decreased performance, except for Breakout, where it performed significantly better. Other ablations included removing optimism altogether (i.e., only using the mean of the ensemble), and removing the dynamic discount factor. In both cases we found a significant decrease in performance, suggesting that both elements are essential for TermPG to work properly and utilize the estimator of the unknown costs. Finally, we tested misspecifications of our model by learning with windows that were different from the environment's real cost accumulation window. In both cases, TermPG was suprisingly robust to window misspecification, as performance remained almost unaffected by it.

\section{Related Work}

Our setup can be linked to various fields, as listed below.

\textbf{Constrained MDPs. } Perhaps the most straightforward motivations for external termination stems from constraint violation \citep{chow2018lyapunov,efroni2020exploration,hasanzadezonuzy2020learning}, where strict or soft constraints are introduced to the agent, who must learn to satisfy them. In these setups, which are often motivated by safety \citep{garcia2015comprehensive}, the constraints are usually known. In contrast, in this work, the costs are \emph{unknown} and only implicit termination is provided.

\textbf{Reward Design. } Engineering a good reward function is a hard task, for which frequent design choices may drastically affect performance \citep{oh2021creating}. Moreover, for tasks where humans are involved, it is rarely clear how to engineer a reward, as human preferences are not necessarily known, and humans are non-Markovian by nature \citep{clarke2013human,christiano2017deep}. Termination can thus be viewed as an efficient mechanism to elicit human input, allowing us to implicitly interpret human preferences and utility more robustly than trying to specify a reward.

\textbf{Global Feedback in RL. } Recent work considered once-per-trajectory reward feedback in RL, observing either the cumulative rewards at the end of an episode \citep{efroni2020reinforcement,cohen2021online} or a logistic function of trajectory-based features \cite{chatterji2021theory}. While these works are based on a similar solution mechanism, our work concentrates on a new framework, which accounts for non-Markovian termination. Additionally, we provide per-state concentration guarantees of the unknown cost function, compared to global concentration bounds in previous work \citep{abbasi2011improved,zhang2016online,qi2018bandit,abeille2021instance}. Using our local guarantees, we are able to construct a scalable policy gradient solution, with significant improvement over recurrent and reward shaping based approaches. 

\textbf{Preference-based RL. } In contrast to traditional reinforcement learning, preference-based reinforcement learning (PbRL) relies on subjective opinions rather than numerical rewards. In PbRL, preferences are captured through probabilistic rankings of trajectories \citep{wirth2016model,wirth2017survey,xu2020preference}. Similar to our work, \citet{christiano2017deep} use a regression model to learn a reward function that could account for the preference feedback. Our work considers a different setting in which human feedback is provided through termination, where termination and reward may not align.

\section{Discussion}

This paper formulated a new model to account for history-dependent exogenous termination in reinforcement learning. We defined the TerMDP framework and proposed a theoretically-guaranteed solution, as well as a practical policy-gradient approach. Our results showed significant improvement of our approach over various baselines. We stress that while it may seem as if the agent has two potentially conflicting goals---avoiding termination and maximizing reward---they are, in fact, aligned. The long-term consequences of actions need to account for longer survival which, in turn, allows for more reward collection. In what follows, we discuss $\kappa$, as factored in our regret bounds, as well possible limitations of our work.

\paragraph{The Role of $\kappa$}

As shown in \Cref{theorem: main}, $\kappa$ plays a significant role in the regret bound of \Cref{alg: Termination CRL}. This linear dependence is induced from the confidence bounds of \Cref{thm: local cost confidence front}. Informally, $\kappa$ is negligible whenever the costs $c$ and bias $b$ are ``well behaved". Suppose $\sum_{t=1}^h c_t(s_t^k,a_t^k) - b \gg 0$. In this case, $\kappa$ would be large and termination would mostly occur after the first step. As such, estimation of the costs would be hard (see \citet{chatterji2021theory}). Alternatively, suppose $\sum_{t=1}^h c_t(s_t^k,a_t^k) - b \ll 0$. In this case, $\kappa$ would also be large. Here, credit assignment would make the cost estimation problem harder, as trajectories would span longer horizons. It is unclear, as to the writing of this work, if other solutions to TerMDPs could bring about stronger regret guarantees that significantly reduce their dependence on $\kappa$. We note that lower bounds, which include $\kappa$, have previously been established for the estimation problem (see \citet{abeille2021instance,faury2020improved,jun2021improved}). Nevertheless, when searching for a policy which maximizes reward, it is unclear if estimation of the costs is indeed necessary for every state. We leave this direction for future work.

\paragraph{Limitations and Negative Societal Impact}

A primary limitation of our work involves the linear dependence of the logistic termination model. In some settings, it might be hard to capture true human preferences and behaviors using a linear model. Nevertheless, when measured across the full trajectory, our empirical findings show that this model is highly expressive, as we demonstrated on real human termination data (\Cref{sec: experiments}). Additionally, we note that work in inverse RL \citep{arora2021survey} also assumes such linear dependence of human decisions w.r.t. reward. Future work can consider more involved hypothesis classes, building upon our work to identify the optimal tradeoff between expressivity and convergence rate.

Finally, we note a possible negative societal impact of our work. Termination is strongly motivated by humans interacting with the agent. This may be harmful if not carefully controlled, as learning incorrect or biased preferences may, in turn, result in unfavorable consequences, or if humans engage in adversarial behavior in order to mislead an agent. Our work discusses initial research in this domain. We encourage caution in real-world applications, carefully considering the possible effects of model errors, particularly in applications that affect humans.

\newpage

\bibliography{bibliography}
\bibliographystyle{plainnat}

\clearpage

\section*{Checklist}


\begin{enumerate}

\item For all authors...
\begin{enumerate}
  \item Do the main claims made in the abstract and introduction accurately reflect the paper's contributions and scope?
    \answerYes{}{}
  \item Did you describe the limitations of your work?
    \answerYes{}
  \item Did you discuss any potential negative societal impacts of your work?
    \answerYes{}
  \item Have you read the ethics review guidelines and ensured that your paper conforms to them?
    \answerYes{}
\end{enumerate}

\item If you are including theoretical results...
\begin{enumerate}
  \item Did you state the full set of assumptions of all theoretical results?
    \answerYes{}
        \item Did you include complete proofs of all theoretical results?
    \answerYes{}
\end{enumerate}

\item If you ran experiments...
\begin{enumerate}
  \item Did you include the code, data, and instructions needed to reproduce the main experimental results (either in the supplemental material or as a URL)?
    \answerYes{}
  \item Did you specify all the training details (e.g., data splits, hyperparameters, how they were chosen)?
    \answerYes{}
        \item Did you report error bars (e.g., with respect to the random seed after running experiments multiple times)?
    \answerYes{}
        \item Did you include the total amount of compute and the type of resources used (e.g., type of GPUs, internal cluster, or cloud provider)?
    \answerYes{}
\end{enumerate}

\item If you are using existing assets (e.g., code, data, models) or curating/releasing new assets...
\begin{enumerate}
  \item If your work uses existing assets, did you cite the creators?
    \answerYes{}
  \item Did you mention the license of the assets?
    \answerNA{}
  \item Did you include any new assets either in the supplemental material or as a URL?
    \answerYes{}
  \item Did you discuss whether and how consent was obtained from people whose data you're using/curating?
    \answerNA{}
  \item Did you discuss whether the data you are using/curating contains personally identifiable information or offensive content?
    \answerNA{}
\end{enumerate}

\item If you used crowdsourcing or conducted research with human subjects...
\begin{enumerate}
  \item Did you include the full text of instructions given to participants and screenshots, if applicable?
    \answerNA{}
  \item Did you describe any potential participant risks, with links to Institutional Review Board (IRB) approvals, if applicable?
    \answerNA{}
  \item Did you include the estimated hourly wage paid to participants and the total amount spent on participant compensation?
    \answerNA{}
\end{enumerate}

\end{enumerate}

\clearpage

\appendixtrue

\addcontentsline{toc}{section}{Appendix}
\appendix
\part{}


{\hypersetup{linkcolor=black}
\parttoc}

\newpage


\section*{\uppercase{Organization of the Appendix}}
This appendix is organized as follows. We begin by further discussing motivations of our setting in autonomous driving and recommender system tasks in \Cref{appendix: motivation}. The first part of the appendix is then mostly focused on the empirical aspects, while the second part is mostly focused on the theoretical aspects, as well as missing proofs.

In \Cref{appendix: known costs} we discuss the TerMDP model in which costs are known. In this setting, we show that the costs are indeed sufficient for finding an optimal policy. That is, one need not care about the full history to account for termination, and only the current state and accumulated costs are needed to identify an optimal policy (i.e., achieve the same value as the optimal history-dependent policy).

In \Cref{appendix: termination bellman equations} we discuss the dynamic discount factor and derive the corresponding Termination Bellman Equations, as presented in \Cref{sec: discount factor}. In \Cref{appendix: implementation details} we discuss implementation details, including descriptions of the different environment and cost functions, TermPG implementations details and hyperparameters, and the construction of the human termination data. In \Cref{appendix: additional results} we provide some additional experimental results on the cost error and adding a cost bonus to the reward.

\Cref{appendix: planning} is focused on the problem of approximate planning in known TerMDPs. Here, we show that, by discretizing the costs on a grid, we can achieve near optimal performance through an approximate optimistic planner.

Finally, \Cref{supp: good event,appendix: regret analysis,supp: cost concentration,appendix: userful lemmas} provide full proofs of \Cref{thm: local cost confidence front,theorem: main}. In \Cref{supp: good event} we define failure events and bonuses for optimism in \Cref{alg: Termination CRL}. \Cref{appendix: regret analysis} analyzes and proves the regret guarantees in \Cref{theorem: main}, and \Cref{supp: cost concentration} provides proof for \Cref{thm: local cost confidence front}. Finally in \Cref{appendix: userful lemmas} we state auxiliary lemmas used throughout our analysis.

\newpage
\section{Motivation}
\label{appendix: motivation}

We begin by describing two concrete examples in which non-Markovian termination naturally occurs: overrides in autonomous driving and users abandoning recommender systems. 

\textbf{Autonomous Driving. } A myriad of factors affect the quality of a driving policy. These include safety, navigation efficiency, comfort, legislation, as well as specific preferences of the passengers. As these may be difficult to characterize quantitatively, heuristic metrics are derived and optimized. When a policy is released, these unknown factors may be enforced by a driving passenger, overriding the policy. For instance, a passenger might perceive a safe driving as dangerous due to differences between human and autonomous vehicle capabilities, thus halting the driving policy. This complication may be resolved by learning how to overcome such termination from previous occurrences.

\textbf{Recommender Systems. } Companies that recommend items to users may find contrasting goals between maximizing revenue and overall user satisfaction. For example, top-selling items which are recommended repeatedly can antagonize the user. Other popular items might offend the user or undermine the recommendation engine's credibility. Similar to the autonomous driving problem, user abandonment constitutes a critical signal, which may be caused by continual user dissatisfaction. The original criteria should therefore be optimized while also learning and accounting for these non-Markovian and unknown preferences.

\newpage
\section{TerMDPs with Known Costs}
\label{appendix: known costs}

In this section we define the TerMDP model with known costs. We show that the accumulated costs at time step $h$, i.e., $C_h = \sum_{t=1}^{h-1} c(s_t, a_t)$ and the current state $s_h$ are a sufficient statistic for computing an optimal policy $\pi^*$. That is, the history $\tau_{1:h} = (s_1, a_1, s_2, a_2, \hdots, s_{h-1}, a_{h-1})$ can be replaced by the accumulated costs $C_h$. To see this, we define an equivalent MDP, for which the state space is augmented by the accumulated costs, and show that an optimal policy of the augmented MDP achieves the optimal value of a history dependent optimal policy of the TerMDP.

We define the augmented MDP $\M_{\text{aug}} = \brk*{\sset_{\text{aug}}, \aset_{\text{aug}}, P_{\text{aug}}, R_{\text{aug}}, H}$, where $\sset_{\text{aug}} = \sset \times \R$ is the augmented state space, and $\aset_{\text{aug}} = \aset$ is the (unchanged) action space. The augmented transition function is defined for $s, C \in \sset \times \R, a \in \aset, s', C' \in \sset \times \R$
\begin{align*}
    P_{\text{aug}}(s', C' | s, C, a)
    =
    \indicator{C' = C + c(s,a)} 
    \cdot
    \begin{cases}
        1 & ,s'=s=\terminalstate \\
        \rho(C) & ,s \neq s' = \terminalstate \\
        (1-\rho(C)) P(s'|s,a)& ,s,s' \neq \terminalstate \\
        0 &, \mbox{o.w}.
    \end{cases}.
\end{align*}
Finally, the augmented reward function $R_{\text{aug}}$ satisfies $r_{\text{aug},h}(s_{\text{aug},h}^k=(s_h^k, C), \tilde{a}_h^k) = r_h(s_h^k,a_h^k)$.

Next, consider the TerMDP with known costs $\terMDP=(\sset,\aset, P,R,H,c)$, and let $C_h = \sum_{t=1}^{h-1} c(s_t, a_t)$. Then,
\begin{align*}
    P(s_{h+1}, C_{h+1} | s_h, a_h, \tau_{1:h})
    &=
    P(s_{h+1}, C_{h+1} | s_h, a_h, \tau_{1:h}, C_h) \\
    &=
    \indicator{C_{h+1} = C_h + c(s_h,a_h)}
    \cdot
    \begin{cases}
        1 & ,s_{h+1}=s_h=\terminalstate \\
        \rho(C_h) & ,s_h \neq s_{h+1} = \terminalstate \\
        (1-\rho(C_h)) P(s_{h+1}|s_h,a_h)& ,s_h,s_{h+1} \neq \terminalstate \\
        0 &,\mbox{o.w}.
    \end{cases} \\
    &=
    P_{\text{aug}}(s_{h+1}, C_{h+1} | s_h, C_h, a_h)
    =
    P_{\text{aug}}(s_{\text{aug},h+1} | s_{\text{aug},h}, a_h).
\end{align*}

To prove that the costs are sufficient for optimality, we prove a more general result. Particularly, define an MDP $(\sset_1 \times \sset_2, \A, P, r, H)$, and let $f: \sset_2 \mapsto D$, where $D$ is some known domain. Define the following set of deterministic policies
\begin{align*}
    \Pi_{\text{aug}} = \brk[c]*{\pi: \sset_1 \times \sset_2 \mapsto \A :  \exists \eta:\sset_1 \times D \mapsto [0,1], \pi(s_1, s_2) = \eta(s_1, f(s_2) }.
\end{align*}
Define the augmented optimal value for some $s \in \sset_1 \times \sset_2$
\begin{align*}
    V^*_{\text{aug},1}(s_1, s_2) 
    = 
    \max_{\pi \in \Pi_{\text{aug}}}
    \E{\sum_{t=1}^H r_t(s_t, a_t) | s_1=s_1, s_2 = s_2, a_t \sim \pi_t(s_1, s_2)}.
\end{align*}
We will show that if the reward does not depend on $\sset_2$, and if $P(s'_1, f(s'_2) | s_1, s_2, a) = P(s'_1, f(s'_2) | s_1, f(s_2), a)$, then $V^*_{\text{aug},1}(s_1, s_2) = V^*_{1}(s_1, s_2)$. This will prove that costs are indeed sufficient, as playing any policy in $\Pi_{\text{aug}}$ in the original MDP and achieve the same value. To see this, choose $\sset_2$ to be the set of possible trajectories in the known TerMDP, and let
\begin{align*}
    f(\tau_{1:h}) 
    = 
    f(s_1, a_1, s_2, a_2, \hdots s_{h-1}, a_{h-1})
    =
    \sum_{t=1}^{h-1} c(s_t, a_t).
\end{align*}
Then, since we previously showed that $r_{\text{aug},h}(s_{\text{aug},h}^k=(s_h^k, C), \tilde{a}_h^k) = r_h(s_h^k,a_h^k)$ and $P(s_{h+1}, C_{h+1} | s_h, a_h, \tau_{1:h}) = P_{\text{aug}}(s_{\text{aug},h+1} | s_{\text{aug},h}, a_h)$, this concludes our claim. The formal result is stated and proved below. 

\begin{proposition}
    Let $\M = (\sset_1 \times \sset_2, \A, P, r, H)$. Assume for any $s_1, s_2 \in \sset_1 \times \sset_2$, $a \in \A$, $P(s'_1, f(s'_2) | s_1, s_2, a) = P(s'_1, f(s'_2) | s_1, f(s_2), a)$ and $r(s_1, s_2, a) = g(s_1, a)$, for some deterministic function ${g:\sset_1 \times \A \mapsto [0,1]}$. Then, for any $s_1, s_2 \in \sset_1 \times \sset_2$,
    \begin{align*}
        V^*_{\text{aug},1}(s_1, s_2) = V^*_1(s_1, s_2).
    \end{align*}
\end{proposition}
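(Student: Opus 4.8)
The plan is to prove the two inequalities $V^*_{\text{aug},1}\le V^*_1$ and $V^*_1\le V^*_{\text{aug},1}$ separately. The first is immediate: every $\pi\in\Pi_{\text{aug}}$ is in particular a (Markov) policy on $\sset_1\times\sset_2$, while $V^*_1$ is the optimum over \emph{all} policies, so restricting the maximization to $\Pi_{\text{aug}}$ can only decrease the value. All the content lies in the reverse inequality, which I would obtain by showing that the \emph{unconstrained} optimal value function already depends on the second coordinate only through $f$, and that a corresponding optimal decision rule can be taken inside $\Pi_{\text{aug}}$.

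Concretely, writing $V^*_h$ for the unconstrained optimal value-to-go at stage $h$ (so that $V^*_1$ is the quantity in the statement), I would argue by backward induction over $h=H+1,H,\hdots,1$ on the assertion: there is a function $W_h:\sset_1\times D\mapsto\R$ with $V^*_h(s_1,s_2)=W_h(s_1,f(s_2))$ for every $(s_1,s_2)$, together with a decision rule $\eta_h:\sset_1\times D\mapsto\A$ attaining the stage-$h$ Bellman optimality maximum. The base case $h=H+1$ holds with $V^*_{H+1}\equiv 0$. For the step, I would expand the Bellman optimality equation
\begin{align*}
    V^*_h(s_1,s_2)=\max_{a\in\A}\brk[s]*{\, r(s_1,s_2,a)+\sum_{s'_1,s'_2}P(s'_1,s'_2\mid s_1,s_2,a)\,V^*_{h+1}(s'_1,s'_2)\,},
\end{align*}
substitute $r(s_1,s_2,a)=g(s_1,a)$ in the immediate term and the inductive hypothesis $V^*_{h+1}(s'_1,s'_2)=W_{h+1}(s'_1,f(s'_2))$ in the continuation term, and then regroup the transition sum over the level sets of $f$:
\begin{align*}
    \sum_{s'_1,s'_2}P(s'_1,s'_2\mid s_1,s_2,a)\,W_{h+1}(s'_1,f(s'_2))
    =\sum_{s'_1,d'}P\brk*{s'_1,f(s'_2)=d'\mid s_1,s_2,a}\,W_{h+1}(s'_1,d').
\end{align*}
By the hypothesis $P(s'_1,f(s'_2)=d'\mid s_1,s_2,a)=P(s'_1,f(s'_2)=d'\mid s_1,f(s_2),a)$, the right-hand side depends on $(s_1,s_2)$ only through $(s_1,f(s_2))$; together with $g(s_1,a)$, the entire bracketed quantity is a function of $(s_1,f(s_2),a)$ alone. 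Maximizing over $a$ then defines $W_h(s_1,f(s_2))$, and any choice of maximizing action yields the desired $\eta_h$, completing the induction.

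To conclude, the collection $(\eta_1,\hdots,\eta_H)$ is a policy each of whose stage maps lies in $\Pi_{\text{aug}}$, hence feasible for the maximization defining $V^*_{\text{aug},1}$; since it attains the Bellman-optimal value $V^*_1$ in the unconstrained MDP, we get $V^*_1\le V^*_{\text{aug},1}$, and equality follows. I expect the only delicate point to be the regrouping over the fibers $\brk[c]*{s'_2:f(s'_2)=d'}$: this is exactly where the assumption on $P$ enters, and it is what guarantees that marginalizing a function of $(s'_1,f(s'_2))$ against the transition kernel preserves dependence on $s_2$ only through $f(s_2)$. A minor preliminary is to invoke standard finite-horizon MDP theory \citep{puterman2014markov} so that $V^*_1$ (the optimum over all, possibly history-dependent, policies) satisfies the Bellman optimality equation and is attained by a Markov policy, making the induction legitimate.
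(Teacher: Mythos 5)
Your proof is correct, and its engine is the same as the paper's: a backward induction whose crucial step regroups the transition sum over the level sets $\{s_2' : f(s_2') = d'\}$ and invokes the assumption on $P$ so that the resulting expression depends on $s_2$ only through $f(s_2)$. The bookkeeping differs in a way worth noting. The paper first proves, as a standalone lemma (\Cref{lemma: dependence on f}), that the \emph{augmented} optimal value $V^*_{\text{aug},k}$ factors through $f$, and then runs a second backward induction establishing the equality $V^*_k = V^*_{\text{aug},k}$ directly, rewriting in each inductive step the maximum over actions as a maximum over decision rules $\eta$. You instead run a single structural induction on the \emph{unconstrained} optimal value $V^*_h$, showing simultaneously that it and a greedy decision rule factor through $f$; equality then follows from the trivial inclusion $V^*_{\text{aug},1} \leq V^*_1$ (since $\Pi_{\text{aug}}$ is a subset of all policies) together with realizability of that greedy policy inside $\Pi_{\text{aug}}$. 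This organization replaces the paper's second induction by a one-line policy-class-inclusion inequality, and it makes explicit the appeal to standard finite-horizon theory (Bellman optimality for the unconstrained problem, sufficiency of Markov policies) that the paper uses tacitly when it writes the Bellman equation for $V^*_k$. Both arguments hinge on exactly the same fiber-regrouping identity and transition assumption, so they are equivalent in substance; yours is marginally more economical.
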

\begin{proof}
    We prove by induction on $h \in [H]$. For $h = H$, the result follows trivially since $V^*_{\text{aug},H}(s_1, s_2) = V^*_H(s_1, s_2) = \max_a r(s_1, s_2, a)$. Next, assume that $V^*_{\text{aug},k+1}(s_1, s_2) = V^*_{k+1}(s_1, s_2)$ for some $k \in \brk[c]*{1, \hdots, H-1}$ and all $s_1, s_2 \in \sset_1 \times \sset_2$. Then, by the Bellman Equations,
    \begin{align*}
        V^*_k(s_1, s_2)
        &=
        \max_{a \in \A} 
        r(s_1, s_2, a) 
        + 
        \sum_{s'_1, s'_2 \in \sset_1 \times \sset_2} P(s'_1, s'_2 | s_1, s_2, a) 
        V^*_{k+1}(s'_1, s'_2) \\
        &\overset{\text{induction step}}{=}
        \max_{a \in \A} 
        r(s_1, s_2, a) 
        + 
        \sum_{s'_1, s'_2 \in \sset_1 \times \sset_2} P(s'_1, s'_2 | s_1, s_2, a) 
        V^*_{\text{aug},k+1}(s'_1, s'_2) \\
        &=
        \max_{a \in \A} 
        g(s_1, a) 
        +
        \sum_{s'_1 \in \sset_1}
        \sum_{C \in D}\sum_{s'_2 : f(s'_2) = C}
        P(s'_1, s'_2 | s_1, s_2, a) 
        V^*_{\text{aug},k+1}(s'_1, s'_2)
\end{align*}
Next, by \Cref{lemma: dependence on f}, there exists $U^*_{k+1}: \sset_1 \times D \mapsto \R$ such that $V^*_{\text{aug},k+1}(s_1, s_2) = U^*_{k+1}(s_1, f(s_2))$, for all $s_1, s_2 \in \sset_1 \times \sset_2$. Then,
\begin{align*}
    \sum_{C \in D}\sum_{s'_2 : f(s'_2) = C}
    P(s'_1, s'_2 | s_1, s_2, a) 
    V^*_{\text{aug},k+1}(s'_1, s'_2)
    &=
    \sum_{C \in D}\sum_{s'_2 : f(s'_2) = C}
    P(s'_1, s'_2 | s_1, s_2, a) 
    U^*_{k+1}(s'_1, f(s'_2)) \\
    &=
    \sum_{C \in D}
    U^*_{k+1}(s'_1, C)
    \sum_{s'_2 : f(s'_2) = C}
    P(s'_1, s'_2 | s_1, s_2, a) \\
    &=
    \sum_{C \in D}
    U^*_{k+1}(s'_1, C)
    P(s'_1, f(s'_2)=C | s_1, s_2, a).
\end{align*}
Using the assumption, we have that $P(s'_1, f(s'_2)=C | s_1, s_2, a) = P(s'_1, f(s'_2)=C | s_1, f(s_2), a)$. Then,
\begin{align*}
    \sum_{C \in D}\sum_{s'_2 : f(s'_2) = C}
    P(s'_1, s'_2 | s_1, s_2, a) 
    V^*_{\text{aug},k+1}(s'_1, s'_2)
    &=
    \sum_{C \in D}
    U^*_{k+1}(s'_1, C)
    P(s'_1, f(s'_2)=C | s_1, f(s_2), a) \\
    &=
    \sum_{s_2' \in \sset_2}
    P(s'_1, s'_2 | s_1, f(s_2), a) 
    V^*_{\text{aug},k+1}(s'_1, s'_2).
\end{align*}
where the last step follows a similar argument as above. Combining the above we get that
\begin{align*}
        V^*_k(s_1, s_2)
        &=
        V^*_{\text{aug},k+1}(s'_1, s'_2) \\
        &=
        \max_{a \in \A} 
        g(s_1, a) 
        + 
        \sum_{s'_1, s'_2 \in \sset_1 \times \sset_2} P(s'_1, s'_2 | s_1, f(s_2), a) 
        V^*_{\text{aug},k+1}(s'_1, s'_2) \\
        &=
        \max_{\eta: \sset_1 \times D \mapsto [0,1]}
        g(s_1, \eta(s_1, f(s_2))) 
        + 
        \sum_{s'_1, s'_2 \in \sset_1 \times \sset_2} P(s'_1, s'_2 | s_1, f(s_2), \eta(s_1, f(s_2))) 
        V^*_{\text{aug},k+1}(s'_1, s'_2) \\
        &=
        \max_{\eta: \sset_1 \times D \mapsto [0,1]}
        r(s_1, s_2, \eta(s_1, f(s_2))) 
        + 
        \sum_{s'_1, s'_2 \in \sset_1 \times \sset_2} P(s'_1, s'_2 | s_1, s_2, \eta(s_1, f(s_2))) 
        V^*_{\text{aug},k+1}(s'_1, s'_2) \\
        &=
        \max_{\pi \in \Pi_{\text{aug}}}
        r(s_1, s_2, \pi(s_1, s_2)) 
        + 
        \sum_{s'_1, s'_2 \in \sset_1 \times \sset_2} P(s'_1, s'_2 | s_1, s_2, \pi(s_1, s_2)) 
        V^*_{\text{aug},k+1}(s'_1, s'_2) \\
        &=
        V^*_{\text{aug},k}(s_1, s_2)
    \end{align*}
    This completes the proof.
\end{proof}

\begin{lemma}
    \label{lemma: dependence on f}
    Let $\M = (\sset_1 \times \sset_2, \A, P, r, H)$. Assume for any $s_1, s_2 \in \sset_1 \times \sset_2$, $a \in \A$, $P(s'_1, f(s'_2) | s_1, s_2, a) = P(s'_1, f(s'_2) | s_1, f(s_2), a)$ and $r(s_1, s_2, a) = g(s_1, a)$, for some deterministic function ${g:\sset_1 \times \A \mapsto [0,1]}$. Then, for any $k \in [H]$, there exists $U^*_k: \sset_1 \times D \mapsto \R$ such that $V^*_{\text{aug},k}(s_1, s_2) = U^*_k(s_1, f(s_2))$, for all $s_1, s_2 \in \sset_1 \times \sset_2$.
\end{lemma}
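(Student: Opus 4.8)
The plan is to prove the claim by backward induction on $k$, running from $k=H$ down to $k=1$, and showing at each step that $V^*_{\text{aug},k}(s_1,s_2)$ depends on the second coordinate only through $f(s_2)$. For the base case $k=H$ there is no future to account for, so $V^*_{\text{aug},H}(s_1,s_2)=\max_{a\in\A} r(s_1,s_2,a)=\max_{a\in\A} g(s_1,a)$, which depends on $s_1$ alone; setting $U^*_H(s_1,C)=\max_{a\in\A} g(s_1,a)$ settles this case. The work is then entirely in propagating this $f$-measurability one step backward.

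For the inductive step I would assume $V^*_{\text{aug},k+1}(s_1,s_2)=U^*_{k+1}(s_1,f(s_2))$ and first record the Bellman recursion over the restricted class $\Pi_{\text{aug}}$. Since any decision rule $\eta:\sset_1\times D\mapsto\A$ enters the one-step backup only through the single value $\eta(s_1,f(s_2))\in\A$, the maximization over $\eta$ collapses to a pointwise maximization over actions, giving
\begin{align*}
    V^*_{\text{aug},k}(s_1,s_2)=\max_{a\in\A}\brk[s]*{ g(s_1,a)+\sum_{s'_1,s'_2} P(s'_1,s'_2\mid s_1,s_2,a)\, V^*_{\text{aug},k+1}(s'_1,s'_2)}.
\end{align*}
Substituting the induction hypothesis and grouping the transition sum over the level sets $\brk[c]*{s'_2: f(s'_2)=C'}$ rewrites the continuation term as $\sum_{s'_1}\sum_{C'\in D} U^*_{k+1}(s'_1,C')\,P(s'_1,f(s'_2)=C'\mid s_1,s_2,a)$. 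The transition assumption $P(s'_1,f(s'_2)=C'\mid s_1,s_2,a)=P(s'_1,f(s'_2)=C'\mid s_1,f(s_2),a)$ then removes the last explicit dependence on $s_2$, so the entire bracketed quantity, hence its maximum over $a$, is a function of $(s_1,f(s_2))$ only. Defining $U^*_k(s_1,C)=\max_{a\in\A}\brk[s]*{g(s_1,a)+\sum_{s'_1,C'} U^*_{k+1}(s'_1,C')\,P(s'_1,f(s'_2)=C'\mid s_1,C,a)}$ closes the induction.

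The step I expect to require the most care is justifying the restricted-class Bellman recursion, and in particular the direction asserting that the pointwise optimal action is \emph{realizable} by a single admissible rule measurable in $(s_1,f(s_2))$. This is exactly where the two hypotheses interlock: because $V^*_{\text{aug},k+1}$ already collapses to $U^*_{k+1}(s_1,f(s_2))$ and the kernel into $(s'_1,f(s'_2))$ depends on $s_2$ only through $f(s_2)$, the backup value is constant across all states sharing a given $(s_1,f(s_2))$, so a common maximizing action can be assigned to them and assembled into a valid $\eta\in\Pi_{\text{aug}}$. I would therefore prove the recursion and the $f$-measurability \emph{together} within the induction, rather than invoking a generic Bellman optimality principle, since the latter is stated for the unrestricted policy class and does not by itself respect the constraint defining $\Pi_{\text{aug}}$. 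The remaining manipulations---regrouping by level sets, the well-definedness of $P(\,\cdot\mid s_1,f(s_2),a)$ guaranteed by the assumption, and the two inequality directions comparing feasibility against optimality---are routine once this structural observation is in place.
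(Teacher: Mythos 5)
Your proof follows essentially the same route as the paper's: backward induction on $k$ with the identical base case at $k=H$, the same regrouping of the transition sum over level sets $\{s'_2 : f(s'_2)=C\}$, the same application of the kernel assumption to erase the residual dependence on $s_2$, and the same closing definition of $U^*_k$. The only difference is that you explicitly justify the restricted-class Bellman recursion (realizability of a common greedy action across states sharing $(s_1, f(s_2))$, collapsing the maximum over $\Pi_{\text{aug}}$ to a pointwise maximum over actions), a step the paper asserts without comment --- a welcome refinement of rigor rather than a different approach.
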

\begin{proof}
    We prove by induction on $k$. For $k = H$, the result follows trivially as $V^*_{\text{aug},H}(s_1, s_2) = \max_a g(s_1, a)$ for all $s_1, s_2 \in \sset_1 \times \sset_2$. Otherwise, we the $V^*_{\text{aug},k+1}(s_1, s_2) = U^*(s_1, f(s_2))$ is true for some $k \in \brk[c]*{1, \hdots, H-1}$. Then,
    \begin{align*}
        V^*_{\text{aug},k}(s_1, s_2)
        &=
        \max_{\pi \in \Pi_{\text{aug}}}
        r(s_1, s_2, \pi(s_1, f(s_2)) 
        + 
        \sum_{s'_1, s'_2 \in \sset_1 \times \sset_2} P(s'_1, s'_2 | s_1, s_2, \pi(s_1, f(s_2)) 
        V^*_{k+1}(s'_1, s'_2) \\
        &\overset{\text{induction step}}{=}
        \max_{\pi \in \Pi_{\text{aug}}}
        r(s_1, s_2, \pi(s_1, f(s_2)) 
        + 
        \sum_{s'_1, s'_2 \in \sset_1 \times \sset_2} P(s'_1, s'_2 | s_1, s_2, \pi(s_1, f(s_2)) 
        U^*_{k+1}(s'_1, f(s'_2)) \\
        &=
        \max_{\pi \in \Pi_{\text{aug}}}
        g(s_1, \pi(s_1, f(s_2)) 
        + 
        \sum_{s'_1\in \sset_1} 
        \sum_{C \in D}
        \sum_{s'_2 : f(s'_2) = C}
        P(s'_1, s'_2 | s_1, s_2, \pi(s_1, f(s_2)) 
        U^*_{k+1}(s'_1, f(s'_2)) \\
        &=
        \max_{\pi \in \Pi_{\text{aug}}}
        g(s_1, \pi(s_1, f(s_2)) 
        + 
        \sum_{s'_1\in \sset_1} 
        \sum_{C \in D}
        U^*_{k+1}(s'_1, C) 
        \sum_{s'_2 : f(s'_2) = C}
        P(s'_1, s'_2 | s_1, s_2, \pi(s_1, f(s_2)) \\
        &=
        \max_{\pi \in \Pi_{\text{aug}}}
        g(s_1, \pi(s_1, f(s_2)) 
        + 
        \sum_{s'_1\in \sset_1} 
        \sum_{f(s'_2) \in D}
        P(s'_1, f(s'_2) | s_1, s_2, \pi(s_1, f(s_2)) 
        U^*_{k+1}(s'_1, f(s'_2)) \\
        &=
        \max_{\pi \in \Pi_{\text{aug}}}
        g(s_1, \pi(s_1, f(s_2)) 
        + 
        \sum_{s'_1\in \sset_1} 
        \sum_{f(s'_2) \in D}
        P(s'_1, f(s'_2) | s_1, f(s_2), \pi(s_1, f(s_2)) 
        U^*_{k+1}(s'_1, f(s'_2))
    \end{align*}
    By defining 
    \begin{align*}
        U^*_k(s_1, f(s_2)) 
        = 
        \max_{\pi \in \Pi_{\text{aug}}}
        g(s_1, \pi(s_1, f(s_2)) 
        + 
        \sum_{s'_1\in \sset_1} 
        \sum_{f(s'_2) \in D}
        P(s'_1, f(s'_2) | s_1, f(s_2), \pi(s_1, f(s_2)) 
        U^*_{k+1}(s'_1, f(s'_2)), 
    \end{align*}
    we see that $V^*_{\text{aug},k}(s_1, s_2) = U^*_k(s_1, f(s_2))$, for any $s_1, s_2 \in \sset_1, \sset_2$, and the proof is complete.
\end{proof}

\newpage
\section{Dynamic Discount Factor}
\label{appendix: termination bellman equations}

In this section we derive the Termination Bellman Equations, as defined in \Cref{sec: discount factor}. The state action value function for a TerMDP with known costs (\Cref{appendix: known costs}) is defined for $s, C \in \sset \times \R$, $a \in \aset$, $h \in [H]$, and any policy $\pi$ by
\begin{align*}
    Q_h^\pi(s, C, a)
    =
    \expect*{\pi}{
        \sum_{t=h}^H 
        \brk*{\prod_{j=h+1}^t \gamma_j } 
        r_t(s_t, a_t) 
        | s_h = s, C_h = C, a_h = a },
\end{align*}
where we denote
\begin{align}
\label{eq: discount factor}
    \gamma_j 
    = 
    1-\rho\brk*{C_j - b}.
\end{align}
Then, for any $h \in [H-1]$ we have that
\begin{align*}
    &Q_h^\pi(s, C, a) \\
    &=
    \expect*{\pi}{
    \sum_{t=h}^H 
    \brk*{\prod_{j=h+1}^t \gamma_j } 
    r_t(s_t, a_t) 
    | s_h = s, C_h = C, a_h = a } \\
    &=
    r_h(s,a)
    +
    \gamma_{h+1}
    \expect*{\pi}{
    \sum_{t=h+1}^H 
    \brk*{\prod_{j=h+2}^t \gamma_j } 
    r_t(s_t, a_t) 
    | s_h = s, C_h = C, a_h = a } \\
    &=
    r_h(s,a) + \gamma_{h+1} 
    \expect*{s' \sim P(\cdot | s,a), a' \sim \pi(s')}{
    \expect*{\pi}{
    \sum_{t=h+1}^H 
    \brk*{\prod_{j=h+2}^t \gamma_j } 
    r_t(s_t, a_t) 
    | \substack{ s_h = s, C_h = C, a_h = a, \\ s_{h+1} = s', C_{h+1} = C + s(s', a'), a_{h+1} = a'} }} \\
    &=
    r_h(s,a)
    +
    \gamma_{h+1}
    \expect*{s' \sim P(\cdot | s,a), a' \sim \pi(s')}{
    \expect*{\pi}{
    \sum_{t=h+1}^H 
    \brk*{\prod_{j=h+2}^t \gamma_j } 
    r_t(s_t, a_t) 
    | s_{h+1} = s', C_{h+1} = C + s(s', a'), a_{h+1} = a'}} \\
    &=
    r_h(s,a)
    +
    \gamma_{h+1}
    \expect*{s' \sim P(\cdot | s,a), a' \sim \pi(s')}{
    Q_{h+1}^\pi(s', C+c(s',a'), a')} \\
    &=
    r_h(s,a)
    +
    \brk*{1-\rho(C-b)}
    \expect*{s' \sim P(\cdot | s,a), a' \sim \pi(s')}{
    Q_{h+1}^\pi(s', C+c(s',a'), a')}.
\end{align*}
We get the Termination Bellman Equations
\begin{align*}
    Q_h^\pi(s, C, a)
    =
    r_h(s,a)
    +
    \brk*{1-\rho(C-b)}
    \expect*{s' \sim P(\cdot | s,a), a' \sim \pi(s')}{
    Q_{h+1}^\pi(s', C+c(s',a'), a')}.
\end{align*}

Similarly, we can define an infinite horizon setting, for which
\begin{align*}
    Q^\pi(s, C, a)
    =
    \expect*{\pi}{
        \sum_{t=0}^\infty 
        \brk*{\prod_{j=2}^t \gamma_j } 
        r_t(s_t, a_t) 
        | s_1 = s, C_1 = C, a_1 = a },
\end{align*}
which yields the infinite horizon Termination Bellman Equations
\begin{align*}
    Q^\pi(s,C,a) 
    =
    r(s,a) 
    + 
    \brk*{1-\rho(C-b)}\expect*{s' \sim P(\cdot | s,a), a' \sim \pi(s')}{Q^\pi(s', C + c(s', a'), a')}
\end{align*}

Having defined the Termination Bellman Equations, we use them for value estimation. Specifically, we consider estimating the value by minimizing the TD-error, for some $s, C, a, s', a'$
\begin{align*}
    \abs{
    Q(s,C,a) 
    - 
    r(s,a) 
    -
    \brk*{1-\rho(C-b)}
    {Q(s', C + c(s', a'), a')}}.
\end{align*}
In our implementation, we used Generalized Advantage Estimation (GAE, \citet{schulman2015high}), which uses an exponential average over estimates of the TD-error. We used the discount factor in \Cref{eq: discount factor} for the TD estimates. That is, an exponential average over
\begin{align*}
    \hat{A}^{(n)} = 
    \sum_{t=0}^\infty 
    \brk*{\prod_{j=2}^{n-1} \gamma_j } 
    r_t(s_t, a_t) 
    + 
    \brk*{\prod_{j=2}^{n} \gamma_j } 
    \hat{V}(s_n)
    -
    \hat{V}(s_1).
\end{align*}
See \citet{schulman2015high} for specific implementation details of GAE.

\newpage
\section{Implementation Details}
\label{appendix: implementation details}

In this section we describe in detail the implementation details of the used environments, cost functions, algorithm, and human termination procedure.

\subsection{Environments}

We begin by describing the environments and cost functions used in our paper. In all environments we used a bias $b = 6$ for termination, which was unknown to the training agent. For all our environments (except human termination, see \Cref{appendix: human}), we used a window size of $w = 30$.

\paragraph{Backseat Driver.}
The environment was built using Unity's MLAgents \citep{juliani2018unity}. We incorporated a reward and cost function in the environment as follows. A reward of $+0.1$ was given to the agent for any car that was onscreen and behind it (i.e., the agent was awarded for overtaking other vehicles). The environment was set to accumulate a cost of $+1$ for every coin that was collected. No negative reward was given by the environment upon death or termination. 

In Backseat Driver, the agent can take one of three actions: change lane left, change lane right, or stay in place. If the agent hits another car the episode is terminated and the agent is reset. The state is represented by a stack of three binary maps consisting of the agent position, the other vehicle positions, and the coin positions (top view). Similar to previous work, we used the past four time steps (stacked) as the agent's state. That is, at time $t$, the agent receives 12 binary maps consisting of the agent position, other vehicle positions, and coin positions for steps $t, t-1, t-2, t-3$. For efficient learning, we repeated every action four times in the environment.

\paragraph{MinAtar.} We used four of the original MinAtar benchmarks \citep{young19minatar}, enforcing termination using a predefined cost-function. For each of the environments, we used a cost function which was dependent on the agent's position w.r.t. objects and areas of the state. Specifically, we defined costs as follows.
\begin{itemize}
    \item \textbf{Space Invaders:} The agent controls a spaceship and must shoot other alien spacecrafts, while they also shoot bullets at the agent.
    
    The agent was penalized ($c = 1$) for bullets passing at distance $d = 1$ from its current position. That is, near-misses of enemy bullets penalized the player. Ideally, the agent must learn to play the game while avoiding dangerous states in which one erroneous action can lose the game.
    
    \item \textbf{Seaquest:} In this environment, the agent controls a submarine. The agent must shoot enemies, avoid hitting objects (enemies, fish, or bullets), and carefully replenish its oxygen by moving to the surface. 
    
    The agent was penalized ($c = 1$) whenever it was positioned mid-depth. That is, denoting by $D$ the maximum reachable depth, the agent was penalized whenever it was positioned at depth $D / 2$. Ideally, the agent would remain close to the surface or deep in the water, avoiding unnecessary switches, to minimize termination.
    
    \item \textbf{Breakout:} In this game, the agent controls a paddle and can move it (left or right), bouncing a ball, which can hit rows of bricks. The agent is rewarded for breaking these bricks. If the paddle misses the ball, the agent loses and resets.
    
    The agent was penalized ($c = 1$) whenever it was positioned at the far-most left or right positions of the screen, i.e., close to the edges of the screen. Ideally, the agent would remain close to the center of the screen, where it can quickly reach the ball, and avoid getting ``stuck" at the sides of the screen.
    
    \item \textbf{Asterix:} The agent can move in any of the directions: up, down, left, or right, while avoiding spawned enemies, and picking up treasure. If the player hits an enemy, the agent loses and resets.
    
    The agent was penalized ($c = 1$) whenever it was positioned at distance $d = 1$ from an enemy. Similar to previous environments, this cost function was used to encourage the agent to stay away from enemies. Ideally, the agent should not be too close to enemies, for which one erroneous action could lose the game. 
\end{itemize}

\subsection{TermPG}

TermPG (\Cref{alg: Termination PG}) uses two key elements - a policy gradient algorithm (\texttt{ALG-PG}), and a training procedure for learning the costs and designing a dynamic cost-dependent discount factor, as described in \Cref{sec: termpg}. For the policy gradient algorithm, we used Proximal Policy Optimization (PPO, see \citet{schulman2017proximal}), implemented in RLlib \citep{liang2018rllib}. We mostly used the default hyperparameters -- full specification is given in \Cref{table: ppo hyperparameters}. We used the same hyperparemeters for the TermPG variants and the standard PG variants described in \Cref{sec: experiments}, with one important exception -- the PG variants used a constant discount factor $\gamma = 0.99$, whereas the TermPG variants did not use a discount factor \footnote{We also used a constant discount factor of $\gamma = 0.99$ for the ablation test of removing the dynamic discount factor in TermPG (\Cref{sec: experiments}, \Cref{table: results}).}, as the dynamic discount factor was used instead (\Cref{sec: discount factor}). The agent used a convolutional neural network (CNN), with hyperparameters given \Cref{table: model hyperparameters}.

\begin{table*}[t!]
\caption{\label{table: ppo hyperparameters} Hyperparameters used to train PPO agent}
\centering
\hspace*{-0.8cm}  
\begin{tabular}{|c|c|c|}
\hline 
\toprule
    {\bfseries Name} & {\bfseries Value}  & {\bfseries Comments}\\
    \midrule\midrule[.1em]
    Batch size & $32$ &  \\
    \hline
    Learning rate & $\num{5e-5}$ &  \\
    \hline
    Rollout size & $1024$ &  \\
    \hline
    Num epochs & $3$ & How many training epochs to do after each rollout \\
    \hline
    Entropy coef. & $0$ &  \\
    \hline
    kl coef & $0.2$ & Initial coefficient for KL divergence \\
    \hline
    kl target & $0.01$ & Target value for KL divergence \\
    \hline
    GAE $\lambda$ & $1$ & The GAE (lambda) parameter \\
    \hline
    Num workers & $8$ & \\
    \hline
    \bottomrule
\end{tabular}
\end{table*}

\begin{table*}[t!]
\caption{\label{table: model hyperparameters} CNN architecture hyperparameters for agent policy and cost networks }
\centering
\hspace*{-0.8cm}  
\begin{tabular}{|c|c|}
\hline 
\toprule
    Input size & $42 \times 42$  \\ \hline
    Num Filters & $16, 32, 256$  \\ \hline
    Stride & $4, 4, 11$  \\ \hline
    Padding & $2, 2, 1$  \\ \hline
    Activations & Relu \\ \hline
    Post-network FC hidden & $400, 256$  \\ \hline
\bottomrule
\end{tabular}
\end{table*}

For learning the costs, we used a model of three identical cost networks. At every iteration, rollouts that were collected in the environments were added to a finite buffer (FIFO) for training the cost networks. We labeled the rollouts according to whether they ended in termination. For training the cost networks we then sampled a (different) rollout from the buffer for each of the cost networks. We then split each of the rollouts to their respective windows ($t^* - 1$ negative labels and $1$ positive label for a trajectory ending in termination). Finally, we trained them the cost networks end-to-end using the cross entropy loss (see \Cref{fig: cost diagram}). We repeated this procedure for $30$ steps. We used the same network for training the cost model as we used for the PPO agent, with hyperparameters given in \Cref{table: model hyperparameters}. Hyperparameters for the TermPG algorithm are given in \Cref{table: termpg hyperparameters}.

\begin{table*}[t!]
\caption{\label{table: termpg hyperparameters} TermPG hyperparameters }
\centering
\hspace*{-1.8cm}  
\begin{tabular}{|c|c|c|}
\hline 
\toprule
    {\bfseries Name} & {\bfseries Value}  & {\bfseries Comments}\\
    \midrule\midrule[.1em]
    Learning rate & $\num{1e-3}$ &  \\
    \hline
    Bonus coefficient & $1$ & Coefficient used for cost bonus (optimism) \\
    \hline
    Num Ensemble & $3$ &  \\
    \hline
    Cost-net train steps & $30$ &  \\
    \hline
    Cost-net batch size & $t^*$ &  Each network receives all windows in trajectory ($t^*$ examples for every network) \\
    \hline
    Replay buffer size & $1000$ & Number of trajectories held in buffer \\
    \hline
    Window size & $30$ &  \\
    \hline
    \bottomrule
\end{tabular}
\end{table*}

\subsection{Human Termination Data}
\label{appendix: human}

In this subsection we describe in detail our process of creating and training the human termination data for Backseat Driver. To begin, we used the Backseat Environment without termination (i.e., we considered the standard infinite horizon MDP). We trained an agent using PPO on the infinite horizon environment (without termination), saving snapshots of the agent during training. Once training was over we used the snapshots to generate trajectories from the different agents, constructing a large dataset of over 5000 trajectories of different quality.

To obtain termination feedback, we randomly sampled trajectories from the generated dataset and played them back to a human supervisor, who was instructed to terminate the agent. The instructions for termination were to terminate the agent whenever it drives in a manner that is uncomfortable. In many scenarios the human supervisors terminated the agent when it switched lanes back and forth for no apparent reason (e.g., not for overtaking other vehicles). We used five different human supervisors for labeling random trajectories in the dataset, generating a total of 512 positive termination examples.

Once the data generation was complete, we trained a classifier for the termination problem with different window size. We found that a window size of $w=35$ best fit the termination accuracy (on a validation set). Finally, we used the trained model as an estimated termination signal for Backseat Driver. We emphasize that, while we used the same termination window ($w = 35$) for training the TermPG agent, we ran experiments with $\times x2, \times 0.5$ misspecification of window size, showing that these did not hurt performance (see \Cref{sec: experiments}, \Cref{table: results}).

\newpage
\section{Additional Results}
\label{appendix: additional results}

\begin{figure}[t!]
\centering
\includegraphics[width=0.5\linewidth]{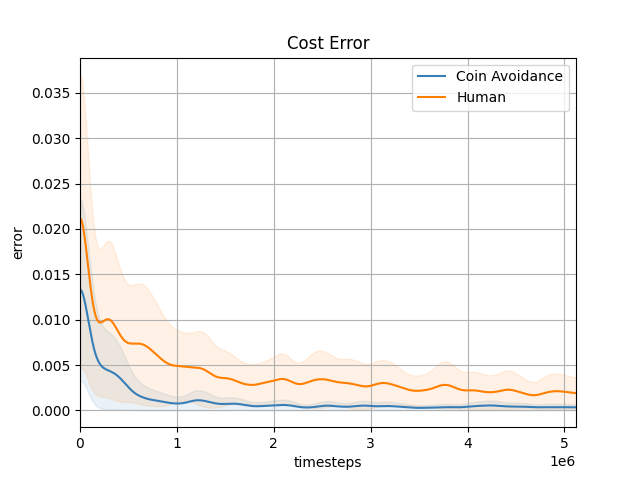}
\caption{\small Cost Error for Backseat Driver}
\label{fig: cost error}
\end{figure}

\paragraph{Cost Error.} We plot the cost error ($l_2$ norm) of the estimated costs in Backseat Driver in both the synthetic and human termination settings in \Cref{fig: cost error}. It can be seen that costs converge quickly, allowing for efficient learning. The speedy convergence suggests that TermPG can utilize the costs efficiently, enabling it to converge quickly to a good solution, as seen in our experiments in \Cref{sec: experiments}.

\paragraph{Cost Bonus.} In TermPG (\Cref{alg: Termination PG}) optimism is used by augmenting the state with an optimistic cost function. Nevertheless, this information is implicit, as it only passes through the state. We tested the affect of adding optimism in costs (to encourage exploration in areas of uncertainty in costs) directly through the reward function. Specifically, we used an uncertainty (defined by the ensemble of cost networks) to add a bonus to the reward
\begin{align*}
    r(s_t,a_t) \gets r(s_t,a_t) + \alpha U(C_t),
\end{align*}
where $U(C_t)$ is either the max-min difference or the standard deviation in the ensemble outputs, and $\alpha > 0$ is a hyperparameter for choosing the degree of optimism in the reward.

Results for training TermPG with a cost bonus are presented in \Cref{table: cost bonus}. Evidently, adding a cost bonus mostly hurt performance, even with small values of $\alpha$. These results suggest that adding additional cost bonus to the reward is not beneficial, and using optimism in the states is sufficient.

\begin{table*}[h!]
\caption{\label{table: cost bonus} Cost bonus for TermPG (other TermPG results are provided for reference.) }
\centering
\hspace*{-0.8cm}  
\begin{tabular}{|c|cc|cccc|}
\hline 
\multicolumn{1}{|c|}{} & \multicolumn{2}{c|}{\bf {\small Backseat Driver} } & \multicolumn{4}{c|}{\bf {\small MinAtar}} \\
\hline \hline 
\bf {\small Experiment} & \bf {\small Coin Avoid.}  & \bf{\small Human} &  \bf {\small Space Inv.}   & \bf {\small Seaquest}   & \bf {\small Breakout} & \bf {\small Asterix}    \\ \hline
\rowcolor{Gray}
TermPG & 
$\mathbf{8.7 \pm 1.4}$ & $8.3 \pm 1.3$ &
$\mathbf{9.7 \pm 1.1}$   & $\mathbf{1.4 \pm 0.8}$  & $\mathbf{8.2 \pm 0.3}$ & $1 \pm 0.2$   \\ \hline
\rowcolor{Gray}
TermPG + RS & 
$\mathbf{8.4 \pm 1.3}$   & $7.7 \pm 0.3$    & 
$\mathbf{11.8 \pm 0.8}$   & $0.3 \pm 0.6$  & $5.1 \pm 1$ & $0.8 \pm 0.1$   \\ \hline
\rowcolor{Gray}
TermPG + Penalty & 
$6 \pm 0.8$ & $\mathbf{11.8 \pm 1.5}$ &
$7.7 \pm 1.4$   & $\mathbf{2.4 \pm 1}$  & $2.3 \pm 2.3$ & $\mathbf{1.7 \pm 0.1}$ \\ \hline
Cost Bonus with $\alpha = 0.1$ & 
$6.6 \pm 0.45$ & $7.9 \pm 0.7$ & 
$6.5 \pm 1.46$ & $1.4 \pm 1.1$ & $2.2 \pm 0.2$ & $0.9 \pm 0.2$    \\ \hline   
Cost Bonus with $\alpha = 1$ & 
$7.3 \pm 0.7$ & $4.8 \pm 1.14$ & 
$2.8 \pm 0.02$ & $1.9 \pm 0.1$ & $0.2 \pm 0.05$ & $0.6 \pm 0.3$     \\ \hline  
\end{tabular}
\end{table*}%

\newpage
\section{Additional Theoretical Notations}

This section adds further notations needed for the sections that follow. Particularly, we define notations that are not provided in the paper for clarity, yet are beneficial for our theoretical analysis.

Throughout the paper, we work w.r.t. the natural filtration 
\begin{align*}
    \F_k
    =
    \sigma\brk*{\brk[c]*{(s_h^1,a_h^1,R_h^1)}_{h=1}^H,\dots,\brk[c]*{(s_h^k,a_h^k,R_h^k)}_{h=1}^H,s_1^{k+1}}.
\end{align*}

We use the notation $\tau_{1:h}^k$ when referring to the trajectory of the agent at the $k^{\text{th}}$ episode. For brevity, we denote $\tau^k=\tau_{1:H}^k$. Similarly, we denote the empirical visitation up to the $h^{\text{th}}$ time step $\hat d_h^k(t,s,a) = \indicator{t\leq h,s_t^k=s,a_t^k=a}$. With abuse of notation, we assume the bias term $b$ is consumed by the cost $c$, such that the first state always contains the bias term, i.e., $c(s_1, a_1) := c(s_1, a_1) - b$. Using this notation, we can write the termination probability after the $h^{\text{th}}$ time step in vector notation as $\rho\brk1{\sinner{\hat d_h^k,c}}$. We then define the total transition function by $T_h^{P, c}(s' | s_h^k, a_h^k, \tau_{1:h}^k) = \brk*{1-\rho\brk1{\sinner{\hat d_h^k, c}}}P_h(s' | s_h^k, a_h^k)$ for any non-terminal $s'\in\sset$ and $T_h^{P, c}(\terminalstate | s_h^k, a_h^k, \tau_{1:h}^k) = \rho\brk1{\sinner{\hat d_h^k, c}}$. For brevity, when working with the real kernel and costs $P,c$, we omit them from the notation and use $T_h$.  Finally we denote by $\bar X$ optimism for some quantity $X$, e.g., $\bar V$ is the optimistic value.

\newpage
\section{Approximate Planning in TerMDPs}
\label{appendix: planning}

We consider the following planning procedure for a known TerMDP $\terMDP=(\sset,\aset, P,R,H,c)$ (see \Cref{appendix: known costs}). 
For the approximate planning, we quantize the costs in this TerMDP to a resolution $\Delta c$.
Concretely, denote the lattice of resolution $\Delta x$ by $\mathcal{G}_{\Delta x} = \brk[c]*{\hdots, -2 \Delta x, \Delta x, 0, \Delta x, 2 \Delta x, \hdots}$. We then define the discretized cost function, for a discretization parameter $\Delta c$ as
\begin{align*}
    c_q = \floor*{\frac{c}{\Delta c}} \Delta c
\end{align*}
In this case, the accumulated cost must also lie on a grid such that $C = \sum_t c_{q,t} \in \mathcal{G}_{\Delta c}$. We denote by $c_{\max} = \max_{s,a} \abs{c(s,a)}$. Notice that $c_{\max} \leq L$, where $L$ is known, but $c_{\max}$ is usually much smaller. We denote the number of bins of the costs by $N$, which is generally bounded by $N \leq 2\frac{c_{\max}}{\Delta c} + 1$. For non-negative costs, we will show that a much smaller $N$ suffices (\Cref{appendix: non negative cost planning}). Then, due to the grid structure, the accumulated costs can obtain at most $HN$ different values. This allows us to perform standard planning procedure in episodic MDPs (using dynamic programming/value iteration) with a state space of size $NHS$, which requires computation complexity of $\Ob\brk*{H^3AS^2N^2}$ \citep{puterman2014markov}. In the rest of this section, we show how to choose $\Delta c$ so that this approximate planner will yield a near-optimal policy. Particularly, we show that $\Delta c = \Ob\brk{1/\sqrt{K}}$ maintains the same regret guarantees (namely, incurs lower-order regret penalty) while enabling tractable planning.

\subsection{Approximation Bounds for Quantized TerMDPs}

Denote the quantized TerMDP by $\terMDP^q=(\sset,\aset, P,R,H,c_q)$. For a policy $\pi$, we denote by $V^\pi, V_q^\pi$ its value in $\terMDP$ and $\terMDP^q$, respectively. 
We define a policy $\pi_q$ as the solution to the planning problem in the quantized TerMDP. In what follows we show that for any $\epsilon > 0$ one can ensure $V_1^*(s_1) - V_{1}^{\pi_q}(s_1) < \epsilon$ for some choice of $\Delta c$.

Notice that we always rounded-down the costs (decreased the termination probability) and, thus, $V^\pi\leq V_q^\pi$ for any $\pi$. Particularly, for $\pi^*$, we get that 
\begin{align*}
    V^*
    =
    V^{\pi^*}
    \overset{\text{optimism}}{\leq} 
    V_q^{\pi^*}
    \overset{\substack{\text{optimality of} \\ \text{ $\pi_q$ in $\terMDP^q$}}}{\leq} 
    V_q^{\pi_q} 
    = 
    V^{\pi_q} + \brk*{V_q^{\pi_q}-V^{\pi_q}}.
\end{align*}
Therefore, it is enough to show that $V_q^{\pi_q}-V^{\pi_q} < \epsilon$. We will show that a stronger results, that $V_q^{\pi}-V^{\pi} < \epsilon$ for any $\pi$.

By the value difference lemma (\Cref{lemma: value difference lemma}), for any policy $\pi$
\begin{align*}
    V_{q,1}^\pi(s_1) - V_1^\pi(s_1) 
    &= 
    \sum_{h=1}^H \E{\brk*{T_h^{P, c_q} - T_h^{P, c}}(\cdot\vert s_h,a_h,\tau_{1:h})^T
    V_1^\pi(\cdot, \tau_{1:h+1})} \\
    &\leq
    \sum_{h=1}^H \E{\norm{\brk*{T_h^{P, c_q} - T_h^{P, c}}(\cdot\vert s_h,a_h,\tau_{1:h})}_1\norm{V_1^\pi(\cdot, \tau_{1:h+1})}_{\infty}} \\
    &\leq
    H \sum_{h=1}^H \E{\norm{\brk*{T_h^{P, c_q} - T_h^{P, c}}(\cdot\vert s_h,a_h,\tau_{1:h})}_1} \\
    & =
    H\sum_{h=1}^H \E{\abs{\rho\brk1{\sinner{\hat d_h,c_q}} - \rho\brk1{\sinner{\hat d_h, c}}} \underbrace{\norm{P_h(\cdot\vert s_h,a_h)}_1}_{=1}+\abs{\brk*{1-\rho\brk1{\sinner{\hat d_h,c_q}}} - \brk*{1-\rho\brk1{\sinner{\hat d_h, c}}}}}\\
    & = 
    2H\sum_{h=1}^H \E{\abs{\rho\brk1{\sinner{\hat d_h,c_q}} - \rho\brk1{\sinner{\hat d_h, c}}}}
\end{align*}

\subsubsection{General Case: Signed Costs}
By the Lipschitz-continuity of the logistic function, we have that 
\begin{align*}
    V_{q,1}^\pi(s_1) - V_1^\pi(s_1) 
    &\leq
    2H\sum_{h=1}^H \E{\abs{\rho\brk1{\sinner{\hat d_h,c_q}} - \rho\brk1{\sinner{\hat d_h, c}}}} \\
    & \leq 
    \frac{H}{2} \sum_{h=1}^H \E{\abs{\sinner{\hat d_h,c_q-c}}} \\
    & \leq
    \frac{H}{2} \sum_{h=1}^H \E{\underbrace{\norm{\hat d_h}_1}_{=h}\underbrace{\norm{c_q-c}_\infty}_{\leq\Delta c}} \\ 
    & \leq
    \frac{H^3\Delta c}{2}
\end{align*}

We get that, for any $\epsilon > 0$, using $\Delta c \leq \frac{2\epsilon}{H^3}$ we get that
$
    V_{q,1}^\pi(s_1) - V_1^\pi(s_1) \leq \epsilon.
$ Note that, this choice of grid resolution induces $N = \Ob\brk*{H^3\frac{c_{\max}}{\epsilon}}$

\subsubsection{Better Approximation for Non-Negative Costs}
\label{appendix: non negative cost planning}

We remind the reader that in \Cref{appendix: additional results} we incorporated the bias $b$ into the costs $c$, such that $c(s_1, a_1) := c(s_1, a_1) - b$. This was helpful for utilizing inner product notations in our analysis. In this part, we assume the costs are positive, yet we do not limit the choice of bias $b$. Specifically, we assume that for all $h \geq 2$, and for all $s, a \in \sset \times \aset$, $c(s_h, a_h) \geq 0$. We further assume that for all $s, a \in \sset \times \aset$, $c(s_1, a_1) \geq - b$. This relates to a TerMDP with positive costs and an arbitrary bias $b$.

Let $C^* \in (0, Hc_{\max}]$ (will be explicitly chosen later). We show that, in the case of non-negative costs, it can be beneficial to clip the accumulated cost, once it is larger than $C^*$ (even if it is smaller than $Hc_{\max}$). Particularly, if $\sum_{t=1}^h c_{q,t}(s_t,a_t)\ge C^*$, we represent the accumulated cost by $C^*$. This, in turn, implies that the accumulated costs can have at most $N=\floor*{\frac{C^*+b}{\Delta c}}+1$ bins.

Notice that by the definition of the logistic function, for any $C_0$ and any $C\ge C_0$, it holds that
\begin{align*}
    \rho(C) \ge \rho(C_0) = \brk*{1+\exp(-C_0)}^{-1} \ge 1-\exp(-C_0).
\end{align*}
Therefore, since $\rho(C)\leq1$, we have for any $C\geq C_0$ that $\abs{\rho(C)-\rho(C_0}\leq \exp(-C_0)$.

Also, denote $h^* = \min\brk[c]*{h \in [H]:\sinner{\hat d_h,c_q} >C^*}$. Then,
\begin{align*}
    V_{q,1}^\pi(s_1) - V_1^\pi(s_1) 
    &\leq
    2H\sum_{h=1}^H \E{\abs{\rho\brk1{\sinner{\hat d_h,c_q}} - \rho\brk1{\sinner{\hat d_h, c}}}} \\
    &=
    2H \E{
    \sum_{h=1}^{h^*-1} \abs{\rho\brk1{\sinner{\hat d_h,c_q}} - \rho\brk1{\sinner{\hat d_h, c}}} 
    + 
    \sum_{h=h^*}^{H} \abs{\rho\brk1{\sinner{\hat d_h,c_q}} - \rho\brk1{\sinner{\hat d_h, c}}}} \\
    &\leq
    2H \E{
    \frac{1}{4} \brk*{h^*-1}^2 \Delta c
    + 
    (H-h^*) \exp\brk*{-C^*}
    } \\
    &\leq
    \frac{H^3 \Delta c}{2} + 2H^2\exp\brk*{-C^*}.
\end{align*}
In the second inequality, we also used the lipschitz property of the logistic function. Moreover, we used the fact that the costs are non-negative -- if the accumulated cost is larger than $C^*$ at $h^*$, then it is larger than $C^*$ for any $h\ge h^*$.

Finally, for any $\epsilon>0$, we set $\Delta c = \frac{\epsilon}{H^3}$ and $C^*=\log\brk*{\frac{4H^2}{\epsilon}}$, which ensure an error smaller than $\epsilon$. This parameter choice induces $N=\Ob\brk*{H^3\frac{\log\brk*{\frac{H^2}{\epsilon}}+b}{\epsilon}}$ -- potentially much smaller than the general case for which  we get $\Ob\brk*{H^3\frac{c_{\max}}{\epsilon}}$.

\subsection{Regret Bound for Approximate Planner}

We integrate the approximate planner into \Cref{alg: Termination CRL} and show that the regret bound is only mildly affected for $\epsilon=\Ob\brk*{1/\sqrt{K}}$ (or $\Delta c = \Ob\brk*{\frac{1}{H^3\sqrt{K}}}$). Particularly, instead of accurately solving the TerMDP $\brk1{\sset,\aset,H,\bar{r}^{k},\hat{P}^{k},\bar{c}^{k}}$, we apply our approximate planner and denote its output policy by $\pi_q^k$. Its value in the optimistic TerMDP is denoted by $\bar{V}_1^{\pi^k_q}(s_1^k)$.

Now, following the regret analysis of \Cref{appendix: regret analysis}, we have that

\begin{align*}
    \Reg{K}
    &= 
    \sum_{k=1}^K V_1^*(s_1^k) - V_1^{\pi^k_q}(s_1^k) \\
    &\leq
    \sum_{k=1}^K \bar{V}_1^{k}(s_1^k) - V_1^{\pi^k_q}(s_1^k) \\
    & =
    \sum_{k=1}^K \bar{V}_1^{\pi^k_q}(s_1^k) - V_1^{\pi^k_q}(s_1^k) + \sum_{k=1}^K \bar{V}_1^{k}(s_1^k) - \bar{V}_1^{\pi^k_q}(s_1^k)
\end{align*}
The first term can be bounded exactly as in the regret analysis of \Cref{appendix: regret analysis}, while the second bound is controlled by the approximation error of the planner, namely $\epsilon K$. Thus, choosing $\epsilon=\Ob\brk*{1/\sqrt{K}}$ leads to a negligible second term, allowing efficient planning while maintaining the stated regret bound of \Cref{theorem: main}.

\newpage
\section{Failure Events and Optimism}
\label{supp: good event}

In this section we focus on defining failure events and derive the bonuses described in \Cref{sec: theory}. Particularly, we will use these to show that the optimistic TerMDP (line 7 of \Cref{alg: Termination CRL}), used for planning, satisfies the needed optimism (i.e., larger than the optimal value), which will be used in the proof of \Cref{theorem: main}.

We define the following failure events.
{\small
\begin{align*}
    &F^r_k=\brk[c]*{\exists s,a,h:\ |r_h(s,a) - \hat{r}_h^k(s,a)| > \sqrt{\frac{2\log \frac{2SAHK}{\delta'}}{n_h^k(s,a)\vee 1}} }\\
    &F^p_k=\brk[c]*{\exists s,a,h:\ \norm{P_h(\cdot\mid s,a)- \hat{P}_h^k(\cdot\mid s,a)}_1 > \sqrt{\frac{4S\log\frac{3SAHK}{\delta'}}{n_h^k(s,a)\vee 1
    }}}\\
    &F^n = \brk[c]*{\sum_{k=1}^K \E{\sum_{h=1}^H \frac{1}{\sqrt{n_h^k(s^k_h,a^k_h)\vee 1}} \mid \F_{k-1}} > 16H^2\log\brk*{\frac{1}{\delta'}}+4SAH^2 +2\sqrt{2}\sqrt{SAH^2 K\log HK}}\\
    & F^c_k =  \brk[c]*{\exists s,a,h:\ \abs{c_h(s,a) - \hat c_h^k(s,a)} > 24\sqrt{\kappa SAH^{2.5}}(L+1)^{1.5}\log^2\brk*{\frac{4}{\delta'}\brk*{1+\frac{k(L+0.5)}{16S^2A^2\sqrt{H}}}}\frac{1}{\sqrt{n^k_h(s,a) + 4\frac{SAH}{L\sqrt{H}+0.5}}}}
\end{align*}}
Furthermore, the following relations hold.

\begin{itemize}
    \item Let $F^r=\bigcup_{k=1}^K F^r_k$. Then $\Pr\brk[c]*{F^r}\leq \delta'$, by Hoeffding's inequality, and using a union bound argument on all $s,a$, and all possible values of $n_{k}(s,a)$ and $k$. Furthermore, for $n(s,a)=0$ the bound holds trivially since $r\in[0,1]$. 
    \item Let $F^P=\bigcup_{k=1}^K F^{p}_k.$ Then $\Pr\brk[c]*{ F^p}\leq \delta'$, holds by \citep{weissman2003inequalities} while applying union bound on all $s,a$, and all possible values of $n_h^k(s,a)$ and $k$. Furthermore, for $n(s,a)=0$ the bound holds trivially. 
    \item $\Pr\brk[c]*{F^n}\leq \delta'$  by \Cref{lemma: expected cumulative visitation bound}.
    \item Let $F^{c} =\bigcup_{k=1}^K F^{c}_k$. Then $\Pr\brk[c]*{F^{c}}\leq \delta'$ for all $k \geq 1$ by \Cref{prop: local cost confidence}. 
\end{itemize}

We define the good event as the event where all failure events do not occur for all $k\in\brk[s]{K}$, namely, ${\G = \neg F^r \bigcap \neg F^p\bigcap \neg F^n \bigcap \neg F^c}$. Then, the following holds:

\begin{lemma}[Good event]
\label{lemma: good event}
Setting $\delta'=\frac{\delta}{4}$ then $\Pr\brk[c]{F^r \bigcup F^p\bigcup F^n \bigcup F^c}\leq \delta$. When the failure events does not hold we say the algorithm is outside the failure event, or inside the good event $\G$.
\end{lemma}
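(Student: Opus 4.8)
The plan is to obtain the claim by a direct union bound, since the four individual probability estimates $\Pr\brk[c]{F^r}\le\delta'$, $\Pr\brk[c]{F^p}\le\delta'$, $\Pr\brk[c]{F^n}\le\delta'$, and $\Pr\brk[c]{F^c}\le\delta'$ have already been established in the bullet points immediately preceding the lemma. The only real content of the lemma is the bookkeeping of the failure probability budget across the four sources of uncertainty (reward, transitions, the cumulative-visitation control, and the cost concentration).

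First I would recall, one line each, where each of the four bounds comes from: the reward bound $F^r$ follows from Hoeffding's inequality together with a union bound over all $(s,a,h)$ and all admissible values of $n_h^k(s,a)$ and $k$ (with the $n=0$ case trivial since $r\in[0,1]$); the transition bound $F^p$ follows from the $L_1$ concentration inequality of \citet{weissman2003inequalities} with the same union-bound structure; the visitation event $F^n$ is controlled by \Cref{lemma: expected cumulative visitation bound}; and the cost event $F^c$ is controlled by \Cref{prop: local cost confidence}. Each of these is calibrated so that its total probability is at most $\delta'$.

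The final step is the union bound itself:
\begin{align*}
    \Pr\brk[c]{F^r \cup F^p \cup F^n \cup F^c}
    \le
    \Pr\brk[c]{F^r} + \Pr\brk[c]{F^p} + \Pr\brk[c]{F^n} + \Pr\brk[c]{F^c}
    \le
    4\delta'.
\end{align*}
Substituting the choice $\delta' = \delta/4$ then yields $\Pr\brk[c]{F^r \cup F^p \cup F^n \cup F^c} \le \delta$, which is exactly the statement; equivalently, the complementary good event $\G$ holds with probability at least $1-\delta$.

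There is no substantive obstacle in this particular lemma: all the analytic difficulty lives in the four prerequisite bounds (especially $\Pr\brk[c]{F^c}\le\delta'$, which rests on \Cref{prop: local cost confidence} and is the technical heart of the whole concentration argument). The sole thing to be careful about is consistency of the probability budget — i.e., that the constant $4$ in $4\delta'$ matches the number of failure events being combined, so that choosing $\delta'=\delta/4$ gives precisely $\delta$ and not a looser constant. Once that accounting is verified, the proof is complete.
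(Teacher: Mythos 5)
Your proof is correct and matches the paper's own (implicit) argument exactly: the four bullet points preceding the lemma establish $\Pr\brk[c]{F^r}, \Pr\brk[c]{F^p}, \Pr\brk[c]{F^n}, \Pr\brk[c]{F^c} \leq \delta'$, and the lemma is then just a union bound over the four events with $\delta' = \delta/4$. Nothing further is needed.
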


\subsection{Optimism}
\label{appendix: optimism}

Following the events in $\G$ (\Cref{lemma: good event}) we define the following bonuses.
\begin{align*}
    & b_k^r(h,s,a) = \sqrt{\frac{2\log \frac{8SAHK}{\delta}}{n_h^k(s,a)\vee 1}}\\
    & b_k^p(h,s,a) = H\sqrt{\frac{4S\log\frac{12SAHK}{\delta}}{n_h^k(s,a)\vee 1
    }}\\
    & b_k^c(h,s,a) = 24\sqrt{\kappa SAH^{2.5}}(L+1)^{1.5}\log^2\brk*{\frac{16}{\delta}\brk*{1+\frac{k(L+0.5)}{16S^2A^2\sqrt{H}}}}\frac{1}{\sqrt{n^k_h(s,a) + 4\frac{SAH}{L\sqrt{H}+0.5}}}\enspace.
\end{align*}
\newpage
The total reward bonus is defined as $b_k^{rp}(h,s,a) = b_k^r(h,s,a) + b_k^p(h,s,a)$. Adding the reward bonus and subtracting the cost bonus leads to an optimistic MDP, as we prove in the following lemma.

\begin{lemma}[Optimism]\label{lemma: optimism}
    Let $\bar{V}^k$ be the optimal value of the optimistic MDP $\OpterMDP \brk1{\sset,\aset,H,\bar{r}^{k},\hat{P}^{k},\bar{c}^{k}}$, clipped by $H$, i.e., for all $h \in [H], s \in \s$
    \begin{align*}
        \bar{V}^k_h(s, \tau_{1:h})
        =
        \min \brk[c]*{V^*_{\OpterMDP}(s, \tau_{1:h}), H}.
    \end{align*}
    Then, under the good event $\G$, for any $k\in\brk[s]{K}$, $s\in\brk[s]{S}$, history $\tau_{1:h}$ and $h\in\brk[s]{H}$, it holds that $\bar{V}_h^k(s,\tau_{1:h}) \ge V^*(s,\tau_{1:h})$
\end{lemma}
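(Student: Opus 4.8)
The plan is to prove, by backward induction on $h$ from $H$ down to $1$, the stronger \emph{unclipped} statement that the optimal value $V^*_{\OpterMDP,h}$ of the optimistic TerMDP dominates the true optimal value $V^*_h$ at every state and history, and only then to cap it. Under the good event $\G$ the bonuses are calibrated so that three pointwise facts hold simultaneously: from $\neg F^r_k$ and the definition $\bar r^k = \hat r^k + b_k^r + b_k^p$ we get $\bar r_h^k(s,a) \ge r_h(s,a) + b_k^p(h,s,a)$; from $\neg F^p_k$ we get $\norm{P_h(\cdot\mid s,a) - \hat P_h^k(\cdot\mid s,a)}_1 \le b_k^p(h,s,a)/H$; and from $\neg F^c_k$ together with $\bar c^k = \hat c^k - b_k^c$ we get $\bar c_h^k(s,a) \le c_h(s,a)$ for every $(h,s,a)$. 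This last fact implies that along any fixed history $\tau_{1:h}$ the optimistic accumulated cost $\bar C_h$ never exceeds the true accumulated cost $C_h$, so monotonicity of the logistic function yields the survival-probability bound $1 - \rho(\bar C_h) \ge 1 - \rho(C_h) \ge 0$.

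With these in hand I would carry out the inductive step by lower-bounding the optimistic Bellman backup action-by-action through a short chain, crucially swapping to the true next-step value \emph{before} paying for the transition error. Starting from $(1-\rho(\bar C_h))\,\hat P_h^k(\cdot\mid s,a)^\top V^*_{\OpterMDP,h+1}$, I first use the survival bound (and non-negativity of the next-step value, since rewards lie in $[0,1]$) to replace $\rho(\bar C_h)$ by $\rho(C_h)$; then the induction hypothesis $V^*_{\OpterMDP,h+1} \ge V^*_{h+1}$ and non-negativity of $1-\rho(C_h)$ let me replace $V^*_{\OpterMDP,h+1}$ by $V^*_{h+1}$; and finally, since now $\norm{V^*_{h+1}}_\infty \le H$ automatically, the transition mismatch costs at most $\norm{\hat P_h^k - P_h}_1\,H \le b_k^p(h,s,a)$. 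Adding the reward inequality $\bar r_h^k \ge r_h + b_k^p$ cancels this transition penalty exactly, so the optimistic backup dominates $r_h(s,a) + (1-\rho(C_h))\,P_h(\cdot\mid s,a)^\top V^*_{h+1}$ for every $a$; maximizing over $a$ gives $V^*_{\OpterMDP,h} \ge V^*_h$. The base case $h = H$ (empty future, values equal the one-step reward) is immediate from $\bar r^k \ge r$. To conclude, since the true value always obeys $V^*_h \le H$, clipping is harmless: $\bar V_h^k = \min\brk[c]*{V^*_{\OpterMDP,h}, H} \ge \min\brk[c]*{V^*_h, H} = V^*_h$.

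I expect the main obstacle to be the bookkeeping of two structurally different sources of optimism while keeping every sign and monotonicity direction correct: the reward and transition uncertainty are absorbed \emph{additively} by $b_k^r + b_k^p$, whereas the termination uncertainty enters \emph{multiplicatively} through the survival factor $1-\rho(\bar C_h)$, and converting $\bar c^k \le c$ into a larger survival probability is valid only because the propagated values are non-negative. The one genuinely delicate ordering choice is to invoke the induction hypothesis to drop down to $V^*_{h+1}$ \emph{before} bounding the transition error, so that the error is measured against a value already known to be at most $H$; this is what makes the fixed bonus $b_k^p = H\norm{\hat P_h^k - P_h}_1$ suffice without ever requiring the next-step optimistic value itself to be bounded, and it is precisely where the chosen magnitudes of the bonuses become tied to the horizon.
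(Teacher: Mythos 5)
Your proof is correct and follows essentially the same route as the paper's: backward induction over $h$, with the good event supplying the three facts $\bar r^k_h \ge r_h + b^p_k$, $H\norm{\hat P^k_h - P_h}_1 \le b^p_k$, and $\bar c^k_h \le c_h$ (the last combined with monotonicity of $\rho$ and non-negativity of the propagated value), and with the $b^p_k$ bonus cancelling the H\"older transition error exactly as in the paper's terms (a)--(c). The only differences are cosmetic: you perform the survival-factor swap against the optimistic next-step value before invoking the induction hypothesis (the paper instead swaps the cost against the true value $P_h^\top V^*_{h+1}$ after applying the hypothesis), and you prove the unclipped inequality first and clip at the end, which is in fact a slightly cleaner handling of the $\min\brk[c]{\cdot, H}$ than the paper's implicit treatment.
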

\begin{proof}
We prove the claim by an induction over $h$. 

First, the induction trivially holds for $h=H+1$, since $\bar{V}_h^k(s,\tau_{1:h}) = V^*(s,\tau_{1:h})=0$.

Now, let $h\in\brk[s]{H}$ and assume that the claim holds for $h'=h+1$, $\forall s\in \sset$ and $\tau_{1:h'}$.

Denote:
\begin{align*}
    & a^* \in \arg\max_a \brk[c]*{r_h(s,a) + T_h (\cdot \mid s,a, \tau_{1:h})^T V_{h+1}^*\brk*{\cdot, \tau_{1:h}\cup(s,a)}}.
\end{align*}
Recall that $\tau_{1:h}=(s_{h-1},a_{h-1},\dots,s_1,a_1)$ is the history up to time $h$, and for brevity, denote $\tau_{1:h+1}^*\triangleq(s,a^*)\cup\tau_{1:h}=(s,a^*, s_{h-1},a_{h-1}, \dots, s_1,a_1)$, the history up to time $h+1$ when visiting $s$ and playing $a^*$ on the $h^{th}$ timestep. Also, with some abuse of notation, we let $\brk[s]*{TV}(s,a,\tau) = T(\cdot\vert s,a,\tau)^T V(\cdot,(s,a)\cup \tau)$ and similarly define $\brk[s]*{PV}(s,a,\tau) = P(\cdot\vert s,a)^T V(\cdot,(s,a)\cup \tau)$. Finally, denote by $d^\tau$ the vector with elements $d^{\tau}(h,s,a) = 1$ if $(s_h,a_h)\in \tau$ and $0$ otherwise.
Then,
\begin{align*}
    \bar V_h^k&\brk*{s, \tau_{1:h}} - V_h^*\brk*{s, \tau_{1:h}} \\
    &= 
    \max_a  \brk[c]*{\bar r_h^k(s, a) + \brk[s]*{T_h^{\hat P_k, \bar c_k}\bar{V}_{h+1}^k}(s,a, \tau_{1:h})} 
    - \max_a \brk[c]*{r_h(s,a) + \brk[s]*{T_hV_{h+1}^*}(s,a, \tau_{1:h})}\\
    & \overset{(1)}\geq \bar r_h^k(s, a^*) + \brk[s]*{T_h^{\hat P_k, \bar c_k}\bar{V}_{h+1}^k}(s,a^*, \tau_{1:h}) 
    - r_h(s,a^*) -  \brk[s]*{T_hV_{h+1}^*}(s,a^*, \tau_{1:h}) \\
    & \overset{(2)}\geq \bar r_h^k(s, a^*) + \brk[s]*{T_h^{\hat P_k, \bar c_k}V_{h+1}^*}(s,a^*, \tau_{1:h})  
    - r_h(s,a^*) - \brk[s]*{T_hV_{h+1}^*}(s,a^*, \tau_{1:h}) \\
    & = \bar r_h^k(s, a^*) - r_h(s,a^*) + \brk[s]*{\brk*{T_h^{
    \hat P_k, \bar c_k} - T_h }V_{h+1}^*} (\tau_{1:h+1}^*) \\
     & = \bar r_h^k(s, a^*) - r_h(s,a^*)  + \brk[s]*{\brk*{T_h^{\hat P_k, \bar c_k} - T_h^{P, \bar c_k}}V_{h+1}^*} (\tau_{1:h+1}^*) + \brk[s]*{\brk*{T_h^{P, \bar c_k} - T_h }V_{h+1}^*} (\tau_{1:h+1}^*) \\
     & \overset{(3)}= \bar r_h^k(s, a^*) - r_h(s,a^*) \\
     & \quad + \brk*{1-\rho\brk1{\sinner{d^{\tau_{h+1}^*},\bar c}}}\brk[s]*{\brk*{\hat P_h - P} V_{h+1}^*} (\tau_{1:h+1}^*) \\
     & \quad+ \brk*{\rho\brk1{\sinner{d^{\tau_{h+1}^*}, c}} - \rho\brk1{\sinner{d^{\tau_{h+1}^*},\bar c}}} \brk[s]*{P V_{h+1}^* }(\tau_{1:h+1}^*) \\
     & \overset{(4)}= \underbrace{\hat r_h^k(s, a^*) - r_h(s,a^*) + b_k^r(h,s,a^*)}_{(a)}\\
     & \quad + \underbrace{\brk*{1-\rho\brk1{\sinner{d^{\tau_{h+1}^*},\bar c}}}\brk[s]*{\brk*{\hat P_h - P} V_{h+1}^*} (\tau_{1:h+1}^*) + b_k^p(h,s,a^*)}_{(b)}   \\
     & \quad+ \underbrace{\brk*{\rho\brk1{\sinner{d^{\tau_{h+1}^*}, c}} - \rho\brk1{\sinner{d^{\tau_{h+1}^*},\hat c - b_k^c}}} \brk[s]*{P V_{h+1}^* }(\tau_{1:h+1}^*) }_{(c)}
\end{align*}
Relation $(1)$ is is due to the definition of the $\max$-operator and the fact that $a^*$ is the optimal action in the true MDP. Next, $(2)$ is due to the induction step that $\bar{V}_h^k\ge V_h^*$. $(3)$ is by the TerMDP model assumption (and since the value at termination is $0$), and $(4)$ is by the definitions of $\bar r_h^k(s,a)$ and $\bar c$.

We now turn to bound each of the three terms under the good event.
\paragraph{Term (a): Reward Optimism.}
    \begin{align*}
        (a) &= \hat r_h^k(s, a^*) - r_h(s,a^*) + b_k^r(h,s,a^*) \geq -b_k^r(h,s,a^*) + b_k^r(h,s,a^*) = 0,
    \end{align*}
    where the first transition holds under the good event, and specifically, event $\neg F^r$).
\paragraph{Term (b): Transition Optimism.}
    \begin{align*}
        (b) &= \brk*{1-\rho\brk1{\sinner{d^{\tau_{h+1}^*},\bar c}}} \brk[s]*{\brk*{\hat P_h - P} V_{h+1}^*} (\tau_{1:h+1}^*) + b_k^p(h,s,a^*) \\
        &  \geq -\abs{1-\rho\brk1{\sinner{d^{\tau_{h+1}^*},\bar c}}}\norm{\hat P_h(\cdot \mid s,a^*) - P(\cdot \mid s,a^*)}_1 \norm{ V_{h+1}^*(\cdot, \tau_{1:h+1}^*) }_\infty + b_k^p(h,s,a^*) \\
        & \geq -H \norm{\hat P_h(\cdot \mid s,a^*) - P(\cdot \mid s,a^*)}_1 + b_k^p(h,s,a^*) \\
        & \geq -b_k^p(h,s,a^*) + b_k^p(h,s,a^*) \\
        & = 0.
    \end{align*}
    The first transition is by H\"older's inequality. The second is by the fact $\forall x$, $\rho(x)\in[0,1]$ and $\norm{ V_{h+1}^*(\cdot, \tau_{1:h+1}^*) }_\infty\leq H$. The third transition is by under the good event, and specifically, event $\neg F^p$.
\paragraph{Term (c): Termination Cost Optimism.}
First, notice that under the good event (and specifically, event $\neg F^c$, see \Cref{prop: local cost confidence}) and by the definition of the bonus $b_k^c$, it holds for any $h,s,a$ that $\hat c_h(s,a) - b_k^c(h,s,a) \leq c_h(s,a)$. Therefore, as for any $h,s,a$ and $\tau$, $d^{\tau}(h,s,a)\geq 0 $, it also holds that $\inner{d^{\tau_{h+1}^*},\hat c - b_k^c} \leq \inner{d^{\tau_{h+1}^*}, c } $. Finally, by the monotonicity of $\rho$ and the non-negativity of $\brk[s]*{P V_{h+1}^* }(\tau_{1:h+1}^*) $, we have that
\begin{align*}
    (c) &= \brk*{\rho\brk1{\sinner{d^{\tau_{h+1}^*}, c}} - \rho\brk1{\sinner{d^{\tau_{h+1}^*},\hat c - b_k^c}}}  \brk[s]*{P V_{h+1}^* }(\tau_{1:h+1}^*)  \\
    & \geq \brk*{\rho\brk1{\sinner{d^{\tau_{h+1}^*}, c}} - \rho\brk1{\sinner{d^{\tau_{h+1}^*},c}}}  \brk[s]*{P V_{h+1}^* }(\tau_{1:h+1}^*) \\
    & = 0
\end{align*}

By plugging in the bounds for each of the above terms we get that
\begin{align*}
    \bar V_h^k&\brk*{s, \tau_{1:h}} - V_h^*\brk*{s, \tau_{1:h}} \geq 0,
\end{align*}
which concludes the proof by the induction hypothesis.
\end{proof}

\newpage
\section{Regret Analysis}\label{supp: regret analysis}
\label{appendix: regret analysis}

We state the main result (\Cref{theorem: main}) explicitly below. We note that in \Cref{theorem: main} we used the $\Ob$-notation, which assumes that $L = \Ob\brk*{SAH}$.

\begin{theorem}[Regret of TermCRL]
With probability at least $1-\delta$, the regret of \Cref{alg: Termination CRL} is
    \begin{align*}
        \Reg{K} 
        \leq 
        &\brk*{16H^2\log\brk*{\frac{4}{\delta}}+4SAH^2 +2\sqrt{2}\sqrt{SAH^2 K\log HK}} \\
        &\qquad 
        \times
        \left(2\sqrt{2\log \frac{8SAHK}{\delta}} + 2H\sqrt{4S\log\frac{12SAHK}{\delta}} \right.\\
         &\qquad
         +
         \left. 32H^2\sqrt{\kappa SAH}(L+1)^{1.5}\log\brk*{\frac{16}{\delta}\brk*{1+\frac{k(L+0.5)}{16S^2A^2H}}}\brk*{\sqrt{\frac{L+0.5}{4SAH}}\vee1}\right).
    \end{align*}
\end{theorem}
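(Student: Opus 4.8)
The plan is to follow the standard optimism-based regret decomposition for UCRL-style algorithms, adapted to the TerMDP setting, leveraging the local cost confidence bound of \Cref{thm: local cost confidence front} together with the optimism guarantee (\Cref{lemma: optimism}). The regret bound will emerge as a product of two factors: a ``cumulative visitation'' factor of order $\sqrt{SAH^2K}$ (controlled by the failure event $F^n$), and a per-step bonus factor aggregating the reward, transition, and cost bonuses defined in \Cref{appendix: optimism}.

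First I would condition on the good event $\G$ (\Cref{lemma: good event}), which holds with probability at least $1-\delta$. Under $\G$, optimism (\Cref{lemma: optimism}) gives $\bar V_1^k(s_1^k) \geq V_1^*(s_1^k)$, so the regret is upper bounded by $\sum_{k=1}^K \bar V_1^k(s_1^k) - V_1^{\pi^k}(s_1^k)$, replacing the unknown optimal value with the optimistic value computed by the planner. The next step is to apply a value-difference (simulation) lemma to expand $\bar V_1^k(s_1^k) - V_1^{\pi^k}(s_1^k)$ into a telescoping sum over timesteps $h$ of the per-step discrepancies between the optimistic model $(\bar r^k, \hat P^k, \bar c^k)$ and the true model $(r, P, c)$, evaluated along trajectories generated by $\pi^k$. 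Crucially, the termination structure means each per-step term splits into three contributions, exactly mirroring terms $(a),(b),(c)$ in the proof of \Cref{lemma: optimism}: a reward-estimation gap bounded by $2b_k^r$, a transition-estimation gap bounded by $2b_k^p$ (using H\"older and $\norm{V}_\infty \leq H$), and a cost-estimation gap. For the cost term I would use the Lipschitz continuity of $\rho$ (derivative bounded by $1/4$, or more carefully exploiting $\kappa$) to convert the difference $\rho(\langle d^\tau, c\rangle) - \rho(\langle d^\tau, \bar c\rangle)$ into the accumulated cost discrepancy $\sum_{t\leq h} |c_t - \bar c_t|$, each summand controlled by $2b_k^c$ from \Cref{thm: local cost confidence front}.

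The heart of the argument is then to sum these bonuses over all episodes and timesteps. Each bonus has the form (constant) $\times (n_h^k(s,a))^{-1/2}$, so the aggregate $\sum_{k,h} (n_h^k(s_h^k,a_h^k))^{-1/2}$ must be bounded. I would pass from the realized trajectories to conditional expectations and invoke the cumulative visitation bound (\Cref{lemma: expected cumulative visitation bound}, equivalently the failure event $F^n$), which yields the leading $\sqrt{SAH^2 K \log HK}$ factor plus lower-order additive terms $16H^2\log(4/\delta) + 4SAH^2$. Factoring out this visitation term and collecting the reward, transition, and cost bonus coefficients produces precisely the three-summand bracket in the theorem statement, with the cost bonus contributing the dominant $\kappa$- and $L$-dependent factor and its extra $H$ powers (one power of $H$ from the value scale in the telescoping, another from the accumulated-cost credit assignment across the window).

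The main obstacle I anticipate is correctly handling the cost term's \emph{credit assignment} across the trajectory: unlike the reward and transition bonuses, which are incurred locally at each $(s_h^k, a_h^k)$, the termination probability depends on the \emph{accumulated} costs $\langle d^\tau, c \rangle$, so a single mis-estimated cost at an early state affects termination at all later steps. Bounding this requires carefully distributing the accumulated error $\sum_{t=1}^h |c_t - \bar c_t|$ back onto individual visitation counts and then re-summing, which is where the additional powers of $H$ (relative to vanilla UCRL) arise and must be tracked precisely to match the stated $H^{8.5}$ dependence. I would also need to verify that the monotonicity of $\rho$ together with the sign of the bonus keeps all error terms one-signed, so that no cancellation is lost and the triangle-inequality-based bounds remain valid throughout the telescoping expansion.
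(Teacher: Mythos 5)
Your proposal follows essentially the same route as the paper's proof: conditioning on the good event, invoking optimism (\Cref{lemma: optimism}), expanding via the value difference lemma (\Cref{lemma: value difference lemma}), splitting into reward, transition, and termination-cost terms bounded by the respective bonuses, and summing with the expected cumulative visitation bound (\Cref{lemma: expected cumulative visitation bound}). The credit-assignment issue you flag is handled in the paper exactly as you anticipate---the Lipschitzness of $\rho$ and the binary occupancy vector convert the termination-probability gap into $\sum_{t\leq h}\abs{\bar c_t - c_t}$, and bounding the resulting double sum $\sum_h \sum_{t\leq h}$ costs precisely one extra factor of $H$, which combined with the $\norm{V}_\infty \leq H$ factor yields the $H^2 b_k^c$ term.
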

\begin{proof}

Under the good event, the regret is bounded by
\begin{align*}
    \Reg{K}
    &= \sum_{k=1}^K V_1^*(s_1^k) - V_1^{\pi^k}(s_1^k) \\
    &\overset{(1)}{\le} \sum_{k=1}^K \bar{V}_1^k(s_1^k) - V_1^{\pi^k}(s_1^k) \\
    & \overset{(2)}= \sum_{k=1}^K \sum_{h=1}^H \E{(\bar{r}_h^k-r)(s_h^k,a_h^k) + (\Thatpbarc -T_h)(\cdot\vert s_h^k,a_h^k,\tau_{1:h}^k)^T \bar V_{h+1}^{\pi^k}(\cdot, \tau_{1:h+1}^k) | \F_{k-1}} \\
    &  \overset{(3)}\leq \sum_{k=1}^K \sum_{h=1}^H \E{(\bar{r}_h^k-r)(s_h^k,a_h^k) \vert \F_{k-1}} + \sum_{k=1}^K \sum_{h=1}^H \E{\norm{(\Thatpbarc - T_h)(\cdot\vert s_h^k,a_h^k,\tau_{1:h}^k)}_1\norm{\bar V_{h+1}^{\pi^k}(\cdot, \tau_{1:h+1}^k)}_{\infty} | \F_{k-1}} \\
    &\le  \underbrace{\sum_{k=1}^K \sum_{h=1}^H \E{(\bar{r}_h^k-r)(s_h^k,a_h^k) | \F_{k-1}}}_{(a)} \\
    & + \underbrace{H\sum_{k=1}^K \sum_{h=1}^H \E{\norm{(\Thatpbarc - \Tpbarc)(\cdot\vert s_h^k,a_h^k,\tau_{1:h}^k)}_1 | \F_{k-1}}}_{(b)} \\
    &\quad+ \underbrace{H\sum_{k=1}^K \sum_{h=1}^H \E{\norm{(\Tpbarc - T_h)(\cdot\vert s_h^k,a_h^k,\tau_{1:h}^k)}_1 | \F_{k-1}}}_{(c)}.
\end{align*}
$(1)$ is due to the optimism of the value function (see \Cref{lemma: optimism}) and $(2)$ is by the value difference lemma (\Cref{lemma: value difference lemma}). Notice that to use this lemma, we extended the state space to include the previously visited trajectory. $(3)$ is due to Hölder's inequality.

\paragraph{Term (a): Reward Concentration.}
Under the Good event (see \Cref{lemma: good event}), and by the definition of the bonus terms $b_k^r$ we have that 
\begin{align*}
    \sum_{k=1}^K \sum_{h=1}^H& \E{(\bar{r}_h^{k-1}-r)(s_h^k,a_h^k) | \F_{k-1}}   &\le \sum_{k=1}^K \sum_{h=1}^H \E{2b_k^r(h,s_h^k,a_h^k) + b^p_k(h,s_h^k,a_h^k) | \F_{k-1}}.
\end{align*}

\paragraph{Term (b): Transition Concentration.}
Recall that
\begin{align*}
T_h^{P, c}(\cdot | s_h^k, a_h^k, \tau_{1:h}^k) = \rho\brk1{\sinner{\hat d_h^k, c}}P(\cdot | s_h^k, a_h^k),
\end{align*}
and since $\rho \in (0,1)$ we have that 
\begin{align*}
    \sum_{k=1}^K \sum_{h=1}^H& H\E{\norm{(\Thatpbarc - \Tpbarc)(\cdot\vert s_h^k,a_h^k,\tau_{1:h}^k)}_1 \vert \F_{k-1}} \\
    & \le \sum_{k=1}^K \sum_{h=1}^H H\E{\norm{(\hat P_k - P)(\cdot\vert s_h^k,a_h^k)}_1 \vert \F_{k-1}} \\
    & \le \sum_{k=1}^K \sum_{h=1}^H \E{b^p_k(h,s_h^k,a_h^k) \vert \F_{k-1}},
\end{align*}
where the last transition is by the good event (\Cref{lemma: good event}) and the definition of $b^p_k$. 

\paragraph{Term (c): Termination Cost Concentration.}

\begin{align*}
    H\sum_{k=1}^K \sum_{h=1}^H& \E{\norm{(\Tpbarc - T_h)(\cdot\vert s_h^k,a_h^k,\tau_{1:h}^k)}_1 | \F_{k-1}} \\
        & = H \sum_{k=1}^K \sum_{h=1}^H \E{\abs{\rho\brk1{\sinner{\hat d_h^k,\bar c^k}} - \rho\brk1{\sinner{\hat d_h^k, c}}} \underbrace{\norm{P_h(\cdot\vert s_h^k,a_h^k)}_1}_{=1}+\abs{\brk*{1-\rho\brk1{\sinner{\hat d_h^k,\bar c^k}}} - \brk*{1-\rho\brk1{\sinner{\hat d_h^k, c}}}} \vert \F_{k-1}} \\
        & = 2H \sum_{k=1}^K \sum_{h=1}^H \E{\abs{\rho\brk1{\sinner{\hat d_h^k,\bar c^k}} - \rho\brk1{\sinner{\hat d_h^k, c}}} | \F_{k-1}} \\ 
        & \overset{(\romannumeral 1)}\leq 2H \sum_{k=1}^K \sum_{h=1}^H \E{\abs{\inner{\hat d_h^k,\bar c^k - c}} | \F_{k-1}} \\
        & \overset{(\romannumeral 2)}\leq 2H \sum_{k=1}^K \sum_{h=1}^H \sum_{t=1}^h \E{ \abs{\bar c_t^k(s_t^k,a_t^k) - c_t(s_t^k,a_t^k)} | \F_{k-1}} \\
        & \overset{(\romannumeral 3)}\leq 4 H \sum_{k=1}^K \sum_{h=1}^H \sum_{t=1}^h \E{ b^c_k(t,s_t^k,a_t^k) | \F_{k-1}} \\
        & \leq 4 H^2  \sum_{k=1}^K \sum_{h=1}^H \E{ b^c_k(h,s_h^k,a_h^k) | \F_{k-1}},
\end{align*}
$(\romannumeral 1)$ is by the fact $\rho(\cdot)$ is $1$-Lipschitz. $(\romannumeral 2)$ is by the fact $\forall h,s,a$ $\hat d_h^k \in \{0,1\}$. $(\romannumeral 3)$ is by the definition of $\bar c$ and under the good event by \Cref{prop: local cost confidence}.

Combining all the above terms, we get
\begin{align*}
    \Reg{K} &\leq 4 \sum_{k=1}^K \sum_{h=1}^H \E{b_k^r(h,s_h^k,a_h^k) + b^p_k(h,s_h^k,a_h^k) + H^2 b^c_k(h,s_h^k,a_h^k)  \vert \F_{k-1}}.
\end{align*}
Now, plugging in the bonus terms and bounding, 
\begin{align*}
    b_k^c(h,s,a) 
    &= 24\sqrt{\kappa SAH^{2.5}}(L+1)^{1.5}\log^2\brk*{\frac{16}{\delta}\brk*{1+\frac{k(L+0.5)}{16S^2A^2\sqrt{H}}}}\frac{1}{\sqrt{n^k_h(s,a) + 4\frac{SAH}{L\sqrt{H}+0.5}}}\\
    & \le 24\sqrt{\kappa SAH^{2.5}}(L+1)^{1.5}\log^2\brk*{\frac{16}{\delta}\brk*{1+\frac{k(L+0.5)}{16S^2A^2\sqrt{H}}}}\frac{\brk*{\sqrt{\frac{L+0.5}{4SA\sqrt{H}}}\vee1}}{\sqrt{n^k_h(s,a) \vee 1}} \enspace,
\end{align*}
we get
{\small
\begin{align*}
    \Reg{K} & \leq 2 \sum_{k=1}^K \sum_{h=1}^H \E{\sqrt{\frac{2\log \frac{8SAHK}{\delta}}{n_h^k(s_k,a_k)\vee 1}} +H\sqrt{\frac{4S\log\frac{12SAHK}{\delta}}{n_h^k(s_k,a_k)\vee 1}} | \F_{k-1}} \\
    &\quad  +4 H^2\sum_{k=1}^K \sum_{h=1}^H \E{24\sqrt{\kappa SAH^{2.5}}(L+1)^{1.5}\log^2\brk*{\frac{16}{\delta}\brk*{1+\frac{k(L+0.5)}{16S^2A^2\sqrt{H}}}}\frac{\brk*{\sqrt{\frac{L+0.5}{4SA\sqrt{H}}}\vee1}}{\sqrt{n^k_h(s,a) \vee 1}}| \F_{k-1}} \\
    & \le\brk*{16H^2\log\brk*{\frac{4}{\delta}}+4SAH^2 +2\sqrt{2}\sqrt{SAH^2 K\log HK}} \times \\
    &\qquad \left(2\sqrt{2\log \frac{8SAHK}{\delta}} + 2H\sqrt{4S\log\frac{12SAHK}{\delta}} \right.\\
                 &\qquad\qquad\left.+ 92H^2\sqrt{\kappa SAH^{2.5}}(L+1)^{1.5}\log^2\brk*{\frac{16}{\delta}\brk*{1+\frac{k(L+0.5)}{16S^2A^2\sqrt{H}}}}\brk*{\sqrt{\frac{L+0.5}{4SA\sqrt{H}}}\vee1}\right) \\
                 & = \Ob\brk*{\sqrt{\kappa S^2A^2H^{8.5}L^3 K\log^3\brk*{ \frac{SAHK}{\delta}}}},
\end{align*}
}
where the last inequality is by \Cref{lemma: expected cumulative visitation bound}.
\end{proof}

\newpage
\section{Cost Concentration}
\label{supp: cost concentration}

In the section to follow, we provide a local concentration bound for estimating the costs in the termination model. This local concentration result will allow us to use the cost bonus defined in \Cref{supp: good event}, which is crucial for achieving tight regret guarantees, when the costs are unknown. Specifically, this concentration result serves as the base for dealing with the cost optimism in \Cref{lemma: optimism} and cost concentration in \Cref{theorem: main}.

We first start by describing the optimization procedure used for learning the costs (\Cref{supp: optimization problem}). Then, by formulating this procedure as an instance of the logistic bandits problem, we obtain a concentration result for the cost, which \emph{globally bounds the distance between the empirical and cost vectors}
(\Cref{supp: global cost concentration}).
Finally, we use this global concentration result together with the structure of the termination MDP to provide a local concentration bound for the costs which \emph{holds independently for any $h,s,a$}
(\Cref{supp: local cost concentration}).

Before diving into the details, we first present additional notations required to the analysis in this appendix. 
First, to address time steps after termination, we define $\hat d_h^k(t,s,a)=0$ for any $t>t_k^*$ and $\forall t,s,a$. This allows to add these vectors to the regression problem without affecting its solution. Moreover, we denote the episode-wise Gram matrix by $A_k \triangleq \sum_{h=1}^{t^*} \hat d_h^k  {\hat d^k_h \vphantom{}}^{T}$ and the regularized Gram matrix up to episode $k$ by $ V_k \triangleq \lambda I + \sum_{k'=1}^{k-1} A_{k'} $, for some $\lambda>0$.

\subsection{Optimization Procedure}\label{supp: optimization problem}

In a vanilla MDP, at each time step, the agent takes an action and transitions to the next state. Instead, in the termination model considered in this paper, the agent potentially gets a termination signal from the environment. Importantly, the agent can gain information about the termination probabilities even in time steps when no termination occurs. Formally, at any time step up to the time of termination $h\in[H-1]$ (the last time step always ends in termination), the agent acquires a sample from a Bernoulli random variable, where $1$ means termination $(h=t_k^*)$, and $0$  otherwise $(h\neq t_k^*)$. Since the termination probability is a logistic function of the occupancy vector, finding the termination costs can be done using a logistic regression. This leads us to solve the following optimization problem in \Cref{alg: Termination CRL} (Line 10),

\begin{align}\label{eq: logistic cost estimation process}
   \hat c_{k+1} \in \arg\max_{c \in \C} \sum_{k=1}^{K} \sum_{h=1}^{H-1} \brk[s]*{ \indicator{h < t_k^*}\log\brk*{1- \rho\brk1{\sinner{\hat d_h^k,c}}} + \indicator{h=t_k^*}\log\brk*{\rho\brk*{\sinner{\hat d_h^k,c}}
    }}  -  \lambda\norm{c}_2^2,
\end{align}
where $\C$ is the set of possible costs. Specifically, by the fact we only observe termination on the trajectories we played, the feedback in this problem is bandit feedback. In the next section, we use known results for the logistic bandit problem \cite{abeille2021instance}, and apply them to acquire a global concentration bound for the costs learned in this procedure.

\subsection{Global Cost Concentration}\label{supp: global cost concentration}

We start by stating concentration results for logistic bandits. Let $x_t\in\X\subset\R^d$ be a sequence of contexts s.t. $\norm{x}\le 1$ for all $x\in\X$ and assume that $y_t$ are Bernoulli random variable such that $\E{y_t\vert x_t}=\rho(x_{t}^T\theta_*)$, for some $\theta^*\in\Theta$. Next, define the regularized logistic loss w.r.t. $\lambda>0$ by $\mathcal{L}_t(\theta) = - \sum_{s=1}^{t-1}\brk*{y_s\log \rho(x_{s}^T\theta)+(1-y_s)\log(1- \rho(x_{s}^T\theta))} + \lambda\norm{\theta}_2^2$. We denote the unconstrained minimizer of the loss by $\bar{\theta}_t$ and its minimizer over $\Theta$ by $\hat{\theta}_t$. Finally, with slight abuse of notation, let $H_t(\theta)=\sum_{s=1}^{t-1}\dot{\mu}(x_s^T\theta)x_sx_s^T + \lambda I$. 

We now focus on the confidence set $\mathcal{E}_t\delta)$, defined in \citep{abeille2021instance} as
\begin{align*}
    \mathcal{E}_t(\delta) = \brk[c]*{\theta\in\Theta \vert \mathcal{L}_t(\theta) - \mathcal{L}_t(\bar{\theta}_t)\le \beta_t(\delta)^2}
\end{align*}
for some appropriate $\beta_t(\delta)>0$. By Proposition 1 and Lemma 1 of \citep{abeille2021instance}, it holds that $\theta^*\in \mathcal{E}_t(\delta)$. Thus, under this event, the set is not empty. In particular, when the set is not empty, it must also contains $\hat{\theta}_t$, as the constrained minimizer of the cost. Combining this conclusion with Lemma 1 of \citep{abeille2021instance}, we obtain the following lemma.

\begin{lemma}[Logistic Regression Concentration]\label{lemma: logistic concentration}
With probability of at least $1-\delta$, for all $t\ge1$,
\begin{align}
    \label{eq:theta-bound}
    \norm{\hat{\theta}_t-\theta_*}_{H_t(\theta_*)}
    \le (2+2L)\gamma_t(\delta) + 2\sqrt{1+L}\brk*{\gamma_t(\delta) + \frac{\gamma_t(\delta)^2}{\sqrt{\lambda}}}\enspace,
\end{align}
where $L=\max_{\theta\in\Theta}\norm{\theta}_2$ and $\gamma_t(\delta) = \sqrt{\lambda}\brk*{L+\frac{1}{2}}+\frac{d}{\sqrt{\lambda}}\log\brk*{\frac{4}{\delta}\brk*{1+\frac{t}{16d\lambda}}}$.
\end{lemma}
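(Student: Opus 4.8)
The plan is to prove this lemma as a direct corollary of the logistic-bandit analysis of \citet{abeille2021instance}, composing two of their results: a high-probability statement that the true parameter lies in the loss-sublevel confidence set $\mathcal{E}_t(\delta)$, and a deterministic inequality that converts membership in $\mathcal{E}_t(\delta)$ into a bound on the $H_t(\theta_*)$-weighted distance. First I would fix the correspondence between our objects and theirs: the contexts $x_s$, the Bernoulli responses $y_s$ with mean $\rho(x_s^T\theta_*)$, the regularized loss $\mathcal{L}_t$, its unconstrained minimizer $\bar\theta_t$, and the constrained minimizer $\hat\theta_t$ over $\Theta$, together with the link derivative appearing in $H_t(\theta)=\sum_{s=1}^{t-1}\dot\mu(x_s^T\theta)x_sx_s^T+\lambda I$.

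The first key step invokes Proposition 1 (and Lemma 1) of \citet{abeille2021instance}: via a self-normalized martingale tail bound on $\sum_s (y_s-\rho(x_s^T\theta_*))x_s$ together with the regularized strong convexity of $\mathcal{L}_t$ near $\bar\theta_t$, one obtains that with probability at least $1-\delta$, simultaneously for all $t\ge1$, $\mathcal{L}_t(\theta_*)-\mathcal{L}_t(\bar\theta_t)\le\beta_t(\delta)^2$ for the radius $\beta_t$ determined by $\gamma_t(\delta)$. Equivalently, $\theta_*\in\mathcal{E}_t(\delta)$, so on this event the set is nonempty. The second step is purely deterministic: by realizability $\theta_*\in\Theta$, and since $\hat\theta_t$ minimizes $\mathcal{L}_t$ over $\Theta$, we have $\mathcal{L}_t(\hat\theta_t)\le\mathcal{L}_t(\theta_*)$, whence $\mathcal{L}_t(\hat\theta_t)-\mathcal{L}_t(\bar\theta_t)\le\beta_t(\delta)^2$ and therefore $\hat\theta_t\in\mathcal{E}_t(\delta)$ as well.

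Finally I would apply Lemma 1 of \citet{abeille2021instance}, a deterministic inequality valid for any point of $\mathcal{E}_t(\delta)$, instantiated at $\theta=\hat\theta_t$. This step leverages the self-concordance of the logistic link to relate the loss gap $\mathcal{L}_t(\hat\theta_t)-\mathcal{L}_t(\bar\theta_t)$ to the quadratic form $\norm{\hat\theta_t-\theta_*}_{H_t(\theta_*)}^2$, yielding exactly $\norm{\hat\theta_t-\theta_*}_{H_t(\theta_*)}\le(2+2L)\gamma_t(\delta)+2\sqrt{1+L}\brk*{\gamma_t(\delta)+\gamma_t(\delta)^2/\sqrt{\lambda}}$ after substituting the dimension $d$, the regularizer $\lambda$, and $L=\max_{\theta\in\Theta}\norm{\theta}_2$ into $\gamma_t(\delta)$.

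The genuinely hard content sits inside the two cited results and is not re-derived here: the self-normalized concentration that places $\theta_*$ in the confidence set, and the self-concordance argument that turns the sublevel-set condition into a Mahalanobis-norm guarantee. Within our composition the main things to verify are the bookkeeping items --- that $\Theta$ contains $\theta_*$ (realizability), that the radius $\beta_t$ matches the stated $\gamma_t(\delta)$, and that the anytime (uniform-in-$t$) form follows from the stopping-time construction in \citet{abeille2021instance}, so that a single event of probability $1-\delta$ holds for all $t$ simultaneously.
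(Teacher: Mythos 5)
Your proposal is correct and follows essentially the same route as the paper: both arguments cite Proposition 1 of \citet{abeille2021instance} to place $\theta_*$ in the sublevel-set $\mathcal{E}_t(\delta)$, note that the constrained minimizer $\hat{\theta}_t$ then also lies in $\mathcal{E}_t(\delta)$ since $\mathcal{L}_t(\hat{\theta}_t)\le\mathcal{L}_t(\theta_*)$, and conclude via Lemma 1 of \citet{abeille2021instance}. The paper's own derivation is exactly this composition, stated in the paragraph preceding the lemma, so there is nothing to add.
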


This result can be applied to our algorithm as follows:
\begin{corollary}[Global Cost Bound]\label{corollary: global cost bound}
Let $\lambda=\frac{d}{L\sqrt{H}+0.5}$, ${\X=\brk[c]*{x\in\R^{SAH}: x_i\in\brk[c]{0,1},\sum_i x_i\le H}}$ and ${\kappa=
\brk*{\min_{x\in\X}\dot{\mu}(x^Tc)}^{-1}}$. With probability of at least $1-\delta$, it holds that 
\begin{align*}
    \norm{\hat{c}^k-c}_{V_k}\le 12\sqrt{\kappa SAH^{2.5}}(L+1)^{1.5}\log^2\brk*{\frac{4}{\delta}\brk*{1+\frac{k(L+0.5)}{16S^2A^2\sqrt{H}}}}.
\end{align*}
\end{corollary}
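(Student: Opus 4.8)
The idea is to recognize the estimator in \eqref{eq: logistic cost estimation process} as an instance of the logistic-regression problem behind \Cref{lemma: logistic concentration}, then (i) convert the $H_t$-norm guarantee of that lemma into a $V_k$-norm guarantee by paying one factor of $\kappa$, and (ii) instantiate the generic constants with the TerMDP parameters. First I would fix the correspondence. The unknown is the cost vector $c\in\R^{SAH}$, so the dimension is $d=SAH$, and the ``contexts'' are the occupancy vectors $\hat d_h^k$, which are binary with at most $h\le H$ ones, hence $\norm{\hat d_h^k}_2\le\sqrt H$ rather than $\le 1$ as the lemma requires. I would therefore rescale, setting $\tilde x=\hat d_h^k/\sqrt H$ and $\tilde\theta_*=\sqrt H\,c$: this preserves every inner product, $\sinner{\hat d_h^k,c}=\tilde x^{T}\tilde\theta_*$, hence the logistic model, while enforcing $\norm{\tilde x}_2\le 1$. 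In the rescaled coordinates the constraint $\norm{c}_2\le L$ becomes $\norm{\tilde\theta_*}_2\le\sqrt H\,L=:\tilde L$, so the lemma is applied with effective norm bound $\tilde L=\sqrt H L$. I would also note that up to episode $k$ the agent gathers $t=\Theta(kH)$ Bernoulli samples (one per pre-termination step); this $t$ is what feeds the $\log$-term of $\gamma_t$ and reconciles it, up to constants inside the logarithm, with the $\tfrac{k(L+0.5)}{16S^2A^2\sqrt H}$ expression in the statement.

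Next I would invoke \Cref{lemma: logistic concentration}. Since $\theta_*\in\mathcal E_t(\delta)$ makes the confidence set nonempty, it also contains the constrained estimate, and the lemma yields $\norm{\sqrt H(\hat c^k-c)}_{\tilde H_t(\theta_*)}\le B$, where $\tilde H_t(\theta_*)=\lambda I+\tfrac1H\sum_s\dot\mu(\sinner{\hat d_h^k,c})\,\hat d_h^k{\hat d_h^k}^{T}$ and $B$ is the right-hand side of \eqref{eq:theta-bound} with $(d,L)$ replaced by $(SAH,\tilde L)$. Undoing the scale factor $\sqrt H$ turns the left side into $\norm{\hat c^k-c}_M^2$ for the curvature matrix $M=H\lambda I+\sum_s\dot\mu(\sinner{\hat d_h^k,c})\,\hat d_h^k{\hat d_h^k}^{T}$. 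The crucial step is to trade the data-dependent curvature $\dot\mu$ for the curvature-free Gram matrix in $V_k=\lambda I+\sum_{k'}A_{k'}$. By the definition $\kappa=(\min_{x\in\X}\dot\mu(x^{T}c))^{-1}$ we have $\dot\mu(\sinner{\hat d_h^k,c})\ge\kappa^{-1}$, so
\[
  M \;=\; H\lambda I+\sum_{s}\dot\mu\brk*{\sinner{\hat d_h^k,c}}\hat d_h^k{\hat d_h^k}^{T}
  \;\succeq\; H\lambda I+\kappa^{-1}\sum_{s}\hat d_h^k{\hat d_h^k}^{T}
  \;\succeq\; \kappa^{-1}V_k ,
\]
where the last inequality uses $\kappa H\ge 1$ so that $H\lambda$ dominates the regularizer $\lambda$. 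This gives $\norm{\hat c^k-c}_{V_k}\le\sqrt\kappa\,B$.

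Finally I would simplify $B$ using the prescribed $\lambda=\tfrac{d}{\tilde L+0.5}$, whose purpose is to balance the two summands of $\gamma_t$: with this choice $\sqrt\lambda(\tilde L+\tfrac12)=\tfrac{d}{\sqrt\lambda}=\sqrt{d(\tilde L+0.5)}$, so $\gamma_t=\Ob\brk*{\sqrt{d(\tilde L+0.5)}\,\log\brk*{\tfrac4\delta(1+\tfrac{t}{16d\lambda})}}$. Substituting $d=SAH$ and $\tilde L=\sqrt H L$ yields $\gamma_t=\Ob\brk*{\sqrt{SAL}\,H^{3/4}\log(\cdot)}$, and the leading term $(2+2\tilde L)\gamma_t\approx\tilde L^{3/2}\sqrt d\,\log(\cdot)$ evaluates to $\Ob\brk*{\sqrt{SAH^{2.5}}\,(L+1)^{3/2}\log(\cdot)}$; combined with the $\sqrt\kappa$ factor this is exactly the claimed bound (the extra $\log$ power coming from collecting the logarithmic factors).

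I expect the main obstacle to be the power bookkeeping in $H$ and $L$: the $\sqrt H$ rescaling injects $\tilde L=\sqrt H L$ into \emph{every} term of \eqref{eq:theta-bound}, and one must verify that the transitory term $2\sqrt{1+\tilde L}\,\gamma_t^2/\sqrt\lambda$ does not exceed the leading $(2+2\tilde L)\gamma_t$ term after the balanced choice of $\lambda$. This is delicate precisely because $\gamma_t/\sqrt\lambda\approx\tilde L\log(\cdot)\gg 1$, so the naive product looks larger; resolving it requires the finer structure of \citet{abeille2021instance}, in which this term is genuinely lower order rather than the crude bound suggesting otherwise. A secondary subtlety is checking that the Hessian-to-$V_k$ comparison introduces \emph{exactly} one factor of $\kappa$ and that $H\lambda$ (not $\lambda$) controls the $\lambda I$ part of $V_k$; both hinge on $\kappa H\ge 1$, which holds since $\dot\mu\le\tfrac14$ forces $\kappa\ge 4$.
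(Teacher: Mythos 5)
Your plan follows the paper's proof route essentially step for step: cast \Cref{eq: logistic cost estimation process} as logistic regression in dimension $d=SAH$, compensate for $\norm{\hat d_h^k}_2\le\sqrt H$ by inflating the parameter bound to $\tilde L=L\sqrt H$, apply \Cref{lemma: logistic concentration}, and trade the curvature-weighted norm for the $V_k$-norm at a price of $\sqrt\kappa$ --- your semidefinite comparison via $\kappa H\ge 1$ is exactly the content of \Cref{lemma: local to global design matrix}, and your handling of the regularizer mismatch is if anything slightly more careful than the paper's. The gap is the final step, which you explicitly do not close. You claim that the leading term $(2+2\tilde L)\gamma_t$ ``evaluates to \ldots exactly the claimed bound'' with the extra logarithm ``coming from collecting the logarithmic factors'', and that controlling $2\sqrt{1+\tilde L}\,\gamma_t^2/\sqrt\lambda$ ``requires the finer structure'' of \citet{abeille2021instance}. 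Both statements are off, and together they leave the proof unfinished. With $\lambda=d/(\tilde L+0.5)$ one gets $\gamma_t\le 2\sqrt{d(\tilde L+0.5)}\log\brk*{\cdot}$, so the linear term is $\Ob\brk*{\sqrt d\,(\tilde L+1)^{1.5}\log\brk*{\cdot}}$ --- it carries only \emph{one} power of the logarithm and cannot by itself produce the $\log^2$ in the statement. The $\log^2$ comes from precisely the quadratic term you tried to argue away: the paper keeps it, bounding
\begin{align*}
\frac{\gamma_t^2}{\sqrt\lambda}
\le
4\sqrt{d}\,\brk*{\tilde L+0.5}^{1.5}\log^2\brk*{\cdot},
\end{align*}
and then absorbs both contributions into the single displayed constant. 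No domination argument and no machinery beyond \Cref{lemma: logistic concentration} is invoked; the resolution is direct arithmetic with the quadratic term retained, not shown to be lower order.

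That said, your unease points at something real. Crudely bounded, $2\sqrt{1+\tilde L}\,\gamma_t^2/\sqrt\lambda=\Ob\brk*{\sqrt d\,(\tilde L+1)^{2}\log^2\brk*{\cdot}}$, which exceeds the $(\tilde L+1)^{1.5}$ power written in the corollary (and in the paper's own last inequality) by a factor $(\tilde L+1)^{0.5}$; the paper's final line silently absorbs this factor without justification. So an honest version of this proof either carries an $(L+1)^{2}$ power (equivalently, an extra $\sqrt{L\sqrt H+1}$) in the bound, or supplies an argument for tightening the quadratic term. Diagnosing the tension and then deferring to unspecified external structure is not such an argument --- as written, your proposal stops one elementary computation short of a complete (possibly slightly weaker) bound, whereas the paper, for better or worse, simply performs that computation.
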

\begin{proof}
We apply \Cref{lemma: logistic concentration} with $x_{H(k-1)+h}=\hat{d}_h^k$ if $h\le \min\brk*{t^*_k,H-1}$ and $x_{H(k-1)+h}=0$ otherwise, where $k>0$ is the episode and $h\in\brk[s]*{H}$ is the step within the episode. Notice that zero contexts $x_{H(k-1)+h}=0$ do not affect the regression solution, so steps after termination are ignored. Also, in our case, we have $\norm{x}^2\le H$, instead of $\norm{x}\le1$, so to compensate it, the parameter $L$ will be scaled by $\sqrt{H}$. Under our assumptions, we have $d=SAH$, and thus 
$\gamma_k(\delta)\le 2\sqrt{SAH(L\sqrt{H}+0.5)}\log\brk*{\frac{4}{\delta}\brk*{1+\frac{k(L\sqrt{H}+0.5)}{16S^2A^2H}}}$ and
\begin{align*}
    \norm{\hat{c}^k-c}_{H_t(c)}
    &\le (2+2L\sqrt{H})\gamma_t(\delta) + 2\sqrt{1+L\sqrt{H}}\brk*{\gamma_t(\delta) + \frac{\gamma_t(\delta)^2}{\sqrt{\lambda}}} \\
    &\le 4(1+L\sqrt{H})\gamma_t(\delta) + \frac{\sqrt{1+L\sqrt{H}}}{\sqrt{\lambda}}\gamma_t(\delta)^2 \\
    &\le 12\sqrt{SAH}(L\sqrt{H}+1)^{1.5}\log^2\brk*{\frac{4}{\delta}\brk*{1+\frac{k(L\sqrt{H}+0.5)}{16S^2A^2H}}} \\
    & \le 12\sqrt{SAH^{2.5}}(L+1)^{1.5}\log^2\brk*{\frac{4}{\delta}\brk*{1+\frac{k(L+0.5)}{16S^2A^2\sqrt{H}}}}
\end{align*}
We conclude the proof by using the fact that $\kappa H_k(c) \succeq V_t$ (\Cref{lemma: local to global design matrix}) for $\theta^* = c$.
\end{proof}

\begin{lemma}[Connection between Local and Global Design Matrices]\label{lemma: local to global design matrix}
For any $\theta$, if $\kappa_\X(\theta)=\frac{1}{\min_{x\in\X}\dot{\mu}(x^T\theta)}$, then it holds that
\begin{align*}
    \kappa_{\X}(\theta) H_t(\theta) \succeq V_t\enspace.
\end{align*}
\end{lemma}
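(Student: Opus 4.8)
The plan is to prove this by a direct positive-semidefinite (PSD) monotonicity argument, exploiting the fact that every context feeding into the design matrices lies in $\X$. Recall from the definitions that $H_t(\theta)=\sum_{s=1}^{t-1}\dot{\mu}(x_s^T\theta)\,x_sx_s^T + \lambda I$ while the (unweighted) design matrix is $V_t=\sum_{s=1}^{t-1}x_sx_s^T + \lambda I$. The two differ only in that each rank-one term $x_sx_s^T$ in $H_t$ carries the scalar weight $\dot\mu(x_s^T\theta)$, so the entire claim reduces to showing that multiplying $H_t$ by $\kappa_\X(\theta)$ inflates each of these weights to at least $1$ and simultaneously dominates the regularization term.

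First I would set $m \triangleq \min_{x\in\X}\dot\mu(x^T\theta)$, so that $\kappa_\X(\theta)=1/m$ by definition. Since every context satisfies $x_s\in\X$, the minimizing property gives $\dot\mu(x_s^T\theta)\ge m$ for each $s\in\{1,\dots,t-1\}$, which rearranges to $\kappa_\X(\theta)\,\dot\mu(x_s^T\theta)\ge 1$. Because $x_sx_s^T\succeq 0$ is PSD and scaling a PSD matrix by a constant $\ge 1$ only increases it in the Loewner order, I would conclude termwise that
\begin{align*}
    \kappa_\X(\theta)\,\dot\mu(x_s^T\theta)\,x_sx_s^T \succeq x_sx_s^T,
\end{align*}
and summing these relations over $s$ preserves the PSD inequality.

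The only remaining piece is the regularization term: I would use the fact that the logistic derivative obeys $\dot\mu(\cdot)\le \tfrac14$ everywhere, hence $m\le \tfrac14$ and $\kappa_\X(\theta)=1/m\ge 4\ge 1$, so that $\kappa_\X(\theta)\lambda I \succeq \lambda I$. Adding this to the summed rank-one bound yields
\begin{align*}
    \kappa_\X(\theta)H_t(\theta)
    = \sum_{s=1}^{t-1}\kappa_\X(\theta)\dot\mu(x_s^T\theta)x_sx_s^T + \kappa_\X(\theta)\lambda I
    \succeq \sum_{s=1}^{t-1}x_sx_s^T + \lambda I
    = V_t,
\end{align*}
which is exactly the claim. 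There is no genuine obstacle here; the one point warranting care is verifying that every context $x_s$ indeed belongs to $\X$ so that the uniform lower bound $m$ applies to each weight, and confirming $\kappa_\X(\theta)\ge 1$ so the regularizer is handled by the same scaling rather than needing a separate argument.
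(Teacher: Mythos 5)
Your proof is correct and follows essentially the same argument as the paper: lower-bound each weight $\dot{\mu}(x_s^T\theta)$ by the minimum $1/\kappa_\X(\theta)$ over $\X$, and use $\kappa_\X(\theta)\ge 1$ to dominate the regularization term (the paper asserts $\kappa_\X(\theta)>1$ without comment, whereas you justify it via $\dot\mu\le\tfrac14$, but this is the same underlying fact). The only cosmetic difference is that you multiply through by $\kappa_\X(\theta)$ and compare termwise, while the paper keeps the factor $1/\kappa_\X(\theta)$ on the right-hand side.
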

\begin{proof}
For any $\theta$,
\begin{align*}
    H_t(\theta) & = \sum_{s=1}^{t-1}\dot{\mu}(x_s^T\theta)x_sx_s^T + \lambda_t I \\
    &\succeq \min_{x\in\X}\dot{\mu}(x_s^T\theta)\sum_{s=1}^{t-1}x_sx_s^T + \lambda_t I\\
    &= \frac{1}{\kappa_{\X}(\theta)}\sum_{s=1}^{t-1}x_sx_s^T + \lambda_t I\\
    &\overset{1< \kappa_{\X}(\theta)}{\succeq} \frac{1}{\kappa_{\X}(\theta)}\brk*{\sum_{s=1}^{t-1}x_sx_s^T + \lambda_t I}
    = \frac{1}{\kappa_{\X}(\theta)}V_t\enspace.
\end{align*}
\end{proof}

\subsection{From Global to Local Cost Concentration}\label{supp: local cost concentration}

In our setting, the global cost concentration also implies a component-wise (local) concentration of the costs. This is in stark contrast to standard regression results, where specific coordinates cannot always be recovered. To see the intuition behind it, assume for simplicity that we observe the exact termination probability (rather than just a random termination signal). Notably, on each time step up to the termination, we could directly reconstruct the partial sums $\sum_{t=1}^h c_t(s_t^k,a_t^k)$ by inverting the logistic function. Then, the costs of all visited states could be directly calculated through the difference between the cumulative costs of any two consecutive steps. Notice that we can calculate this difference only since we get feedback on \emph{every} time step up to termination. Were we only to observe feedback on the cumulative costs at the termination time, then we would not be able to guarantee to reconstruct anything but the global cumulative costs we were given (see example in \Cref{fig: globality of last-iterate feedback}). In this case, the global (vector-wise) concentration used in logistic bandits (see \Cref{corollary: global cost bound} in \Cref{supp: global cost concentration}) cannot not be improved.

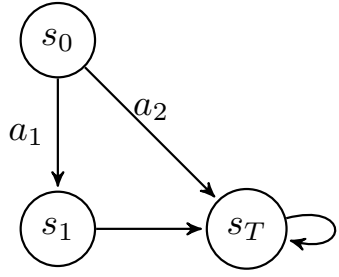
\begin{figure}
	\centering
	\resizebox{2in}{!}{
		\begin{tikzpicture}[->,>=stealth',shorten >=1pt,auto,node distance=1.8cm,
		semithick, state/.style={circle, draw, minimum size=0.3cm}]
		\tikzstyle{every state}=[thick]
		]
		
		\node[state] (S0) {$s_0$};
		\node[state] (S1) [below of=S0] {$s_1$};
		\node[state] (S2) [right of=S1] {$s_T$};
	
		\path
		(S0) edge  [below]  node[pos=0.1,above ]{ }  node [left] {$a_1$} (S1)
		(S0)	edge  [right]   node[pos=0.1,below ]{ }         node [above] {$a_2$} (S2) 
		(S1) edge [right] node[pos=1,right]{} node {} (S2) 
	    (S2) edge  [loop right] node[pos=0.1,above]{}  node {} (S2);
		\end{tikzpicture}
	}
	\caption{\small Example for the necessity of step-wise feedback for local estimation. Plot depicts a deterministic MDP which always begins at state $s_0$. Assume that we only receive a binary signal of whether there was termination at the end of the episode, but do not observe it during the episode. If we only observe the trajectory $s_0 \to s_1 \to s_T$, identifying the state in which the termination occurred is not possible, until we further observe the trajectory $s_0 \to s_T$. That is, there is no way to know whether termination occurred due to state $s_0$ or $s_1$. In contrast, in the TerMDP model we view termination signals at every iteration, allowing us to identify the costs locally w.r.t. every state and action. }
	\label{fig: globality of last-iterate feedback}
\end{figure}

\begin{lemma}[Local Cost Estimation Confidence Bound]
\label{prop: local cost confidence}
For any $\delta>0$, with probability of at least $1-\delta$, for any episode $k\in [K]$, it holds that for all $h\in[H-1],s\in\sset ,a\in\aset$,
\begin{align*}
    \abs{\hat c_h^k(s,a) - c_h(s,a) } \leq  24\sqrt{\kappa SAH^{2.5}}(L+1)^{1.5}\log\brk*{\frac{4}{\delta}\brk*{1+\frac{k(L+0.5)}{16S^2A^2\sqrt{H}}}}\frac{1}{\sqrt{n^k_h(s,a) + 4\frac{SAH}{L\sqrt{H}+0.5}}}.
\end{align*}
\end{lemma}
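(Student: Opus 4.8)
The plan is to obtain the coordinate-wise (local) bound from the \emph{global} Mahalanobis bound of \Cref{corollary: global cost bound} through a single Cauchy--Schwarz step, after which the entire difficulty concentrates in controlling one scalar quantity. Writing $e_{(h,s,a)}$ for the canonical basis vector of $\R^{SAH}$ indexed by $(h,s,a)$, we have $\hat c_h^k(s,a)-c_h(s,a)=e_{(h,s,a)}^T(\hat c^k-c)$, so the generalized Cauchy--Schwarz inequality in the $V_k$-geometry gives $\abs{\hat c_h^k(s,a)-c_h(s,a)}\le\norm{e_{(h,s,a)}}_{V_k^{-1}}\norm{\hat c^k-c}_{V_k}$. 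The factor $\norm{\hat c^k-c}_{V_k}$ is bounded for all $k$ simultaneously with probability $1-\delta$ by \Cref{corollary: global cost bound}; everything then reduces to showing that $\norm{e_{(h,s,a)}}_{V_k^{-1}}$ decays like $(n_h^k(s,a)+4\lambda)^{-1/2}$. This last step is exactly where the step-wise termination feedback (as opposed to end-of-episode feedback) is exploited, and is what upgrades a global bound into a local one.

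To bound $\norm{e_{(h,s,a)}}_{V_k^{-1}}$ I would use the variational identity $u^TV_k^{-1}u=\brk*{\min_x\brk[c]*{x^TV_kx:u^Tx=1}}^{-1}$ with $u=e_{(h,s,a)}$, so it suffices to lower bound $x^TV_kx=\lambda\norm{x}_2^2+\sum_{k'=1}^{k-1}\sum_{h'=1}^{t^*_{k'}}\brk*{\hat d_{h'}^{k'}\cdot x}^2$ over all $x$ with $x_{(h,s,a)}=1$. The key structural fact is that the cumulative-visitation vectors form a staircase: $\hat d_{h'}^{k'}-\hat d_{h'-1}^{k'}=e_{(h',s_{h'}^{k'},a_{h'}^{k'})}$. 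Hence, for each of the $n_h^k(s,a)$ episodes $k'$ in which $(s_h^{k'},a_h^{k'})=(s,a)$, the two consecutive terms satisfy $\hat d_h^{k'}\cdot x-\hat d_{h-1}^{k'}\cdot x=x_{(h,s,a)}=1$, so by $a^2+b^2\ge\tfrac12(a-b)^2$ the pair $h'\in\{h-1,h\}$ contributes at least $\tfrac12$ to the double sum (for $h=1$ the single term $\hat d_1^{k'}\cdot x=1$ already contributes $1$). Since these index pairs are disjoint across episodes, $\sum_{k'}\sum_{h'}(\hat d_{h'}^{k'}\cdot x)^2\ge\tfrac12 n_h^k(s,a)$, and together with $\lambda\norm{x}_2^2\ge\lambda x_{(h,s,a)}^2=\lambda$ this yields $x^TV_kx\ge\lambda+\tfrac12 n_h^k(s,a)$ for every feasible $x$. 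Consequently $\norm{e_{(h,s,a)}}_{V_k^{-1}}^2\le\brk*{\lambda+\tfrac12 n_h^k(s,a)}^{-1}\le 4\brk*{n_h^k(s,a)+4\lambda}^{-1}$.

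It then remains only to assemble the two estimates and substitute $\lambda=\tfrac{SAH}{L\sqrt H+0.5}$ (so $4\lambda=\tfrac{4SAH}{L\sqrt H+0.5}$). Combining the Cauchy--Schwarz bound with \Cref{corollary: global cost bound} gives $\abs{\hat c_h^k(s,a)-c_h(s,a)}\le \tfrac{2}{\sqrt{n_h^k(s,a)+4\lambda}}\cdot 12\sqrt{\kappa SAH^{2.5}}(L+1)^{1.5}\log^2\brk*{\tfrac4\delta\brk*{1+\tfrac{k(L+0.5)}{16S^2A^2\sqrt H}}}$, which is exactly the threshold defining the failure event $F^c_k$ (with constant $24=2\cdot12$). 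The high-probability claim for all $(k,h,s,a)$ is inherited verbatim from \Cref{corollary: global cost bound}, since the Cauchy--Schwarz and localization steps are deterministic and require no additional union bound. The main obstacle is the localization argument of the second paragraph: regularization alone only gives $\norm{e_{(h,s,a)}}_{V_k^{-1}}^2\le 1/\lambda$, which does not decay with $n_h^k(s,a)$, and recovering the per-coordinate $(n_h^k(s,a))^{-1/2}$ rate genuinely relies on the staircase-difference identity --- i.e., on observing a (non-)termination signal at every step rather than only at the end of the episode, precisely the distinction illustrated in \Cref{fig: globality of last-iterate feedback}.
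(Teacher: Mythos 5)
Your proof is correct, and it shares the paper's skeleton---Cauchy--Schwarz in the $V_k$-geometry, $\abs{\hat c_h^k(s,a)-c_h(s,a)}\le \norm{e_{h,s,a}}_{V_k^{-1}}\norm{\hat c^k-c}_{V_k}$, followed by \Cref{corollary: global cost bound}---but the localization step, which is the crux, is carried out by a genuinely different argument. The paper bounds $\norm{e_{h,s,a}}_{V_k^{-1}}$ by discarding episodes that do not visit $(h,s,a)$, splitting $V_k^{-1}$ episode-by-episode via the HM--AM inequality for positive matrices, and then controlling each summand through \Cref{lemma: Inverse Eigenvalues bound}: a permutation reduction to the staircase Gram matrix $\sum_t \bar e_t\bar e_t^T$, an explicit computation of its tridiagonal inverse, and a Gershgorin bound giving $\lambda_{\min}(B)\ge \tfrac{1}{4}$. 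You instead use the variational identity $\norm{u}_{V_k^{-1}}^2=\brk*{\min\brk[c]*{x^TV_kx : u^Tx=1}}^{-1}$ and lower-bound the quadratic form directly: in every episode counted by $n_h^k(s,a)$ the consecutive feedback vectors satisfy $\hat d_h^{k'}-\hat d_{h-1}^{k'}=e_{h,s,a}$, so $(\hat d_{h-1}^{k'}\cdot x)^2+(\hat d_h^{k'}\cdot x)^2\ge \tfrac{1}{2}$ whenever $x_{(h,s,a)}=1$, and these contributions are disjoint across episodes. Both arguments exploit exactly the same structural fact (step-wise termination feedback makes the visitation vectors a staircase), but yours is shorter and more elementary---no matrix HM--AM inequality, no permutation bookkeeping, no explicit inversion or eigenvalue computation---and it even yields the marginally sharper constant $\brk*{\lambda+n/2}^{-1}$ versus the paper's $\brk*{\lambda+n/4}^{-1}$; either suffices for the stated $4/(n+4\lambda)$. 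Your edge cases ($h=1$, episodes terminating exactly at step $h$) are handled correctly, and you are right that no additional union bound is needed, since only \Cref{corollary: global cost bound} is probabilistic.

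One cosmetic point: your derivation ends with $\log^2\brk*{\cdot}$, inherited from \Cref{corollary: global cost bound}, whereas the lemma statement displays $\log\brk*{\cdot}$ to the first power. This mismatch is present in the paper itself (its proof also invokes the corollary with $\log^2$ and then silently drops the square), and the quantities actually used downstream---the failure event $F^c_k$ and the bonus $b_k^c$---carry $\log^2$, so your version is the internally consistent one.
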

\begin{proof}
    For any $k\in\brk[s]{K}$, $h\in\brk[s]{H-1}$, $s\in\sset$ and  $h\in\sset$, we can bound
    \begin{align}\label{eq: single hsa concentration part 1}
        \abs{c_h^k(s,a)-c_h(s,a)} = \abs{\brk[a]*{e_{h,s,a}, \hat c_k - c}} \leq \norm{e_{h,s,a}}_{V_k^{-1}} \norm{\hat c_k - c}_{V_k},
    \end{align}
    where $e_{h,s,a}\in\R^{HSA}$ is a unit vector in the $(h,s,a)$ coordinate, and the inequality is due to Cauchy-Schwartz.
    We now turn to bound $\norm{e_{h,s,a}}_{V_k^{-1}}$.
    
    \begin{align*}
        V_k^{-1} & 
        \preceq \brk*{\lambda I + \sum_{k'=1}^{k-1} A_{k'} \mathbbm{1} \brk[c]*{(h,s,a)\in \tau^{k'}}}^{-1}
        \\ & =
        \brk*{\sum_{k'=1}^{k-1}  \brk*{\frac{\lambda}{n^{k-1}_h(s,a)} I + A_{k'}}\mathbbm{1} \brk[c]*{(h,s,a)\in \tau^{k'}}}^{-1}
        \\ & \preceq \frac{1}{\brk*{n^{k-1}_h(s,a)}^2}
        \sum_{k'=1}^{k-1}  \brk*{\frac{\lambda}{n^{k-1}_h(s,a)} I + A_{k'}}^{-1}\mathbbm{1} \brk[c]*{(h,s,a)\in \tau^{k'}}\enspace,
    \end{align*}
    where $n^{k-1}_h(s,a) = \sum_{k'=1}^{k-1} \mathbbm{1} \brk[c]*{(h,s,a)\in \tau^{k'}}$ and the third transition is due to HM-AM inequality for positive matrices \cite{bhagwat1978inequalities}.
    
    Using this relation, we can bound $\norm{e_{h,s,a}}_{V_k^{-1}}^2$ by bounding the maximal eigenvalue of each of the summands. We do so in \Cref{lemma: Inverse Eigenvalues bound}, and obtain the following bound:
    \begin{align*}
         \norm{e_{h,s,a}}_{V_k^{-1}}^2 &= e_{h,s,a}^T V_k^{-1} e_{h,s,a} 
         \\ &
         \leq    \frac{1}{\brk*{n^{k-1}_h(s,a)}^2}
        \sum_{k'=1}^{k-1}  e_{h,s,a}^T\brk*{\frac{\lambda}{n^{k-1}_h(s,a)} I + A_{k'}}^{-1}\mathbbm{1} \brk[c]*{(h,s,a)\in \tau^{k'}}  e_{h,s,a} \\
        & \leq 
        \frac{1}{\brk*{n^{k-1}_h(s,a)}^2}
        \sum_{k'=1}^{k-1}  \frac{1}{\frac{1}{4} + \frac{\lambda}{n^{k-1}_h(s,a)}}\mathbbm{1} \brk[c]*{(h,s,a)\in \tau^{k'}} \tag{\Cref{lemma: Inverse Eigenvalues bound}}\\
        & = \frac{n^{k-1}_h(s,a)}{\brk*{n^{k-1}_h(s,a)}^2}
        \frac{1}{\frac{1}{4} + \frac{\lambda}{n^{k-1}_h(s,a)}} \\
        & = \frac{4}{n^{k-1}_h(s,a) + 4\lambda}.
    \end{align*}

    By plugging into \cref{eq: single hsa concentration part 1}, we obtain that for any $k$ and any $h,s,a$
    \begin{align*}
        \abs{c_h^k(s,a)-c_h(s,a)}
        \leq  \norm{\hat c_k - c}_{V_k}\norm{e_{h,s,a}}_{V_k^{-1}} \leq  \norm{\hat c_k - c}_{V_k}\sqrt{\frac{4}{n^{k-1}_h(s,a) + 4\lambda}}.
    \end{align*}
    Finally, by \Cref{corollary: global cost bound} with $d=SAH$, with probability of at least $1-\delta$ it holds that
        \begin{align*}
        \abs{c_h^k(s,a)-c_h(s,a)}
        & 24\sqrt{\kappa SAH^{2.5}}(L+1)^{1.5}\log\brk*{\frac{4}{\delta}\brk*{1+\frac{k(L+0.5)}{16S^2A^2\sqrt{H}}}}\frac{1}{\sqrt{n^k_h(s,a) + 4\frac{SAH}{L\sqrt{H}+0.5}}},
    \end{align*}
    
    which concludes the proof.

\end{proof}

\begin{lemma}[Inverse Eigenvalues Bound]\label{lemma: Inverse Eigenvalues bound}
    If $(h,s,a)\in \tau^{k}$ and $e_{h,s,a}\in\R^{HSA}$ is a unit vector in the coordinate $(h,s,a)$, then $e_{h,s,a}^T\brk*{\lambda I + A_{k}}^{-1}e_{h,s,a}\le \frac{1}{\frac{1}{4}+\lambda}$ .
\end{lemma}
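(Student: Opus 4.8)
The plan is to reduce the claimed bound to a lower bound on a quadratic form, and then exploit the telescoping (partial-sum) structure of the cumulative occupancy vectors $\hat d_{h'}^k$. Recall the variational identity valid for any positive-definite matrix $M$ and any unit vector $e$,
\[
    e^T M^{-1} e = \frac{1}{\min_{x:\, e^T x = 1}\, x^T M x},
\]
which follows by solving the equality-constrained quadratic minimization (the minimizer is $x^\star \propto M^{-1}e$). Applying this with $M = \lambda I + A_k \succ 0$ (positive-definite since $\lambda>0$ and $A_k\succeq 0$), the lemma is equivalent to the statement that every $x\in\R^{HSA}$ with $e_{h,s,a}^T x = 1$ satisfies
\[
    x^T\left(\lambda I + A_k\right) x \;\ge\; \lambda + \tfrac14 .
\]

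Next I would unpack the data term using the structure of $A_k=\sum_{h'=1}^{t_k^*}\hat d_{h'}^k {\hat d_{h'}^k}^T$, so that $x^T A_k x = \sum_{h'=1}^{t_k^*}\left(x^T\hat d_{h'}^k\right)^2$. Since $\hat d_{h'}^k$ is the cumulative occupancy, i.e. the partial sum of the basis vectors indexed by the visited coordinates up to time $h'$, writing $y_t := x_{(t,s_t^k,a_t^k)}$ gives $x^T\hat d_{h'}^k = \sum_{t=1}^{h'} y_t =: Y_{h'}$. The key consequence is that the single-coordinate constraint $e_{h,s,a}^T x = 1$ — which reads $y_h = 1$ because $(h,s,a)\in\tau^k$ means $s_h^k=s,\ a_h^k=a$ — becomes a constraint on the \emph{difference of two consecutive partial sums}, namely $Y_h - Y_{h-1} = y_h = 1$.

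Finally I would lower-bound the two pieces separately. For the regularization term, $\lambda\norm{x}_2^2 \ge \lambda\, x_{(h,s,a)}^2 = \lambda$. For the data term, note that whenever $h\ge 2$ both indices $h-1\ge 1$ and $h\le t_k^*$ lie in the summation range (the bound $h\le t_k^*$ holds because $(h,s,a)\in\tau^k$), so applying $a^2+b^2\ge \tfrac12(a-b)^2$ with $a=Y_h,\ b=Y_{h-1}$ yields $\sum_{h'}Y_{h'}^2 \ge Y_{h-1}^2 + Y_h^2 \ge \tfrac12 (Y_h-Y_{h-1})^2 = \tfrac12 \ge \tfrac14$; and when $h=1$ the single term already gives $Y_1^2 = y_1^2 = 1 \ge \tfrac14$. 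Adding the two bounds establishes $x^T(\lambda I + A_k)x \ge \lambda + \tfrac14$ for all admissible $x$, and the variational identity then delivers $e_{h,s,a}^T(\lambda I + A_k)^{-1}e_{h,s,a}\le \tfrac{1}{\tfrac14 + \lambda}$.

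The main obstacle here is purely conceptual rather than computational: one must recognize that, because feedback (and hence the rank-one terms of $A_k$) is available at \emph{every} step, the cumulative vectors are partial sums whose consecutive difference isolates exactly the target basis direction $e_{h,s,a}$; once this telescoping observation is made, the remaining arithmetic is routine. The only care needed is with the boundary case $h=1$ (where $Y_{h-1}$ is absent) and with confirming $h\le t_k^*$, both of which are immediate. I also note that this argument in fact yields the sharper constant $\tfrac12$ in place of $\tfrac14$, so the stated bound holds comfortably.
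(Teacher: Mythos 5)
Your proof is correct, but it takes a genuinely different route from the paper's. The paper argues spectrally: it permutes coordinates so that visited tuples appear in visitation order, identifies the relevant block of $A_k$ as the explicit matrix $B(i,j) = t_k^*+1-\max\{i,j\}$, computes $B^{-1}$ by row reduction (a tridiagonal, discrete-Laplacian-type matrix), and applies the Gershgorin circle theorem to get $\lambda_{\max}(B^{-1}) \le 4$, i.e.\ $\lambda_{\min}(B)\ge \tfrac14$, whence the bound. You instead invoke the variational identity $e^T M^{-1} e = \bigl(\min_{e^T x=1} x^T M x\bigr)^{-1}$ and exploit the partial-sum structure directly: the single-coordinate constraint $y_h=1$ becomes a constraint on the difference $Y_h-Y_{h-1}$ of two consecutive entries of the quadratic form, and $a^2+b^2\ge \tfrac12(a-b)^2$ finishes. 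Both proofs rest on the same structural fact---feedback at every step makes the design vectors nested partial sums whose consecutive differences isolate single coordinates (this is precisely why the paper's $B^{-1}$ is a difference operator)---but your route avoids the permutation bookkeeping, the explicit inversion, and its $t^*\le 2$ edge case, and it bounds the quadratic form only in the coordinate directions actually needed, which is why you obtain the sharper constant $\tfrac12$ in place of $\tfrac14$; indeed your bound $1/(\lambda+\tfrac12)$ is attained (e.g.\ $t_k^*=2$, $h=2$, $\lambda\to 0$ gives exactly $2$), whereas the paper's uniform spectral bound controls all directions in the visited block and is correspondingly looser. One shared caveat: like the paper, you implicitly read ``$(h,s,a)\in\tau^k$'' as meaning $h$ lies inside the summation range of $A_k$ (i.e.\ $h\le\min\{t_k^*,H-1\}$); for a visit at a step carrying no termination feedback the coordinate lies in the kernel of $A_k$ and only the trivial bound $1/\lambda$ holds, which is exactly what the paper's opening remark (replacing $t_k^*$ by $\min\{t_k^*,H-1\}$) is guarding against.
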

\begin{proof}
Throughout the proof, we assume for brevity that there was termination in all episodes, i.e., $t_k^*\le H-1$ for all $k\in\brk[s]{K}$. Otherwise, the exact same proof follows by replacing $t_k^*$ by $\min\brk[c]*{t_k^*,H-1}$ (namely, treating the lack of termination feedback at the last timestep of the episodes). With some abuse of notations, we also use $e_i\in\R^{HSA}$ to denote the unit vector in the $i$-th coordinate.

To simplify the proof, it would be helpful to assume that the $t$-th coordinate of the empirical occupancy vector represents the state that was visited on the $t$-th time step (namely, coordinates are sorted by their visitation order). States that were not visited can be arbitrarily ordered. Formally, this can be done using any permutation matrix $P_k$ such that $e_{t,s_t^k,a_t^k}=P_k e_t$ for all $t\in\brk[s]{t_k^*}$ (and other coordinates can be arbitrarily permuted). In particular, denoting $\bar{e}_t = \sum_{i=1}^t e_{i}=\brk1{\underbrace{1,\dots,1}_{t-\mathrm{times}},0,\dots,0}^T$, we can write $\hat d_t^k = \sum_{i=1}^te_{i,s_i^k,a_i^k} = \sum_{i=1}^t P_ke_i = P_k\bar{e}_t$. 

Then, we have that
\begin{align*}
    e_{h,s,a}^T\brk*{\lambda I + A_{k}}^{-1}e_{h,s,a}
    &= e_{h,s,a}^T\brk*{\lambda I + \sum_{t=1}^{t^*_k}\hat d_t^k{\hat d^k_t \vphantom{ }}^{T}}^{-1}e_{h,s,a} \\
    &= e_{h,s,a}^T\brk*{\lambda I + \sum_{t=1}^{t^*_k}P_k\bar e_t \bar e_t^T P_k^T}^{-1}e_{h,s,a} \\
    &= e_{h,s,a}^T\brk*{P_k\brk*{\lambda I + \sum_{t=1}^{t^*_k}\bar e_t \bar e_t^T} P_k^T}^{-1}e_{h,s,a} \\
    &= e_{h,s,a}^TP_k\brk*{\lambda I + \sum_{t=1}^{t^*_k}\bar e_t \bar e_t^T}^{-1}P_k^Te_{h,s,a}\enspace,
\end{align*}
where the two last relations is since permutation matrices are orthogonal, namely $P_k^{-1} = P_k^T$. 

Now, notice that $P_k$ permutes the first $t_k^*$ components to the visited $(t,s_t^k,a_t^k)$ tuples. Thus, its inverse $P_k^T$ permutes visited tuples $(t,s_t^k,a_t^k)$ to the $t$-th coordinate, namely $P_k^Te_{t,s_t^k,a_t^k}=e_t$ and 
\begin{align*}
    e_{h,s,a}^T\brk*{\lambda I + A_{k}}^{-1}e_{h,s,a} = e_{h}^T\brk*{\lambda I + \sum_{t=1}^{t^*_k}\bar e_t \bar e_t^T}^{-1}e_{h}\enspace.
\end{align*}

Moreover, $\lambda I + \sum_{t=1}^{t^*_k}\bar e_t \bar e_t^T$ is a block-diagonal matrix whose first block is located at its first $t^*_k$ coordinates. Thus, each block can be inverted independently, and as $e_h$ is located in the first block of the matrix, w.l.o.g. we can only focus on the $t^*_k\times t^*_k$ first-block of the matrix. We denote this block by $B\in\R^{t^*_k\times t^*_k}$, and if $u_h\in\R^{t_k^*}$ is a unit vector in the $h^{\text{th}}$ coordinate, we can write $e_{h}^T\brk*{\lambda I + \sum_{t=1}^{t^*_k}\bar e_t \bar e_t^T}^{-1}e_{h} = u_h^T(\lambda I+B)^{-1} u_h$.

Directly calculating of the sum $\sum_{t=1}^{t^*_k}\bar e_t \bar e_t^T$, one can easily see that $B(i,j) = t^*_k + 1 -  \max\brk[c]{i,j}$, as we now illustrate:
\begin{align*}
    \brk*{
    \begin{array}{ccccc}
        t_k^* & t_k^*-1    & t_k^*-2 & \dots & 1 \\
        t_k^* -1 & t_k^*-1 & t_k^*-2 & \dots & 1 \\
        t_k^* -2 & t_k^*-2 & t_k^*-2 & \dots & 1\\
        \vdots & \vdots & \vdots & \vdots & \vdots \\
        1 & 1 & 1 & 1 & 1
    \end{array}}
\end{align*}
This matrix can be easily diagonalized, which can be used to calculate its inverse:
\begin{align*}
    &\brk*{
    \begin{array}{ccccc|ccccc}
        t_k^* & t_k^*-1    & t_k^*-2 & \dots & 1 & 1 & 0 & 0 & \dots & 0\\
        t_k^* -1 & t_k^*-1 & t_k^*-2 & \dots & 1 & 0 & 1 & 0 & \dots & 0\\
        t_k^* -2 & t_k^*-2 & t_k^*-2 & \dots & 1 & 0 & 0 & 1 & \dots & 0\\
        \vdots & \vdots & \vdots & \vdots & \vdots & \vdots & \vdots & \vdots & \vdots & \vdots\\
        1 & 1 & 1 & 1 & 1 & 0 & 0 & 0 & \dots & 1
    \end{array}}\\
    &=\brk*{
    \begin{array}{ccccc|ccccc}
        1 & 0    & 0 & \dots & 0 & 1 & -1 & 0 & \dots & 0\\
        1 & 1 & 0 & \dots & 0 & 0 & 1 & -1 & \dots & 0\\
        1 & 1 & 1 & \dots & 0 & 0 & 0 & 1 & \dots & 0\\
        \vdots & \vdots & \vdots & \vdots & \vdots & \vdots & \vdots & \vdots & \vdots & \vdots\\
        1 & 1 & 1 & 1 & 1 & 0 & 0 & 0 & \dots & 1
    \end{array}} \\
    &=\brk*{
    \begin{array}{ccccc|ccccc}
        1 & 0    & 0 & \dots & 0 & 1 & -1 & 0 & \dots & 0\\
        0 & 1 & 0 & \dots & 0 & -1 & 2 & -1 & \dots & 0\\
        0 & 0 & 1 & \dots & 0 & 0 & -1 & 2 & \dots & 0\\
        \vdots & \vdots & \vdots & \vdots & \vdots & \vdots & \vdots & \vdots & \vdots & \vdots\\
        0 & 0 & 0 & 0 & 1 & 0 & 0 & 0 & \dots & 2
    \end{array}}
\end{align*}
In the first step, we subtracted rows $i+1$ from the $i$ rows, and in the second step, we subtracted the $i-1$ rows from the $i$ rows. Thus, the inverse can be explicitly written as follows:
\begin{align*}
    B^{-1}_{i,j} = 
    \left\{
    \begin{array}{ll}
         1 & i=j=1 \\
         2 & i=j>1 \\
         -1 & i=j-1 \; \text{or} \; i=j+1 \\
         0 & \text{o.w.}
    \end{array}
    \right.
\end{align*}
Notice that the absolute values of all rows is smaller than $4$. Then (e.g., by Gershgorin circle theorem), $\lambda_{\max}(B^{-1})\le 4$, and since $B$ is PSD, $\lambda_{\min}(B)\ge\frac{1}{4}$. Finally, we get the desired result by bounding
\begin{align*}
    e_{h,s,a}^T\brk*{\lambda I + A_{k}}^{-1}e_{h,s,a} 
    &= 
    u_{h}^T\brk*{\lambda I + B}^{-1}u_{h} \\
    &\le 
    \underbrace{\norm{u_{h}}_2}_{=1}\lambda_{\max}\brk*{\brk*{\lambda I + B}^{-1}} \\
    &= 
    \frac{1}{\lambda_{\min}\brk*{\lambda I + B}} 
    \le 
    \frac{1}{\frac{1}{4}+\lambda}\enspace.
\end{align*}

\textbf{Remark.} Notice that the same conclusion still holds in the extreme case of $t^*\le2$: we get $\lambda_{\max}(B^{-1})\le 3$ for $t^*=2$ and $\lambda_{\max}(B^{-1}) = 1$ (as $B$ contains a single element).
\end{proof}


\newpage
\section{Useful Lemmas}
\label{appendix: userful lemmas}

\begin{lemma}[Value difference lemma, e.g., \citet{dann2017unifying}, Lemma E.15]\label{lemma: value difference lemma}
    Consider two MDPs $\M=\brk*{\sset,\aset,P,r,H}$ and $\M'=\brk*{\sset,\aset,P',r',H}$. For any policy $\pi$ and any $s,h$, the following relation holds:
    \begin{align*}
        &V_h^{\pi}(s;\M) - V_h^{\pi}(s;\M') \\
            & \quad = \E{\sum_{h'=h}^H \brk*{r_{h'}(s_{h'},a_{h'}) - r_{h'}'(s_{h'},a_{h'})}  + \brk*{P-P'}(\cdot \mid s_{h'},a_{h'})^T V_{h'+1}^{\pi}\brk*{\cdot ; \M'} \vert s_h = s,\pi,P}
    \end{align*}
\end{lemma}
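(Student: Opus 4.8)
The plan is to prove the identity by backward induction on the time index $h$, using the fixed-policy Bellman equations in both MDPs together with a single add-and-subtract step that isolates the per-step discrepancies. The base case at $h = H+1$ is immediate: by convention $V_{H+1}^\pi(\cdot;\M) = V_{H+1}^\pi(\cdot;\M') = 0$ and the sum on the right-hand side is empty, so both sides vanish.

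For the inductive step I assume the identity at $h+1$ for every state and expand both value functions via the Bellman equations, $V_h^\pi(s;\M) = \sum_a \pi_h(a \mid s)\brk*{r_h(s,a) + P_h(\cdot \mid s,a)^T V_{h+1}^\pi(\cdot;\M)}$ and the analogous expression for $\M'$ with $r', P'$ and $V^\pi(\cdot;\M')$. After subtracting, the crucial move is to add and subtract $P_h(\cdot \mid s,a)^T V_{h+1}^\pi(\cdot;\M')$ inside the action average. This decomposes the difference into three pieces: the reward gap $r_h(s,a) - r_h'(s,a)$; the transition gap $\brk*{P_h - P_h'}(\cdot \mid s,a)^T V_{h+1}^\pi(\cdot;\M')$, where it is deliberately the value of $\M'$ that appears; and the residual $P_h(\cdot \mid s,a)^T\brk*{V_{h+1}^\pi(\cdot;\M) - V_{h+1}^\pi(\cdot;\M')}$. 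Averaged over $a \sim \pi_h$, the first two pieces are exactly the $h' = h$ summand of the target expectation, while the residual is an expectation over $a \sim \pi_h$ and $s' \sim P_h(\cdot \mid s,a)$ of a value difference at step $h+1$, to which I apply the induction hypothesis. Because the residual propagates through $P$ (the kernel of $\M$), unrolling the recursion attaches each subsequent summand under the law of the trajectory generated by $\M$ and $\pi$, matching the conditioning $s_h = s, \pi, P$ in the statement.

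The step requiring the most care is bookkeeping rather than conceptual: keeping straight that the outer trajectory expectation is governed by $P$ (MDP $\M$) whereas the value function inside the transition-difference term belongs to $\M'$. Attaching $V^\pi(\cdot;\M')$ to the transition gap and the value-difference residual to $P$ is precisely what makes the telescoping collapse into the stated asymmetric form; everything else is routine. As an alternative one could telescope $\sum_{h'=h}^H \brk*{V_{h'}^\pi(s_{h'};\M) - V_{h'}^\pi(s_{h'};\M')}$ directly along a trajectory sampled from $\M$, but the inductive route makes the asymmetry most transparent.
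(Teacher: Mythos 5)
Your proof is correct. Note that the paper itself never proves this lemma --- it is stated in the ``Useful Lemmas'' appendix purely as a citation to \citet{dann2017unifying} (Lemma E.15) --- so there is no in-paper argument to compare against; your backward induction, with the add-and-subtract step that deliberately pairs the transition gap $\brk*{P_h-P'_h}(\cdot\mid s,a)^T$ with $V_{h+1}^{\pi}(\cdot;\M')$ while letting the residual difference propagate under $P$ (so that the tower property unrolls it along trajectories of $\M$), is exactly the standard proof of the cited result and is sound, including the base case $V_{H+1}^{\pi}\equiv 0$ with an empty sum.
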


The following lemma is due to \citet{efroni2020reinforcement}, with the only exception that the original lemma assumes a stationary MDP and therefore, $S$ translates to $SH$ in the following.
\begin{lemma}[Expected Cumulative Visitation Bound, Lemma 22, \citet{efroni2020reinforcement}] \label{lemma: expected cumulative visitation bound}
    Let $\brk[c]*{\F_{k}}_{k=1}^K$ be the natural filtration. Then, with probability greater than $1-\delta$ it holds that
    \begin{align*}
        \sum_{k=1}^K \E{\sum_{h=1}^H \frac{1}{\sqrt{n_h^k(s^k_h,a^k_h)\vee 1}} \mid \F_{k-1}} &\leq 16H^2\log\brk*{\frac{1}{\delta}}+4SAH^2 +2\sqrt{2}\sqrt{SAH^2 K\log HK}\\
        &=\Ob\brk*{H\brk*{HSA + H\log\brk*{\frac{1}{\delta}}} + \sqrt{SAH^2K\log HK}} \\
        &= \tilde \Ob \brk*{\sqrt{H^2SAK}} 
    \end{align*}
\end{lemma}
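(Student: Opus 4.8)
The plan is to first make the conditional expectation explicit and then separate a deterministic, pigeonhole-controllable part from a martingale part. Write $w_h^k(s,a)=\Pr[(s_h^k,a_h^k)=(s,a)\mid \F_{k-1}]$ for the step-$h$ state-action occupancy of the policy $\pi^k$ played in episode $k$. Since $n_h^k(s,a)$ is measurable with respect to $\F_{k-1}$ (it counts visits strictly before episode $k$), the inner conditional expectation equals $\sum_{h,s,a} w_h^k(s,a)/\sqrt{n_h^k(s,a)\vee 1}$, so the quantity to bound is $\sum_{k=1}^K\sum_{h,s,a} w_h^k(s,a)/\sqrt{n_h^k(s,a)\vee 1}$.

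Next I would split each summand as
\[
\frac{w_h^k(s,a)}{\sqrt{n_h^k(s,a)\vee 1}} = \frac{\indicator{(s_h^k,a_h^k)=(s,a)}}{\sqrt{n_h^k(s,a)\vee 1}} + \frac{w_h^k(s,a)-\indicator{(s_h^k,a_h^k)=(s,a)}}{\sqrt{n_h^k(s,a)\vee 1}}.
\]
Because $n_h^k$ is $\F_{k-1}$-measurable and $\E{\indicator{(s_h^k,a_h^k)=(s,a)}\mid \F_{k-1}}=w_h^k(s,a)$, the increments of the second term form a martingale-difference sequence that is bounded (each step contributes at most $H$ across the $h$-sum). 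A Freedman-type tail bound then controls this term by $\mathcal{O}(H^2\log(1/\delta))$ plus lower-order fluctuations, which is the origin of the additive $16H^2\log(1/\delta)$ term in the statement.

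For the first (realized) term I would use the standard pigeonhole identity: for each fixed $(h,s,a)$, as the pair is revisited the denominators $n_h^k(s,a)$ run through $0,1,2,\dots$, so $\sum_{k=1}^K \indicator{(s_h^k,a_h^k)=(s,a)}/\sqrt{n_h^k(s,a)\vee 1}\le 1+2\sqrt{N_h^K(s,a)}$, where $N_h^K(s,a)$ is the total number of step-$h$ visits to $(s,a)$. Summing over the $SAH$ triples and applying Cauchy--Schwarz with the budget constraint $\sum_{h,s,a}N_h^K(s,a)=KH$ gives $SAH+2\sqrt{SAH}\sqrt{KH}=SAH+2\sqrt{SAH^2K}$, which reproduces the leading $\sqrt{SAH^2K}$ behaviour. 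The $\log HK$ factor and the slightly larger $4SAH^2$ constant in the displayed form arise if one instead routes through the expected cumulative counts $\bar n_h^k=\sum_{k'<k}w_h^{k'}$ and a uniform multiplicative-Chernoff comparison $\bar n_h^k \le C\,n_h^k + C\log(HK/\delta)$; either route yields the claim (the pigeonhole route actually gives a slightly stronger bound that implies the stated one).

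The main obstacle is this concentration step: obtaining the empirical-to-expected comparison (equivalently, the martingale tail bound on the correction term) uniformly over all $(h,s,a)$ and all $K$ episodes with the correct logarithmic dependence, while handling the $\vee 1$ truncation so that the small-visitation regime does not inflate the bound. Once this uniform control is in hand, the pigeonhole counting and Cauchy--Schwarz reduce to routine bookkeeping. Since the statement is quoted verbatim as Lemma~22 of \citet{efroni2020reinforcement}, I would invoke their martingale argument for the uniform concentration and reproduce only the deterministic counting to recover the displayed constants.
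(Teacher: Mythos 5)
Note first that the paper itself contains no proof of this lemma: it is imported verbatim as Lemma 22 of \citet{efroni2020reinforcement}, and the only in-paper content is the remark that passing from a stationary to a non-stationary MDP replaces $S$ with $SH$. The comparison is therefore necessarily with the cited source, and your sketch follows essentially the same (standard) route used there: rewrite the conditional expectation via occupancy measures, split into the realized sum plus a martingale correction, bound the realized sum per triple $(h,s,a)$ by the pigeonhole bound $1+2\sqrt{N_h^K(s,a)}$ followed by Cauchy--Schwarz over the $SAH$ triples against the budget $KH$, and control the correction with Freedman-type concentration. One step is stated too loosely, however: a plain Azuma or Freedman bound on the correction term (increments bounded by $H$, worst-case variance) gives a deviation of order $H\sqrt{K\log(1/\delta)}$, not $\mathcal{O}(H^2\log(1/\delta))$. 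To obtain a purely additive logarithmic term as in the displayed bound one needs the self-bounding variance refinement: each episode's conditional variance is at most $H$ times its conditional mean, so Freedman plus AM--GM yields that the sum of conditional expectations is at most a constant multiple of the realized sum plus $\mathcal{O}(H^2\log(1/\delta))$, after which your pigeonhole counting finishes the argument. Since you explicitly defer this concentration step to the cited martingale argument, this is an imprecision rather than a genuine gap, and your route does recover a bound implying the stated one (indeed slightly stronger, since it avoids the $\log HK$ factor that the original proof incurs by comparing realized to expected counts).
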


\end{document}